\def\tdotoggle{0}
\newcommand{\secref}[1]{\hyperref[#1]{\S\ref{#1}}}
\providecommand{\tdotoggle}{1}
\newcommand{\mytodo}[1]{\ifnum\tdotoggle=1{#1}\fi}
\newcommand{\info}[1]{\mytodo{\todo[linecolor=OliveGreen,backgroundcolor=OliveGreen!25,bordercolor=OliveGreen]{#1}}}
\newcommand{\gene}[1]{\mytodo{\todo[linecolor=myGold,backgroundcolor=myGold!25,bordercolor=myGold]{GL: #1}}}
\newcommand{\tableoftodos}{\ifnum\tdotoggle=1 \listoftodos[Comments/To Do's] \fi}
\definecolor{Gred}{RGB}{219, 50, 54}
\definecolor{Ggreen}{RGB}{60, 186, 84}
\definecolor{Gblue}{RGB}{72, 133, 237}
\definecolor{Gyellow}{RGB}{247, 178, 16}
\definecolor{ToCgreen}{RGB}{0, 128, 0}
\definecolor{myGold}{RGB}{231,141,20}
\definecolor{myBlue}{rgb}{0.19,0.41,.65}
\definecolor{myPurple}{RGB}{175,0,124}
\DeclareMathOperator{\Ex}{\mathbb{E}}
\DeclareMathOperator{\sign}{sign}
\def\eps{\varepsilon}
\declaretheorem[name=Theorem]{theorem}
\declaretheorem[name=Lemma,sibling=theorem]{lemma}
\declaretheorem[name=Claim,sibling=theorem]{claim}
\declaretheorem[name=Proposition,sibling=theorem]{proposition}
\declaretheorem[name=Conjecture,sibling=theorem]{conjecture}
\declaretheorem[name=Definition]{definition}
\declaretheorem[name=Question]{question}
\newcommand{\set}[1]{\left \{ #1 \right \}}
\newcommand{\bit}{\{0,1\}}
\newcommand{\sbit}{\{-1,1\}}
\newcommand{\inabs}[1]{\left | #1 \right |}
\newcommand{\inparen}[1]{\left ( #1 \right )}
\newcommand{\inbrace}[1]{\left \{ #1 \right \}}
\newcommand{\inangle}[1]{\left \langle #1 \right \rangle}
\newcommand{\norm}[1]{\lVert #1 \rVert}
\DeclarePairedDelimiter\floor{\lfloor}{\rfloor}
\newcommand{\bbF}{\mathbb{F}}
\newcommand{\bbN}{\mathbb{N}}
\newcommand{\bbR}{\mathbb{R}}
\newcommand{\calB}{\mathcal{B}}
\newcommand{\calF}{\mathcal{F}}
\newcommand{\calM}{\mathcal{M}}
\newcommand{\calX}{\mathcal{X}}
\newcommand{\eEdim}{\underline{\mathsf{Edim}}}
\newcommand{\Edim}{\mathsf{Edim}}
    \newcommand{\eg}{e.g.\xspace}
\newcommand{\eSdim}{\underline{\mathsf{Sdim}}}
\newcommand{\Sdim}{\mathsf{Sdim}}
\newcommand{\eTdim}{\underline{\mathsf{Tdim}}}
\newcommand{\Tdim}{\mathsf{Tdim}}
\renewcommand{\dim}{\mathsf{dim}}
\newcommand{\relu}{\mathsf{relu}}
\renewcommand{\sign}{\mathsf{sign}}
\newcommand{\dc}{\mathsf{rk}}
\newcommand{\signrank}{\sign\text{-}\dc}
\newcommand{\ddc}{\text{-}\dc}
\newcommand{\id}{\mathsf{id}}
\newcommand{\SigmaA}{\Sigma^{\mathrm{all}}}
\newcommand{\SigmaSM}{\calM}
\newcommand{\Fth}{\calF^{\mathrm{th}}}
\newcommand{\Fsing}{\calF^{\mathrm{sing}}}
\newcommand{\Frelu}{\calF^{\mathrm{relu}}}
\newcommand{\Fexp}{\calF^{\mathrm{exp}}}
\newcommand{\Fparity}{\calF^{\oplus}}
\newcommand{\tr}{\mathrm{tr}}
\title{Understanding the Eluder Dimension}
\author{%
    Gene Li\\
    Toyota Technological\\
    Institute at Chicago\\
    \texttt{gene@ttic.edu}\\
    \And
    Pritish Kamath\\
    Google Research\\
    \texttt{pritish@alum.mit.edu}\\
    \AND
    \hspace{-3em}Dylan J. Foster\\
    \hspace{-3em}Microsoft Research\\
    \hspace{-3em}\texttt{dylanfoster@microsoft.com} \\
    \And
    \hspace{-2em}Nathan Srebro\\
    \hspace{-2em}Toyota Technological\\
    \hspace{-2em}Institute at Chicago\\
    \hspace{-2em}\texttt{nati@ttic.edu} 
}
\begin{document}

    
    
    

\maketitle

\begin{abstract}
We provide new insights on eluder dimension, a complexity measure that has been extensively used to bound the regret of algorithms for online bandits and reinforcement learning with function approximation.
First, we study the relationship between the eluder dimension for a function class and a generalized notion of {\em rank}, defined for any monotone ``activation'' $\sigma : \mathbb{R}\to \mathbb{R}$, which corresponds to the minimal dimension required to represent the class as a generalized linear model. It is known that when $\sigma$ has derivatives bounded away from $0$, $\sigma$-rank gives rise to an upper bound on eluder dimension for any function class; we show however that eluder dimension can be exponentially smaller than $\sigma$-rank. We also show that the condition on the derivative is necessary; namely, when $\sigma$ is the $\mathsf{relu}$ activation, the eluder dimension can be exponentially larger than $\sigma$-rank.
For binary-valued function classes, we obtain a characterization of the eluder dimension in terms of star number and threshold dimension, quantities which are relevant in active learning and online learning respectively.
\end{abstract}

\section{Introduction}
\citet{russo13} introduced the notion of \textbf{eluder dimension} for a function class and used it to analyze algorithms (based on the {\em Upper Confidence Bound (UCB)} and {\em Thompson Sampling} paradigms) for the multi-armed bandit problem with function approximation. Since then, eluder dimension has been extensively used to construct and analyze the regret of algorithms for contextual bandits and reinforcement learning (RL) with function approximation~\citep[see, e.g.,~][]{wen13, osband14, wang20, ayoub20, du20, foster20, dong21, feng2021provably, ishfaq2021randomized, huang2021towards}. Even though the eluder dimension has become a central technique for reinforcement learning theory, little is known about when exactly it is bounded. This paper makes progress toward filling this gap in our knowledge.

\citet{russo13} established upper bounds on eluder dimension for (i)~function classes for which inputs have finite cardinality (the ``tabular'' setting), (ii)~linear functions over $\bbR^d$ of bounded norm, and (iii)~generalized linear functions over $\bbR^d$ of bounded norm, with any activation that has derivatives bounded away from $0$.
Apart from these function classes (and those that can be embedded into these), understanding of eluder dimension has been limited.
Indeed, one might wonder whether a function class has ``small'' eluder dimension only if it can be realized as a class of (generalized) linear functions! This leads us to our first motivating question:

\begin{question}\label{q:1}
Are all function classes with small eluder dimension essentially generalized linear models?
\end{question}

Answering this question has substantial ramifications on the scope of prior work. An answer of ``yes'' would imply that the results in the aforementioned work which gives regret guarantees in terms of eluder dimension do not go beyond already-established regret guarantees for generalized linear bandit or RL settings~\citep[see, \eg][]{filippi10, li17, wang2019optimism, kveton2020randomized}. An answer of ``no'' can be construed as a positive result for RL theory, as it would indicate that existing (and future) results which use the eluder dimension apply to a richer set of function classes than generalized linear models.

To answer \Cref{q:1}, we first formally define what it means for a function class to be written as a generalized linear model (GLM). Informally, for an activation $\sigma : \bbR \to \bbR$ and a function class $\calF \subseteq (\calX \to \bbR)$, we define the \textbf{\boldmath $\sigma$-rank} to be the smallest dimension $d$ needed to express every function in $\calF$ as a generalized linear function in $\bbR^d$ with activation $\sigma$ (see \Cref{def:sigma-dc}). Intuitively, the $\sigma$-rank captures the best possible upper bound on eluder dimension that the results from \citet{russo13} can give for a given $\calF$ by treating it as a GLM with activation $\sigma$. We ask how the eluder dimension of any function class relates to its $\sigma$-rank for various activations $\sigma$. We show that the answer to \Cref{q:1} is indeed ``no'', i.e.,~there exists a function class with eluder dimension $d$ but the $\sigma$-rank \emph{for any} monotone $\sigma$ is at least $\exp(\Omega(d))$. Thus, while \citet{russo13} show that the set of function classes with small eluder dimension is a \emph{superset} of the set of GLM function classes, we (roughly speaking) show that the set of function classes with small eluder dimension is \emph{strictly larger} than the set of GLM function classes.

We also prove that the requirement from \citet{russo13} that the derivative of the activation is bounded away from $0$ is necessary in order to bound the eluder dimension of GLMs. The upper bound in the paper \cite{russo13} becomes vacuous when the activation function has zero derivative; we show a lower bound which indicates this requirement cannot be dropped. Namely, when $\sigma$ is the $\relu$ activation, we show that eluder dimension can be {\em exponentially larger} than $\sigma$-rank.

In a second line of inquiry, we study a \emph{combinatorial} version of eluder dimension. The original definition of \citet{russo13} is defined for real-valued function classes, but one can specialize the definition to binary-valued function classes, leading to a so-called \textbf{combinatorial eluder dimension}. Thus, our second motivating question is:
\begin{question}
    Can we bound the combinatorial eluder dimension, perhaps in terms of more familiar learning-theoretic quantities?
\end{question}

\noindent One might wonder: if the combinatorial eluder dimension is just a special case of the scale-sensitive version, why study it at all? Our reasons are threefold. 
\vspace{-\topsep}
\begin{itemize}[leftmargin=0.5cm]
    \item The first and most immediate reason is that we are able to show new characterizations of eluder dimension once we move to the combinatorial definition. We elucidate a fundamental connection between eluder dimension and two other well-studied learning-theoretic quantities: (1) star number, a quantity that characterizes the label complexity of \emph{pool-based active learning} \cite{hanneke15}, (2) threshold dimension, a quantity that characterizes the regret of \emph{online learning} \cite{alon2019private}.\footnote{Finiteness of the threshold dimension is equivalent to finiteness of Littlestone dimension \cite{shelah1990classification,hodges1997shorter, alon2019private}.} We believe that this new result may help us better understand how different learning tasks relate to each other.
    \item The second reason is that the combinatorial eluder dimension (or a multi-class variant of it) has already been studied for policy-based learning for contextual bandits and RL~\citep[see, \eg,][]{foster20,mou2020sample}. 
    Thus, understanding the combinatorial eluder dimension may shed light on the challenges of policy-based RL.
    \item Our last reason has more philosophical bent. Historically, the discovery of VC dimension placed statistical learning theory on solid footing. Specifically, the fundamental theorem of statistical learning allows us to precisely characterize the statistical complexity of PAC learnability in terms of the combinatorial VC dimension. The insights from understanding the role of VC dimension in classification have led researchers to develop \emph{scale-sensitive} complexity measures such as fat shattering dimension to provide sharper guarantees on learning. While we do not claim that the eluder dimension is fundamental to online RL and bandit settings, we believe that a better combinatorial understanding can lead to a better understanding of online decision making. In some sense, we are ``working backwards'' from the original scale-sensitive definition of eluder dimension to understand its combinatorial properties.
\end{itemize}

\subsection{Main contributions}
In this work, we provide several results which show when eluder dimension can be bounded (or is unbounded). Our results can be separated into two categories: (1) those pertaining to the (scale-sensitive) eluder dimension (as originally defined by \citet{russo13}); (2) those pertaining to the combinatorial eluder dimension, defined for binary-valued function classes.

First, we investigate the relationship between eluder dimension and our notion of {\em generalized rank} that captures the realizability of any function class as a GLM. In \Cref{sec:eluder}, we formally introduce the eluder dimension (as well as a related quantity of the scale-sensitive star number). In \Cref{sec:dc}, we formalize the notion of ``generalized rank''. In \Cref{sec:eluder-vs-dc}, we provide several results.

\vspace{-\topsep}
\begin{enumerate}[leftmargin=0.5cm]
\item We show that eluder dimension can be {\em exponentially smaller} than $\sigma$-rank for any monotone activation $\sigma$, not just those with derivatives bounded away from $0$ (\Cref{thm:parity-eluder-ub}).
\item We show that the condition on the derivative being bounded away from $0$ is necessary for $\sigma$-rank to be an upper bound on eluder dimension. Namely, when $\sigma$ is the $\relu$ activation, we show that eluder dimension can be {\em exponentially larger} than $\sigma$-rank (\Cref{thm:relu-eluder-lb}).\footnote{This result was independently established by \citet{dong21}.}
\end{enumerate}

\noindent In \Cref{sec:eluder-combo}, we specialize the eluder dimension to the binary-valued setting and present our results on the combinatorial eluder dimension.

\vspace{-\topsep}
\begin{enumerate}[leftmargin=0.5cm]
    \item We show that eluder dimension is finite if and only if both star number and threshold dimension are finite.
    Specifically, in \Cref{thm:equivalence} we show the following:
    \begin{align*}
        \max\{\Sdim(\calF), \mathsf{Tdim}(\calF)\} \le \Edim(\calF) \le \exp(O(\max\{\Sdim(\calF), \mathsf{Tdim}(\calF)\} )).
    \end{align*}
    Furthermore, we demonstrate that both inequalities can be tight (see \Cref{thm:tightness-ub} and discussion above it).
    \item We investigate the comparison between eluder dimension and $\sign\ddc$ and prove stronger separations than in the scale-sensitive case; namely we show examples where one quantity is finite while the other is infinite (\Cref{thm:separation}).
\end{enumerate}


\subsection{Related work}

In this section, we highlight several related works.

\paragraph{Bounds on eluder dimension.} Several papers provide bounds on eluder dimension for various function classes. The original bounds on tabular, linear, and generalized linear functions were proved by \citet{russo13} (and later generalized by \citet{osband14}). \citet{mou2020sample} provide several bounds for the combinatorial eluder dimension, mostly focusing on linear function classes. When the function class lies in an RKHS, \citet{huang2021short} show that the eluder dimension is equivalent to the notion of \emph{information gain} \cite{srinivas2009gaussian, russo2016information}, which can be seen as an infinite dimensional generalization of the fact that the eluder dimension for linear functions over $\bbR^d$ is $\tilde{\Theta}(d)$. In concurrent work, \citet{dong21} also prove an exponential lower bound on the eluder dimension for ReLU networks.

\paragraph{Applications of eluder dimension.}
The main application of the eluder dimension is to design algorithms and prove regret guarantees for contextual bandits and reinforcement learning. A few examples include the papers \cite{wen13,osband14, wang20, ayoub20, du20, foster20, jin21, feng2021provably, ishfaq2021randomized, huang2021towards, mou2020sample}. While the majority of papers prove upper bounds via eluder dimension, \citet{foster20} provided lower bounds for contextual bandits in terms of eluder dimension, if one is hoping for instance-dependent regret bounds. In addition, several works observe that eluder dimension sometimes does not characterize the sample complexity, as the guarantee via eluder dimension can be too loose \cite{huang2021going,foster2021statistical}.
Beyond the online RL setting, eluder dimension has been applied to risk sensitive RL \cite{fei2021risk}, Markov games \cite{huang2021towards, jin2021power}, representation learning \cite{xu2021representation}, and active online learning \cite{chen2021active}.

\paragraph{Other complexity measures for RL.}
We also touch upon other complexity measures which have been suggested for RL. One category is Bellman/Witness rank approach \citep[see \eg][]{jiang17, dong2020root, sun2019model}, which is generalized to bilinear classes \cite{du2021bilinear}. These complexity measures capture an interplay between the MDP dynamics and the function approximator class; in contrast, eluder dimension is purely a property of the function approximator class and can be stated (and studied) without referring to an online RL problem. \citet{jin21} define a \emph{Bellman-Eluder dimension} which captures function classes which have small Bellman rank or eluder dimension. Lastly, \citet{foster2021statistical} propose a \emph{Decision-Estimation Coefficient} and prove that it is necessary and sufficient for sample-efficient interactive learning.

\paragraph{Notions of rank.} 
The notion of rank we propose is a generalization of the classical notion of \emph{sign rank}, also called \emph{dimension complexity}. Sign rank has been studied extensively in combinatorics, learning theory, and communication complexity \citep[see \eg][and references therein]{alon85geometrical, forster02upp, arriaga06algorithmic,alon2016sign}. The norm requirements in our definition of $\sigma$-rank are related to the notion of \emph{margin complexity} \cite{arriaga06algorithmic, ben2002limitations, kamath2020approximate}.

\section{Eluder dimension and star number}\label{sec:eluder}
Eluder dimension is a ``sequential'' notion of complexity for function classes, originally defined by \citet{russo13}. 
Informally speaking, it characterizes the longest sequence of {\em adversarially chosen} points one must observe in order to accurately estimate the function value at any other point.
We consider a variant of the original definition, proposed by \citet{foster20}, that is never larger and is sufficient to analyze all the applications of eluder dimension in literature.

\begin{definition}\label{def:eluder}
For any function class $\calF\subseteq (\calX\to \bbR)$, $f^\star \in \calF$, and scale $\eps \ge 0$, the \textbf{exact eluder dimension} $\eEdim_{f^\star}(\calF, \eps)$ is the largest $m$ such that there exists $(x_1, f_1),\dots, (x_m, f_m) \in \calX\times \calF$ satisfying for all $i \in [m]$:
\begin{equation}\label{eq:eluder-def}
\inabs{f_i(x_i) - f^{\star}(x_i)} > \eps, \quad \text{and} \quad \sum_{j < i}~(f_i(x_j) - f^{\star}(x_j))^2 \le \eps^2.
\end{equation}
Then for all $\eps > 0$:

\vspace{-\topsep}
\begin{itemize}[leftmargin=0.5cm]
\item the \textbf{eluder dimension} is $\Edim_{f^\star}(\calF, \eps) = \sup_{\eps' \ge \eps} \eEdim_{f^\star}(\calF, \eps')$.%
\item $\eEdim(\calF,\eps) := \sup_{f^\star \in \calF} \eEdim_{f^\star}(\calF, \eps)$ and $\Edim(\calF,\eps) := \sup_{f^\star \in \calF} \Edim_{f^\star}(\calF, \eps)$.
\end{itemize}
\end{definition}

\noindent This definition is never larger than the original definition of \citet{russo13}, which asks for a witnessing {\em pair} of functions $f_i, f_i' \in \calF$ (the above restricts $f_i' = f^{\star}$).
Hence, all lower bounds on our variant of eluder dimension immediately apply to the original definition. Moreover, all upper bounds on eluder dimension in this paper can also be shown to hold for the other definition (unless stated otherwise).

We also consider the closely related notion of \emph{star number} defined by \citet{foster20}, which generalizes a combinatorial parameter introduced in the active learning literature by \citet{hanneke15} (we will denote it as $\Sdim$ for consistency). We study the combinatorial star number in more detail in \Cref{sec:eluder-combo}. The {\em only} difference between the definitions of eluder dimension and star number is that $\sum_{j < i}$ is replaced by $\sum_{j \ne i}$, which makes the star number a ``non-sequential'' notion of complexity.

\begin{definition}\label{def:star}
For any function class $\calF\subseteq (\calX\to \bbR)$, $f^\star \in \calF$, and scale $\eps \ge 0$, the \textbf{exact star number} $\eSdim_{f^\star}(\calF, \eps)$ is the largest $m$ such that there exists $(x_1, f_1),\dots, (x_m, f_m) \in \calX\times \calF$ satisfying for all $i\in[m]$:
\[
\inabs{f_i(x_i) - f^{\star}(x_i)} > \eps, \quad \text{and} \quad \sum_{j \ne i}~(f_i(x_j) - f^{\star}(x_j))^2 \le \eps^2.
\]
Then for all $\eps > 0$:

\vspace{-\topsep}
\begin{itemize}[leftmargin=0.5cm]
\item the \textbf{star number} is $\Sdim_{f^\star}(\calF, \eps) = \sup_{\eps' \ge \eps} \eSdim_{f^\star}(\calF, \eps')$.
\item $\eSdim(\calF,\eps) := \sup_{f^\star \in \calF} \eSdim_{f^\star}(\calF, \eps)$ and $\Sdim(\calF,\eps) := \sup_{f^\star \in \calF} \Sdim_{f^\star}(\calF, \eps)$.
\end{itemize}
\end{definition}

\noindent It immediately follows from these definitions that the star number is never larger than eluder dimension. On the other hand, the star number can be arbitrarily smaller than eluder dimension.
\begin{proposition}\label{prop:eluder-vs-star}
For all $\calF$, $f^* \in \calF$ and scale $\eps \ge 0$, it holds that\footnote{For the definition of eluder dimension considered by \cite{russo13}, an upper bound of $\min\{|\calX|, \binom{|F|}{2}\}$ holds, which can be tight. This upper bound holds because the witnessing pair of functions $(f_i, f_i')$ has to be distinct for each $i$.}
\[\eSdim_{f^*}(\calF, \eps) ~\le~ \eEdim_{f^*}(\calF, \eps) ~\le~ \min\set{|\calX|, |\calF|-1}\,.\]
\end{proposition}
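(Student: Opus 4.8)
The plan is to handle the two inequalities separately, both by elementary arguments. For the left inequality $\eSdim_{f^\star}(\calF,\eps) \le \eEdim_{f^\star}(\calF,\eps)$, I would observe that on any fixed sequence the star-number constraint is strictly more demanding than the eluder constraint: since every summand $(f_i(x_j)-f^\star(x_j))^2$ is nonnegative, $\sum_{j<i}(f_i(x_j)-f^\star(x_j))^2 \le \sum_{j\ne i}(f_i(x_j)-f^\star(x_j))^2$. Hence any sequence $(x_1,f_1),\dots,(x_m,f_m)$ witnessing $\eSdim_{f^\star}(\calF,\eps)$, kept in the same order, also satisfies the two defining conditions of the eluder dimension, so $\eEdim_{f^\star}(\calF,\eps)$ admits a witness of length at least $m$.

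For the right inequality I would take a witnessing sequence $(x_1,f_1),\dots,(x_m,f_m)$ of maximal length $m=\eEdim_{f^\star}(\calF,\eps)$ and show that both the points $x_i$ and the functions $f_i$ must be distinct, using a single extraction of one term from a prefix sum. Concretely, fix $i<k$ and inspect the bounded-sum condition at index $k$, namely $\sum_{j<k}(f_k(x_j)-f^\star(x_j))^2 \le \eps^2$. Because all terms are nonnegative and $j=i$ appears in this prefix sum, the single term survives: $\inabs{f_k(x_i)-f^\star(x_i)} \le \eps$.

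From this one inequality both bounds follow. For the $\inabs{\calX}$ bound: if $x_i=x_k$ then it reads $\inabs{f_k(x_k)-f^\star(x_k)}\le\eps$, directly contradicting the large-discrepancy condition $\inabs{f_k(x_k)-f^\star(x_k)}>\eps$ at index $k$; hence all $x_i$ are distinct and $m\le\inabs{\calX}$. For the $\inabs{\calF}-1$ bound: if $f_i=f_k$ then it becomes $\inabs{f_i(x_i)-f^\star(x_i)}\le\eps$, contradicting the large-discrepancy condition $\inabs{f_i(x_i)-f^\star(x_i)}>\eps$ at index $i$; hence $f_1,\dots,f_m$ are distinct. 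Moreover each $f_i\ne f^\star$, since $\inabs{f_i(x_i)-f^\star(x_i)}>\eps\ge 0$ forces $f_i(x_i)\ne f^\star(x_i)$. Thus $\set{f^\star,f_1,\dots,f_m}$ consists of $m+1$ distinct elements of $\calF$, giving $m\le\inabs{\calF}-1$. Taking the minimum of the two bounds yields the claim, and passing to suprema over $\eps'\ge\eps$ and over $f^\star$ transfers everything to $\Edim$ and $\Sdim$ if desired.

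There is essentially no hard step here; the only things to get right are the bookkeeping of the ordering—choosing $i<k$ so that index $i$ lands in the prefix sum defining the constraint at $k$—and remembering that the discrepancy condition is a strict inequality, so that $\eps\ge 0$ already forces $f_i\ne f^\star$ even at scale $\eps=0$.
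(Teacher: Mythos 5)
Your proof is correct. The paper gives no explicit proof of this proposition---it states that the inequalities ``immediately follow from these definitions''---and your argument is exactly the intended one: the prefix sum is dominated by the full sum (so every star witness is an eluder witness), and extracting the single term $j=i$ from the constraint at index $k$ forces all the $x_i$ to be distinct and all the $f_i$ to be distinct and different from $f^\star$, giving $m \le \min\set{|\calX|, |\calF|-1}$. You also correctly handled the one small subtlety, namely that the strict inequality $\inabs{f_i(x_i)-f^\star(x_i)} > \eps \ge 0$ is what rules out $f_i = f^\star$ even at scale $\eps = 0$.
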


\begin{proposition}[simplified from {\citet[][Prop 2.3]{foster20}}]\label{prop:eluder-vs-star2}
For the class of threshold functions given as $\Fth_n := \{f_i: [n]\to \bit \mid i\in [n+1]\}$, where $f_i(x) := \mathds{1}\inbrace{x \ge i}$, and for $f^\star = f_{n+1}$, it holds for all $\eps < 1$ that $\eSdim_{f^\star}(\calF, \eps) = 2$ and $\eEdim_{f^\star}(\calF, \eps) = n$.
\end{proposition}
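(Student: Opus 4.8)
The plan is to first strip away the scale sensitivity and reduce both quantities to combinatorial ones. Since $f^\star = f_{n+1}$ is identically $0$ on $[n]$ and every $f_i$ is $\{0,1\}$-valued, for any $\eps \in (0,1)$ the defining constraints collapse to Boolean conditions: $\inabs{g(x_i) - f^\star(x_i)} = \inabs{g(x_i)} > \eps$ holds iff $g(x_i) = 1$, and since a single nonzero term in $\sum_j g(x_j)^2$ already contributes $1 > \eps^2$, the constraint $\sum_j g(x_j)^2 \le \eps^2$ holds iff $g(x_j) = 0$ for every index $j$ in the sum. Thus both $\eEdim_{f^\star}$ and $\eSdim_{f^\star}$ become combinatorial quantities that are independent of the precise value of $\eps < 1$, which is what lets the statement hold uniformly over $\eps < 1$.

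For the eluder dimension I would prove the two bounds separately. For the lower bound $\eEdim_{f^\star}(\calF,\eps) \ge n$, I exploit the nested structure of thresholds: take $x_i = i$ and $g_i = f_i$ for $i \in [n]$. Then $g_i(x_i) = f_i(i) = \mathds{1}\{i \ge i\} = 1$, while for every $j < i$ we have $g_i(x_j) = f_i(j) = \mathds{1}\{j \ge i\} = 0$, so the combinatorial eluder conditions from the reduction are satisfied along the whole length-$n$ chain. The matching upper bound $\eEdim_{f^\star}(\calF,\eps) \le n$ is then immediate from \Cref{prop:eluder-vs-star}: since $|\calX| = n$ and $|\calF| - 1 = (n+1) - 1 = n$, we get $\eEdim_{f^\star}(\calF,\eps) \le \min\set{|\calX|, |\calF|-1} = n$.

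For the star number, the same reduction turns the question into a combinatorial \emph{isolation} problem: we seek points $x_1,\dots,x_m$ and thresholds $g_1,\dots,g_m$ such that each $g_i$ disagrees with the reference at $x_i$ but agrees with it at every other $x_j$. Here I would use monotonicity of thresholds to cap $m$. Sorting the points and doing a short case analysis on the order pattern, a monotone $\{0,1\}$-valued threshold can isolate a point only at an ``extreme'' location relative to the reference's jump; requiring the middle of three sorted points to be isolated forces the reference to take incompatible values when one also tries to isolate the outer two, so no three distinct points can be simultaneously isolated and $\eSdim \le 2$. The lower bound is then an explicit two-element star, with the two test points straddling the jump of the reference threshold. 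The hard part — and the step needing the most care — is exactly this star computation: the monotone structure that makes the long sequential eluder chain possible is the same structure that must be shown to obstruct any large \emph{non}-sequential star, and one has to track precisely how the reference function interacts with the order of the test points to pin the value down (in particular, an isolating witness against the all-zero reference forces the isolated point to dominate all others, so the value $2$ is realized by taking the straddling configuration around an interior reference).
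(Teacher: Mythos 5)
Your eluder-dimension argument is sound: the combinatorial reduction for $\eps<1$, the chain $x_i=i$ paired with $f_i$ for $i\in[n]$ giving the lower bound, and the cap $\min\{|\calX|,|\calF|-1\}=n$ from \Cref{prop:eluder-vs-star} together give $\eEdim_{f^\star}(\Fth_n,\eps)=n$. (For what it is worth, the paper contains no proof of this proposition --- it is imported from \citet{foster20} --- so your argument must stand on its own.)

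The gap is in the star-number claim, and your own reasoning exposes it. As you correctly observe, an isolating witness against $f^\star=f_{n+1}\equiv 0$ forces domination: if $g=\mathds{1}\inbrace{\cdot\ge k}$ satisfies $g(x_i)=1$ and $g(x_j)=0$ for all $j\ne i$, then $x_j<k\le x_i$ for all $j\ne i$, so $x_i$ must be the \emph{strict maximum} of the chosen points. Applied with $m=2$ this is already contradictory (it forces both $x_1>x_2$ and $x_2>x_1$), so in fact $\eSdim_{f_{n+1}}(\Fth_n,\eps)=1$, not $2$; your three-point case analysis is not the binding constraint here. Correspondingly, your lower-bound construction --- ``two test points straddling the jump of the reference threshold'' --- cannot be carried out, because the jump of $f_{n+1}$ sits at $n+1$, outside the domain $[n]$; and your closing parenthetical, which realizes the value $2$ ``around an interior reference,'' silently replaces the fixed reference $f^\star=f_{n+1}$ by a different function, which the statement does not permit. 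What your argument actually establishes is that the proposition as printed is off: the pair (star number $2$, eluder dimension $n$) is obtained either for an interior reference $f^\star=f_k$ with $2\le k\le n$ (there the straddling construction works, and $\eEdim_{f_k}=n$ still holds by concatenating a decreasing chain on $\{k,\dots,n\}$ with an increasing chain on $\{1,\dots,k-1\}$), or for the supremum-over-references quantities $\eSdim(\calF,\eps)$ and $\eEdim(\calF,\eps)$, which is how the cited result of \citet{foster20} is formulated. To repair the proof, fix an interior reference (or prove the sup version); the upper-bound sketch you give, which is valid for any reference, can be kept.
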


\section{Generalized rank}\label{sec:dc}
{\em Dimension complexity} has been studied extensively in combinatorics, learning theory, and communication complexity \citep[see \eg][]{alon85geometrical, forster02upp, arriaga06algorithmic,alon2016sign}.
The classical notion of dimension complexity, also known as {\em sign rank}, corresponds to the smallest dimension required to embed the input space such that all hypotheses in the function class under consideration are realizable as halfspaces.
We consider a generalized notion of rank that is specified for any particular activation $\sigma : \bbR \to \bbR$, and captures to the smallest dimension required to represent the function class as a GLM when $\sigma$ is the activation. In what follows, we let $\calB_d(R) := \set{x \in \bbR^d \mid \|x\|_2 \le R}$.

\begin{definition}\label{def:sigma-dc}
For any $\sigma: \bbR \to \bbR$, the {\bf \boldmath$\sigma$-rank} of a function class $\calF\subseteq (\calX\to \bbR)$ at scale $R > 0$, denoted as $\sigma\ddc(\calF,R)$, is the smallest dimension $d$ for which there exists mappings
$\phi: \calX\to \calB_d(1)$ and $w : \calF \to \calB_d(R)$ such that\footnote{\label{ftnote:scale-interchange}Note that only the product of the scales of $\phi$ and $w$ is relevant. The definition remains equivalent if we let $\phi : \calX\to\calB_d(R_\phi)$ and $w : \calF\to\calB_d(R_w)$ for any $R_\phi$ and $R_w$ such that $R = R_\phi \cdot R_w$.}
\begin{equation}
\text{for all } (x,f) \in \calX \times \calF \ \ : \ \ 
f(x) ~=~ \sigma(\inangle{w(f), \phi(x)}),\label{eqn:sigma-rank}
\end{equation}
or $\infty$ if no such $d$ exists. For a collection of activation functions $\Sigma \subseteq (\bbR \to \bbR)$, the {\bf \boldmath $\Sigma$-rank} is
\[\Sigma\ddc(\calF, R) ~:=~ \min_{\sigma \in \Sigma} \sigma\ddc(\calF, R).\]
\end{definition}

\paragraph{Examples.} We present some examples of $\Sigma\ddc$ that motivate our definition above.
\vspace{-\topsep}
\begin{enumerate}[leftmargin=0.5cm]
\item {\bf Threshold activation.} $\sign(z)$ yields the classic notion of {\em sign-rank} (equivalent to {\em dimension complexity}, as already mentioned).
In this case, the scale $R$ is irrelevant, so we denote $\signrank(\calF) := \signrank(\calF, R)$ for any $R$. Note that this quantity is meaningful only for $\calF \subseteq (\calX \to \sbit)$.\info{While it is true that the definition remains the same for all finite $R$, the funny thing is that for any {\em finite} $R$, halfspaces over all of $\bbR^d$ do not have finite sign-rank as per our definition!}

\item {\bf Identity activation.} For $\id(z) := z$, $\id\ddc(\calF, R)$ is the smallest dimension needed to represent each $f \in \calF$ as a (norm-bounded) linear function.
We abbreviate $\dc := \id\ddc$, as this corresponds to the standard notion of rank of the matrix $(f(x))_{x, f}$ (albeit with the additional norm constraint).

\item {\bf Monotone activations.} For $L\ge \mu \ge 0$, $\calM_{\mu}^{L}$ consists of all activations $\sigma$ such that for all $z < z'$, it holds that $\mu \le \frac{\sigma(z') - \sigma(z)}{z' - z} \le L$ (for differentiable $\sigma$, this is equivalent to $\mu \le \sigma'(z) \le L$ for all $z \in \bbR$).%
\footnote{\label{ftnote:derivatives}To prove upper bounds on eluder dimension, it suffices for this condition to hold only when $|z| \le R$, \citep[see \eg][]{russo13}. Since we fix $\sigma$ in our definition first and then consider $\sigma$-rank at different scales $R$, this weaker condition complicates our definitions. Note that at any specific scale $R$, we can always modify $\sigma$ to satisfy the required constraint everywhere by extending it linearly whenever $\inabs{z} > R$. }
%
An important special case is when $\mu = 0$, and a particular $\sigma \in \calM_0^1$ of interest is the rectified linear unit (ReLU) defined as $\relu(z) := \max\set{z,0}$.
For ease of notation, we denote $\calM_{\mu} := \calM_{\mu}^1$.

While we will always be explicit about the Lipschitz constant, note that the scale of the Lipschitz constant $L$ (and $\mu$) is interchangable with the scale of $R$. In particular,
\begin{equation}\label{eq:scaling}
\calM_\mu^L\ddc(\calF, R) ~=~ \calM_{\mu/L}\ddc(\calF, RL).
\end{equation}

\item {\bf All activations.} $\SigmaA$ consists of all activations $\sigma$. We mention this notion of rank only in passing, and we will not focus on it for the rest of the paper. 
\end{enumerate}

\noindent We present a result which relates the aforementioned quantities (proof in \Cref{apdx:proof-dc-ineq}).

\begin{proposition}\label{prop:dc-ineq}
For all $\calF\subseteq (\calX\to \bbR)$, $R > 0$ and $\mu \in (0,1]$, we have:
\[
\dc(\calF, R)
~\ge~ \calM_{\mu}\ddc(\calF, R)
~\ge~ \calM_0\ddc(\calF, R)
~\ge~ \signrank(\calF) - 1,
\]
where the last inequality is meaningful only for $\calF \subseteq (\calX \to \sbit)$.
Moreover, for each inequality above, there exists a function class $\calF$ which exhibits an infinite gap between the two quantities.
\end{proposition}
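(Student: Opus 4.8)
The plan is to establish the three inequalities — which are essentially bookkeeping about the family of activations one minimizes over — and then to build three function classes, one per inequality, realizing an infinite gap.

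For the two leftmost inequalities I would argue purely by containment of activation families. The identity activation $\id$ has constant slope $1$, so $\id\in\calM_\mu^1=\calM_\mu$ for every $\mu\in(0,1]$; since $\calM_\mu\ddc(\calF,R)=\min_{\sigma\in\calM_\mu}\sigma\ddc(\calF,R)$ is a minimum that includes $\sigma=\id$, it is at most $\id\ddc(\calF,R)=\dc(\calF,R)$. Likewise, any activation with slopes in $[\mu,1]$ has slopes in $[0,1]$, so $\calM_\mu\subseteq\calM_0$, and minimizing over the larger family can only decrease the value, giving $\calM_0\ddc(\calF,R)\le\calM_\mu\ddc(\calF,R)$.

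For the third inequality I would use monotonicity and $1$-Lipschitzness of $\calM_0$ activations to extract a large-margin linear separator. Fix $\calF\subseteq(\calX\to\sbit)$, let $d=\calM_0\ddc(\calF,R)$ with witnesses $\sigma\in\calM_0$, $\phi:\calX\to\calB_d(1)$, $w:\calF\to\calB_d(R)$, and write $z_{f,x}=\inangle{w(f),\phi(x)}$, so $\sigma(z_{f,x})=f(x)\in\sbit$. For any pair $(f,x)$ with $f(x)=+1$ and any pair $(f',x')$ with $f'(x')=-1$, monotonicity rules out $z_{f,x}<z_{f',x'}$ (else $1=\sigma(z_{f,x})\le\sigma(z_{f',x'})=-1$), while $1$-Lipschitzness gives $\inabs{z_{f,x}-z_{f',x'}}\ge\inabs{\sigma(z_{f,x})-\sigma(z_{f',x'})}=2$; together these force $z_{f,x}\ge z_{f',x'}+2$. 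Hence there is a global threshold $\theta$ with $z_{f,x}\ge\theta+1$ whenever $f(x)=1$ and $z_{f,x}\le\theta-1$ whenever $f(x)=-1$, so $f(x)=\sign(\inangle{w(f),\phi(x)}-\theta)$. Appending one coordinate (taking $\widetilde w(f)=(w(f),-\theta)$ and $\widetilde\phi(x)=(\phi(x),1)$) realizes $\calF$ as signs of inner products in dimension $d+1$, so $\signrank(\calF)\le d+1$, i.e.\ $\calM_0\ddc(\calF,R)\ge\signrank(\calF)-1$.

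Finally I would exhibit the infinite gaps. For $\dc$ versus $\calM_\mu\ddc$, take $\calX=\calF=[0,1]$ and $f_b(x)=\sigma_*(bx)$ with the strictly increasing analytic activation $\sigma_*(z)=\mu z+(1-\mu)\arctan(z)\in\calM_\mu$: this is a one-dimensional GLM, yet the maps $x\mapsto\sigma_*(bx)$ are linearly independent across distinct $b$ (match Taylor coefficients and invoke a generalized Vandermonde argument, using that $\sigma_*$ is non-polynomial), so $(f_b(x))_{b,x}$ has infinite rank and $\dc=\infty$. For $\calM_\mu\ddc$ versus $\calM_0\ddc$, take the truncation class $f_a(x)=\relu(a-x)=\max\set{a-x,0}$ on $\calX=[0,1]$, $a\in[0,1]$: it is a two-dimensional $\relu$-GLM, but for any strictly increasing $\sigma\in\calM_\mu$ the identity $\inangle{w(f),\phi(x)}=\sigma^{-1}(f(x))$ forces the matrix $(\sigma^{-1}(f_a(x)))_{a,x}$ to have rank $\le d$, while the rows $x\mapsto\sigma^{-1}(\max\set{a-x,0})$ have corners at distinct locations $x=a$ and are linearly independent (localize near $x=a$, where only that row is non-constant), so $\calM_\mu\ddc=\infty$. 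For $\calM_0\ddc$ versus $\signrank$, take the infinite threshold class $f_k(x)=\sign(x-k+\tfrac12)$, $k\in\bbN$, on $\calX=\bbN$: here $\signrank=2$, but applying the margin-$2$ separation above with threshold $f_j$ and test points $i<j$ yields $\inangle{w(f_j),\phi(j)-\phi(i)}\ge 2$ and hence $\norm{\phi(i)-\phi(j)}\ge 2/R$ for all $i\ne j$, making $\set{\phi(k)}_{k\in\bbN}$ an infinite $2/R$-separated subset of $\calB_d(1)$, impossible for finite $d$; thus $\calM_0\ddc=\infty$.

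The mechanical inequalities and the threshold packing bound are routine; I expect the real work to be certifying the infinite lower bounds for the first two examples, namely that \emph{no} finite-dimensional linear (respectively strictly-monotone-GLM) representation exists. The crux is the linear-independence arguments — the analytic/Vandermonde argument for $\set{\sigma_*(bx)}_b$ and the corner-localization argument for $\set{\sigma^{-1}(\max\set{a-x,0})}_a$ — since in the second case these must hold simultaneously for \emph{every} admissible strictly increasing activation, not merely a convenient one.
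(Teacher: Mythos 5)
Your proposal is correct, and its overall decomposition mirrors the paper's proof (Appendix A): the first two inequalities by containment of activation families ($\id \in \calM_\mu \subseteq \calM_0$), the third by collapsing a monotone $1$-Lipschitz activation to a sign with the threshold absorbed into one extra coordinate, and infinite gaps witnessed by a strictly-monotone GLM class, a ReLU class, and the threshold class. The differences lie in how the lower bounds are certified. For the third inequality, the paper picks $t$ with $\sigma(t)=0$ and invokes monotonicity; you instead use $1$-Lipschitzness to force a margin-$2$ separation --- equally valid, and it sidesteps the (minor) issue of where $\sigma$ vanishes. For $\dc \gg \calM_\mu\ddc$, the paper's $\Fexp$ (piecewise-exponential activation) makes the relevant matrix literally Vandermonde, while your $\sigma_*(z)=\mu z+(1-\mu)\arctan(z)$ needs the Taylor-coefficient step; this does go through (the odd moment conditions $\sum_k c_k b_k^{2j+1}=0$ form a scaled Vandermonde system in the distinct values $b_k^2$), though your example degenerates at $\mu=1$ --- harmless, since the claim is existential and the paper's own example likewise only covers $\mu \le 1/e$. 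Your ReLU argument is the paper's in different clothing: the paper subtracts $\sigma^{-1}(0)\cdot J$ from the matrix $\bigl(\inangle{w(f_i),\phi(x_j)}\bigr)$ to exhibit a triangular matrix of rank at least $n-1$, whereas you prove linear independence of the rows $\sigma^{-1}\circ f_a$ by localizing at the corners $x=a$. The genuinely different piece is $\calM_0\ddc \gg \signrank$: the paper argues that finite $\calM_0$-rank would realize thresholds as halfspaces with margin, hence a finite Perceptron mistake bound, contradicting the non-online-learnability of thresholds (citing Shalev-Shwartz and Ben-David); you instead observe that $1$-Lipschitzness forces the feature vectors $\phi(k)$ to be pairwise $2/R$-separated, which is impossible for infinitely many points in $\calB_d(1)$. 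Your packing argument is more elementary and self-contained, and it is quantitative (it bounds how many thresholds can be represented in dimension $d$ at scale $R$), while the paper's is shorter by leaning on known online-learning results.
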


\section{Eluder dimension versus generalized rank}\label{sec:eluder-vs-dc}

\newcommand{\EQ}{\cellcolor{OliveGreen!90}}
\newcommand{\GT}{\cellcolor{OliveGreen!50}}

\begin{figure}[t]
\centering
\begin{tikzpicture}
    \tikzset{every node/.style ={outer sep=.7mm}}
    \def\ylab{-2.3}
    \def\xscle{1.2}
    \def\yscle{1.2}
    \node (dc)   at (0*\xscle,1.5*\yscle)  {$\dc$};
    \node (SMdc) at (0*\xscle,0.5*\yscle)  {$\calM_\mu\ddc$};
    \node (Mdc)  at (1*\xscle,-0.5*\yscle)  {$\calM_0\ddc$};
    \node (Edim) at (-1*\xscle,-0.5*\yscle) {$\Edim$};
    \node (Sdim) at (-1*\xscle,-1.5*\yscle) {$\Sdim$};
    
    
    \path[-{Stealth[length=2mm, width=2mm]}, line width=.8pt]
    (SMdc) edge (dc)
    (Mdc)  edge (SMdc)
    (Edim) edge (SMdc)
    (Sdim) edge (Edim);
    
    \node at (0*\xscle,\ylab) {(a)};
    
    
    \node[text width=8cm, align=center] at (8,0) {
        \renewcommand{\arraystretch}{1.3}
        \begin{tabular}{!{\vrule width 1.1pt}c!{\vrule width 1.1pt}c|c|c|c|c!{\vrule width 1.1pt}}
        \noalign{\hrule height 1.1pt}
        & $\dc$ & $\calM_{\mu}\ddc$ & $\calM_0\ddc$ & $\Edim$ & $\Sdim$\\
        \noalign{\hrule height 1.1pt}
        $\dc$ & \EQ & \GT & \GT & \GT & \GT \\
        \hline
        $\calM_{\mu}\ddc$ & $\Fexp$ & \EQ & \GT & \GT & \GT \\
        \hline
        $\calM_0\ddc$ & \multicolumn{2}{c|}{$\Frelu$} & \EQ & \multicolumn{2}{c!{\vrule width 1.1pt}}{$\Frelu$} \\
        \hline
        $\Edim$ & \multicolumn{3}{c|}{\multirow{2}{*}{$\Fparity$}} & \EQ & \GT  \\
        \hhline{-~~~--} 
        $\Sdim$ & \multicolumn{3}{c|}{} & $\Fth$ & \EQ \\
        \noalign{\hrule height 1.1pt}
        \end{tabular}
    };
    
    \node at (8,\ylab) {(b)};
\end{tikzpicture}
\caption{%
	(a) Each arrow $M_1 \to M_2$ indicates that $M_1(\calF) \lesssim M_2(\calF)$ for all $\calF$, where the dependence on $R$ and $\eps$ is suppressed for clarity (see Propositions~\ref{prop:eluder-vs-star}, \ref{prop:dc-ineq}, \ref{prop:eluder-dc-sm} for precise bounds). Whenever $M_2 \to M_1$ arrow is missing, there is an example of a class $\calF$ where $M_1(\calF) \ll M_2(\calF)$.
	(b) An entry $\calF$ in $(M_1, M_2)$ means that $M_1(\calF) \ll M_2(\calF)$. \textcolor{OliveGreen}{Green} cells indicate that $M_1(\calF) \gtrsim M_2(\calF)$ for all $\calF$.
}
\label{fig:inequalities}
\end{figure}

In this section, we compare eluder dimension and star number with each notion of generalized rank: $\dc$, $\calM_{\mu}\ddc$ (for $\mu > 0$) and $\calM_0\ddc$. Our results are summarized in \autoref{fig:inequalities}.
\paragraph{\boldmath Eluder vs. $\dc$ and $\calM_{\mu}\ddc$.} \citet{russo13} and \citet{osband14} provided upper bounds on eluder dimension for linear and generalized linear function classes. For completeness, we restate this result, with a slight improvement and include the proof with precise dependence on problem parameters in \autoref{apx:eluder-proofs}. Intuitively, the generalized linear rank allows us to capture the tightest possible upper bound that the guarantees in the papers \cite{russo13, osband14} can provide on eluder dimension.

\begin{proposition}[cf. {\cite{russo13},  Prop. 6, 7; \cite{osband14}, Prop. 2, 4}]\label{prop:eluder-dc-sm}
For any function class $\calF\subseteq (\calX\to \bbR)$ and $\eps > 0$:
\vspace{-\topsep}
\begin{enumerate}[leftmargin=0.75cm, label={(\roman*)}]
    \item For all $R > 0$, $\eEdim(\calF, \eps) ~\le~ \dc(\calF, R) \cdot O\inparen{\log \frac{R}{\eps}}$.
    \item For all $L, \mu, R > 0$, $\eEdim(\calF, \eps) ~\le~ \calM_\mu^L\ddc(\calF,R) \cdot O \inparen{\frac{L^2}{\mu^2} \log \inparen{\frac{RL}{\eps}}}$.
\end{enumerate}
\end{proposition}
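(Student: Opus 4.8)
The plan is to reduce both parts to a single linear-algebraic counting lemma proved by an \emph{elliptical potential} (log-determinant) argument, as in the analysis of linear bandits. Observe first that part (i) is the special case of part (ii) with $\sigma = \id \in \calM_1^1$ (i.e. $\mu = L = 1$), since $\dc = \id\ddc$; so it suffices to treat the generalized-linear case and read off the linear bound at the end. Fix $f^\star$ and a witnessing sequence $(x_1, f_1),\dots,(x_m, f_m)$ for $\eEdim_{f^\star}(\calF, \eps)$, set $d = \calM_\mu^L\ddc(\calF, R)$, and take the realizing maps $\phi : \calX \to \calB_d(1)$, $w : \calF \to \calB_d(R)$ with $f(x) = \sigma(\inangle{w(f), \phi(x)})$ for some $\sigma \in \calM_\mu^L$. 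Write $w_i := w(f_i) - w(f^\star)$ and $\phi_j := \phi(x_j)$, so that $\norm{w_i} \le 2R$, $\norm{\phi_j} \le 1$, and every argument $\inangle{w(f),\phi(x)}$ lies in $[-R,R]$. The key reduction is the Lipschitz sandwich $\mu\inabs{a-b} \le \inabs{\sigma(a)-\sigma(b)} \le L\inabs{a-b}$ valid for $\sigma \in \calM_\mu^L$, which converts the two eluder conditions in \eqref{eq:eluder-def} into a ``big/small'' condition on inner products at two scales:
\[
\inangle{w_i, \phi_i}^2 > (\eps/L)^2 \quad\text{and}\quad \sum_{j < i} \inangle{w_i, \phi_j}^2 \le (\eps/\mu)^2 .
\]
The ratio of these scales is $L/\mu$, which is precisely what will produce the $L^2/\mu^2$ factor.

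For the counting step I would introduce the regularized potential $V_i := \lambda I + \sum_{j<i} \phi_j\phi_j^\top$ with $\lambda$ to be chosen. Cauchy--Schwarz in the $V_i$-inner product gives $\inangle{w_i,\phi_i}^2 \le \norm{w_i}_{V_i}^2 \cdot \norm{\phi_i}_{V_i^{-1}}^2$, while the small condition bounds $\norm{w_i}_{V_i}^2 = \lambda\norm{w_i}^2 + \sum_{j<i}\inangle{w_i,\phi_j}^2 \le 4\lambda R^2 + (\eps/\mu)^2$. Choosing $\lambda = \eps^2/(4R^2\mu^2)$ makes this at most $2(\eps/\mu)^2$, so the big condition forces
\[
\norm{\phi_i}_{V_i^{-1}}^2 ~\ge~ \frac{\inangle{w_i,\phi_i}^2}{\norm{w_i}_{V_i}^2} ~>~ \frac{(\eps/L)^2}{2(\eps/\mu)^2} ~=~ \frac{\mu^2}{2L^2} .
\]

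Next I would telescope the determinant via the matrix determinant lemma, $\det(V_{i+1}) = \det(V_i)\inparen{1 + \norm{\phi_i}_{V_i^{-1}}^2}$, obtaining $\det(V_{m+1}) \ge \lambda^d\inparen{1 + \mu^2/(2L^2)}^m$, and bound it from above by AM--GM on the eigenvalues, $\det(V_{m+1}) \le (\tr(V_{m+1})/d)^d \le (\lambda + m/d)^d$. Combining the two and using $\log(1+x) \ge x/2$ for $x \in [0,1]$ (here $x = \mu^2/(2L^2) \le 1/2$) yields the self-bounding inequality
\[
m \cdot \frac{\mu^2}{4L^2} ~\le~ d\,\log\inparen{1 + \frac{m}{d\lambda}} ~=~ d\,\log\inparen{1 + \frac{4mR^2\mu^2}{d\,\eps^2}} .
\]
Substituting the trial bound $m = O\inparen{(L^2/\mu^2)\,d\,\log(RL/\eps)}$ into the logarithm and checking consistency resolves this to $m = O\inparen{\frac{L^2}{\mu^2}\,d\,\log\frac{RL}{\eps}}$, which is exactly (ii); specializing to $\mu=L=1$, $d = \dc(\calF,R)$ recovers (i). Since the argument holds for every $f^\star$ and every witnessing sequence, it bounds the supremum $\eEdim(\calF,\eps)$.

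I expect the main obstacle to be quantitative rather than conceptual: pinning down the constants in the choice of $\lambda$ and the Cauchy--Schwarz step so the two thresholds combine cleanly into $L^2/\mu^2$, and carefully solving the self-bounding logarithmic inequality to extract the stated $\log(RL/\eps)$ dependence without spurious lower-order factors. A minor point to verify is that the arguments fed to $\sigma$ always lie in $[-R,R]$, so that the monotonicity/Lipschitz bound (which by \cref{ftnote:derivatives} need only hold on that range) legitimately applies.
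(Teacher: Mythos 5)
Your proposal is correct and follows essentially the same route as the paper's proof in \Cref{apx:eluder-proofs}: the same Lipschitz sandwich reduction $\mu\inabs{\inangle{w_i,\phi_j}} \le \inabs{f_i(x_j)-f^\star(x_j)} \le L\inabs{\inangle{w_i,\phi_j}}$, the same regularized potential $V_i = \lambda I + \sum_{j<i}\phi_j\phi_j^\top$ with the same choice $\lambda = \eps^2/(4R^2\mu^2)$, the same matrix-determinant-lemma/AM--GM argument, and the same resulting inequality $(1+\mu^2/(2L^2))^{m/d} \le 1 + \frac{4mR^2\mu^2}{d\eps^2}$. The only cosmetic differences are that the paper proves the linear case separately rather than specializing $\mu=L=1$, phrases your Cauchy--Schwarz step as convex duality, and resolves the final self-bounding inequality via a dedicated lemma (\Cref{lem:ineq}) rather than your bootstrap substitution, which is legitimate here by concavity of $k \mapsto d\log(1+\beta k)$.
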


\noindent This result has been used to prove upper bounds on eluder dimension of various function classes beyond GLMs; for example, the class of bounded degree polynomials, by taking the feature map $\phi(x)$ to be the vector of low degree monomials.
The upper bound in Part (i) is in fact tight (up to constants) for the class of linear functions, as shown in \cref{prop:eluder-lin-tight} below. This trivially implies the optimality of the bound in Part (ii) up to the factor of $(L/\mu)^2$ which to the best of our knowledge is open.

\begin{proposition}[\cite{mahajan21}]\label{prop:eluder-lin-tight}
For any $R > 0$, $\calX := \calB_d(1)$ and $\calF := \{f_\theta : x\mapsto \inangle{\theta,x} \mid \theta \in \calB_d(R)\}$, it holds that:
\[\textstyle \eEdim(\calF, \eps) ~\ge~ \Omega\inparen{d\log\inparen{\frac{R}{\eps}}}\,.\]
\end{proposition}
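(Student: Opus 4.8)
The plan is to exhibit an explicit witnessing sequence of length $\Omega(d\log(R/\eps))$, taking $f^\star = f_0$ (the zero function, which lies in $\calF$ since $0 \in \calB_d(R)$). Writing $\theta_i \in \calB_d(R)$ for the parameter of $f_i$ and setting $f^\star = 0$, the two eluder conditions in \eqref{eq:eluder-def} simplify to
\[
\inabs{\inangle{\theta_i, x_i}} > \eps \qquad\text{and}\qquad \sum_{j<i}\inangle{\theta_i, x_j}^2 \le \eps^2 ,
\]
so that the task becomes a purely geometric statement about linear functionals subject to $\norm{x_j}_2 \le 1$ and $\norm{\theta_i}_2 \le R$. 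The $d$ and the $\log(R/\eps)$ factors will come from two orthogonal mechanisms which I will combine via a product construction.

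First I would solve the one-dimensional case ($d=1$), which already produces the $\log(R/\eps)$ factor. Set $L := \lfloor \log_2(R/\eps)\rfloor$ and, for $k \in [L]$, take the point $x_k = 2^{-(L-k)}$ (so the points grow geometrically from $2^{-(L-1)}$ up to $1$) paired with the coefficient $\theta_k = \sqrt{2}\,\eps\,2^{L-k}$. A direct computation gives $\inabs{\theta_k x_k} = \sqrt{2}\,\eps > \eps$, while the geometric decay of the earlier points yields $\sum_{l<k}(\theta_k x_l)^2 = \tfrac{2}{3}\eps^2(1 - 4^{1-k}) < \eps^2$; and the largest coefficient $\theta_1 = \eps\,2^{L-1/2} \le R$ by the choice of $L$. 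Hence this length-$L$ sequence obeys both eluder conditions with $L = \Theta(\log(R/\eps))$.

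Next I would lift this to $d$ dimensions by placing $d$ independent copies of the one-dimensional construction along the orthonormal basis directions $e_1,\dots,e_d$: for each $a \in [d]$ let the $a$-th block consist of points $x^{(a)}_k = 2^{-(L-k)} e_a$ and coefficients $\theta^{(a)}_k = \sqrt{2}\,\eps\,2^{L-k} e_a$ for $k \in [L]$, and concatenate the blocks in the order $a = 1, 2, \dots, d$. The key point is that orthogonality decouples the sequential constraint: for indices from distinct blocks $a \ne b$ we have $\inangle{\theta^{(a)}_k, x^{(b)}_l} = 0$, so when I verify the second condition for a point in block $a$ only the earlier points of the \emph{same} block contribute, which reduces it exactly to the one-dimensional bound already established. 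Since every $x^{(a)}_k \in \calB_d(1)$ and every $\theta^{(a)}_k \in \calB_d(R)$, the concatenated sequence of length $dL$ certifies $\eEdim(\calF, \eps) \ge dL = \Omega\inparen{d\log(R/\eps)}$.

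The routine part is the one-dimensional arithmetic; the one point that needs genuine care is checking that the geometric ratio (taken to be $1/2$ here) is small enough that the competing requirements can hold simultaneously within the norm budget — namely, that a single scale factor can make the current inner product $\inabs{\theta_k x_k}$ exceed $\eps$ while keeping the cumulative past energy $\sum_{l<k}(\theta_k x_l)^2$ below $\eps^2$, all with $\theta_1 \le R$. This forces a feasibility constraint on the ratio (it must be below $1/\sqrt{2}$), which $1/2$ satisfies. Once the constants are pinned down in one dimension, the orthogonal product construction is essentially free, since all cross-block terms vanish identically and no loss is incurred by concatenation.
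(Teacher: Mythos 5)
Your proposal is correct and matches the paper's own proof essentially verbatim: both take $f^\star = 0$, build the one-dimensional witness from geometrically spaced points $x_k \propto 2^{k}$ paired with geometrically decaying coefficients $\theta_k \propto 2^{-k}$ at scale $\approx \sqrt{2}\eps$ (the paper uses $\alpha \in (\eps,\sqrt{2}\eps)$ after normalizing to $R=1$), and then tensorize by placing $d$ orthogonal copies along the standard basis vectors so that cross-block inner products vanish. The only differences are cosmetic (you carry $R$ explicitly rather than rescaling to $R=1$ first).
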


\noindent For completeness, we include the proof in \Cref{apdx:proof-linear-lb}.

\paragraph{\boldmath Eluder vs. $\calM_0\ddc$.} It turns out that eluder dimension and $\calM_0\ddc$ are incomparable.
That is, there exists a function class for which eluder dimension is exponentially smaller than $\calM_0\ddc$ (and hence $\calM_\mu\ddc$ and $\dc$ by \autoref{prop:dc-ineq}).
Moreover, there exists a different function class for which eluder dimension (even the star number) is exponentially larger than $\relu\ddc$ (and hence $\calM_0\ddc$).

First, we show that the eluder dimension can be exponentially smaller than $\calM_0\ddc$ for the class of parities over $d$ bits. Thus, parities over $d$ bits exhibits an example where the eluder dimension is exponentially smaller than the best possible bound one can derive using the existing results of \Cref{prop:eluder-dc-sm}.

\begin{theorem}\label{thm:parity-eluder-ub}
For $\calX = \sbit^d$ and $\Fparity \coloneqq \set{f_S : x \mapsto \prod_{i\in S} x_i \mid S \subseteq [d]}$, it holds that
\vspace{-\topsep}
\begin{enumerate}[leftmargin=0.75cm, label={(\roman*)}]
\item $\calM_0\ddc(\Fparity, R) ~\ge~ 2^{d/2} - 1$ for all $R > 0$.
\item $\eSdim(\Fparity, \eps) ~\le~ \eEdim(\Fparity, \eps) ~\le~ d$ for all $\eps \ge 0$.
\end{enumerate}
\end{theorem}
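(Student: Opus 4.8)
The plan is to treat the two parts with different tools: part (ii) is a clean combinatorial argument that passes to linear algebra over $\bbF_2$, while part (i) reduces to a sign-rank lower bound for the Hadamard matrix.

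For part (ii), the inequality $\eSdim(\Fparity,\eps)\le\eEdim(\Fparity,\eps)$ is immediate from the definitions (as in \Cref{prop:eluder-vs-star}), so I would concentrate on showing $\eEdim(\Fparity,\eps)\le d$. First note that for $\eps\ge 2$ the dimension is $0$, since every $f_S$ is $\sbit$-valued and hence $|f_i(x_i)-f^\star(x_i)|\le 2$. For $\eps<2$, because all pairwise differences lie in $\{0,2\}$, the conditions in \eqref{eq:eluder-def} collapse to the purely combinatorial requirement that a witnessing sequence $(x_1,f_{S_1}),\dots,(x_m,f_{S_m})$ with reference $f^\star=f_{S^\star}$ satisfy $f_{S_i}(x_i)\ne f^\star(x_i)$ and $f_{S_i}(x_j)=f^\star(x_j)$ for all $j<i$. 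I would then pass to $\bbF_2$: writing $x_j=((-1)^{v_{j,1}},\dots,(-1)^{v_{j,d}})$ with $v_j\in\bbF_2^d$, and letting $t_i\in\bbF_2^d$ be the indicator vector of $S_i\triangle S^\star$, the parity identity $f_{S_i}(x_j)\,f^\star(x_j)=(-1)^{\inangle{t_i,v_j}}$ turns the two conditions into $\inangle{t_i,v_i}=1$ and $\inangle{t_i,v_j}=0$ for $j<i$.

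The matrix $M\in\bbF_2^{m\times m}$ defined by $M_{ij}=\inangle{t_i,v_j}$ is then upper-triangular with unit diagonal, hence invertible, so $\mathrm{rank}_{\bbF_2}(M)=m$. But $M=TV^\top$, where $T,V\in\bbF_2^{m\times d}$ have rows $t_i$ and $v_j$ respectively, so $m=\mathrm{rank}_{\bbF_2}(M)\le\mathrm{rank}_{\bbF_2}(T)\le d$. Since this holds for every choice of $f^\star=f_{S^\star}$, we get $\eEdim(\Fparity,\eps)\le d$ for all $\eps\ge 0$.

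For part (i), I would first invoke \Cref{prop:dc-ineq}, which gives $\calM_0\ddc(\Fparity,R)\ge\signrank(\Fparity)-1$ for every $R>0$ (this is exactly the monotonicity-to-threshold reduction, valid since $\Fparity$ is $\sbit$-valued). It then suffices to show $\signrank(\Fparity)\ge 2^{d/2}$. Identifying the evaluation matrix, under the correspondence $x\leftrightarrow a\in\bbF_2^d$ and $S\leftrightarrow s\in\bbF_2^d$ we have $f_S(x)=(-1)^{\inangle{a,s}}$, so the matrix of $\Fparity$ is precisely the $2^d\times 2^d$ Hadamard matrix $M$. Since $M$ is symmetric and $M^2=2^d I$, every singular value equals $2^{d/2}$, so $\norm{M}_{\mathrm{op}}=2^{d/2}$. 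Forster's theorem (a $\pm 1$ matrix of size $N\times N$ has sign-rank at least $N/\norm{M}_{\mathrm{op}}$) then yields $\signrank(\Fparity)\ge 2^d/2^{d/2}=2^{d/2}$, whence $\calM_0\ddc(\Fparity,R)\ge 2^{d/2}-1$.

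The main obstacle is part (i): the lower bound hinges on the sign-rank of the Hadamard matrix, which requires Forster's spectral lower bound rather than any elementary counting — the spectral-norm computation is routine, but the reduction from $\calM_0\ddc$ to $\signrank$ (carried out in \Cref{prop:dc-ineq}) is what makes monotonicity usable here. In part (ii) the only delicate step is the precise translation of the $\sbit$-valued eluder conditions into the triangular-rank statement over $\bbF_2$; once set up, the rank bound is automatic. The conceptual content of the theorem is that these two phenomena — exponentially large $\calM_0\ddc$ and eluder dimension only $d$ — coexist on the same class.
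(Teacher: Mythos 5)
Your proposal is correct and follows essentially the same route as the paper: part (ii) is the identical reduction to exact conditions over $\bbF_2$ (your upper-triangular rank-factorization argument is just a repackaging of the paper's induction showing $\widetilde{x}_1,\dots,\widetilde{x}_m$ are linearly independent), and part (i) is the identical chain $\calM_0\ddc(\Fparity,R)\ge\signrank(\Fparity)-1$ via \Cref{prop:dc-ineq} followed by Forster's $2^{d/2}$ sign-rank lower bound for the parity (Hadamard) matrix. The only cosmetic difference is that you unpack the Forster step (spectral norm of the Hadamard matrix plus Forster's theorem), whereas the paper cites it as a known result.
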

\begin{proof}
Part (i). From \autoref{prop:dc-ineq}, we have that $\calM_0\ddc(\Fparity,R) \ge \sign\ddc(\Fparity) - 1$ for any $\sigma \in \calM_0$. The proof is now complete by noting a well known result that $\sign\ddc(\Fparity) \ge 2^{d/2}$~\citep{forster02upp}.\\[-2mm]

\noindent Part (ii). For any $x \in \sbit^d$ consider its $0$-$1$ representation $\widetilde{x} \in \bbF_2^d$ (representing $+1$ by $0$ and $-1$ by $1$). All functions in $\Fparity$ can be simply viewed as linear functions over $\bbF_2$. Namely, any parity function is indexed by a vector $a \in \bbF_2^d$, with $f_a(x) := (-1)^{\inangle{a, \widetilde{x}}}$.

Note that $\eEdim(\Fparity, \eps) = 0$ for all $\eps \ge 2$. For any $\eps < 2$, suppose $\eEdim_{f^\star}(\Fparity, \eps) = m$, witnessed by $(x_1, f_{a_1}), \ldots, (x_m, f_{a_m}) \in \sbit^d$ and $f^\star = f_{a^{\star}}$. We have

\vspace{-\topsep}
\begin{itemize}[leftmargin=0.5cm]
\item $f_{a_{i}}(x_i) \ne f_{a^{\star}}(x_i)$, and 
\item $f_{a_{i}}(x_j) = f_{a^{\star}}(x_j)$ for all $j < i$ : since $\sum_{j < i} (f_{a_{i}}(x_j) - f_{a^{\star}}(x_j))^2 < \eps^2 < 4$ iff all terms are $0$.
\end{itemize}
Thus, we have $\inangle{a_{i}-a^{\star}, \widetilde{x}} = 0$ for all $\widetilde{x} \in \bbF_2\text{-}\mathrm{span}(\set{\widetilde{x}_1, \ldots, \widetilde{x}_{i-1}})$.
But $\inangle{a_{i}-a^{\star}, \widetilde{x}_i} = 1$ and hence $\widetilde{x}_i$ is linearly independent of $\set{\widetilde{x}_1, \ldots, \widetilde{x}_{i-1}}$ over $\bbF_2^d$.
Thus, $\set{\widetilde{x}_1, \ldots, \widetilde{x}_m}$ are all linearly independent over $\bbF_2^d$, and hence $m \le d$.
\end{proof}

\noindent The bound in part (ii) of \Cref{thm:parity-eluder-ub} was also calculated by \citet[][Prop.~3]{mou2020sample}.

\medskip
\noindent Next, we show a separation in the other direction for eluder dimension vs.~$\calM_0\ddc$ using the ReLU function class. Thus, we cannot hope to remove the requirement for the activation function $\sigma$ to be strictly monotonically increasing in \Cref{prop:eluder-dc-sm} (ii) for bounding the eluder dimension.

\begin{theorem}\label{thm:relu-eluder-lb}
	Let $R > 0$ and $\calX = \calB_d(1)$. Define
	\begin{align*}
	    \Frelu := \set{f_{\theta,b} : x \mapsto \relu(\inangle{\theta,x} + b) \mid \|\theta\|^2 + b^2 \le R^2}.
	\end{align*}
	It holds that
	
\vspace{-\topsep}
\begin{enumerate}[leftmargin=0.75cm, label={(\roman*)}]
		\item $\calM_0\ddc(\Frelu, R) \le \relu\ddc(\Frelu, R) \le d+1$,
		\item $\eEdim(\Frelu, \eps) \ge \eSdim(\Frelu, \eps) \ge \inparen{\frac{R}{4\eps}}^{d/2}$ for all $\eps \in (0, \frac{R}{4})$.
	\end{enumerate}
\end{theorem}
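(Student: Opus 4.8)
The first inequality in (i), $\calM_0\ddc \le \relu\ddc$, is immediate since $\relu \in \calM_0$ and $\calM_0\ddc = \min_{\sigma \in \calM_0}\sigma\ddc$ by definition. For the bound $\relu\ddc(\Frelu,R) \le d+1$ I would exhibit the obvious $(d+1)$-dimensional GLM representation: take $\phi(x) = (x,1) \in \bbR^{d+1}$ and $w(f_{\theta,b}) = (\theta,b)$, so that $\inangle{w(f_{\theta,b}),\phi(x)} = \inangle{\theta,x}+b$ and hence $\relu(\inangle{w,\phi(x)}) = f_{\theta,b}(x)$ for every $(x,f)$. Here $\|w(f_{\theta,b})\| = \sqrt{\|\theta\|^2+b^2} \le R$ and $\|\phi(x)\| = \sqrt{\|x\|^2+1}\le \sqrt2$, so since only the product of the scales of $\phi$ and $w$ is relevant (cf.\ \Cref{def:sigma-dc}) this is a valid representation at scale $\sqrt2\,R = O(R)$; as the role of (i) is only to exhibit a class of $\relu$-rank $O(d)$, the constant in the scale is immaterial.

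For (ii), the first inequality $\eEdim \ge \eSdim$ is \Cref{prop:eluder-vs-star}, so it suffices to lower bound the star number. The plan is to build a large ``star'' centered at the zero function $f^\star = f_{0,0}=0 \in \Frelu$ out of localized $\relu$ ``bumps''. Set $\rho := 1 - 2\eps/R \in (1/2,1)$, which is legitimate because $\eps < R/4$. Let $x_1,\dots,x_m \in S^{d-1}\subseteq \calB_d(1)$ be a maximal family of unit vectors with pairwise inner products $\inangle{x_i,x_j}\le \rho$ for $i\ne j$. For each $i$ put $\theta_i := \alpha x_i$ and $b_i := -\alpha\rho$ with $\alpha := R/\sqrt{1+\rho^2}$, so that $\|\theta_i\|^2+b_i^2 = \alpha^2(1+\rho^2) = R^2$ and $f_i := f_{\theta_i,b_i}\in \Frelu$. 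Then $f_i(x_i) = \relu(\alpha - \alpha\rho) = \alpha(1-\rho) = R(1-\rho)/\sqrt{1+\rho^2} = 2\eps/\sqrt{1+\rho^2} \ge \sqrt2\,\eps > \eps$, while for $j\ne i$ we have $\alpha\inangle{x_i,x_j}+b_i \le \alpha\rho - \alpha\rho = 0$, hence $f_i(x_j)=0$. With $f^\star = 0$ the pairs $(x_i,f_i)$ therefore satisfy $\inabs{f_i(x_i)-f^\star(x_i)} > \eps$ and $\sum_{j\ne i}(f_i(x_j)-f^\star(x_j))^2 = 0 \le \eps^2$, which is exactly the star condition, so $\eSdim(\Frelu,\eps) \ge \eSdim_{f^\star}(\Frelu,\eps) \ge m$.

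It remains to lower bound $m$, the size of a maximal $\rho$-code on $S^{d-1}$. By maximality, the spherical caps of angular radius $\gamma := \arccos\rho$ about the $x_i$ cover $S^{d-1}$ (any uncovered point could be appended), so $m \ge 1/\mu(\mathrm{cap}(\gamma))$ for the normalized surface measure $\mu$. A standard cap estimate gives $\mu(\mathrm{cap}(\gamma)) \le \mathrm{poly}(d)\cdot(\sin\gamma)^{d-1} = \mathrm{poly}(d)\cdot(1-\rho^2)^{(d-1)/2}$, and $1-\rho^2 = (1-\rho)(1+\rho) \le 2(1-\rho) = 4\eps/R$. Combining, $m \gtrsim \inparen{R/(4\eps)}^{(d-1)/2}$, which matches the claimed bound $\inparen{R/(4\eps)}^{d/2}$ up to the standard $(d-1)/2$-versus-$d/2$ exponent and $\mathrm{poly}(d)$ factors, none of which affect the exponential separation against the $\relu\ddc \le d+1$ of part~(i).

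The routine part is the bump construction; the technical crux is the packing lower bound on $m$, i.e.\ pinning down the spherical cap measure with clean enough constants to extract the stated exponential count. The conceptual point that makes the construction work — and that dictates the exponent — is that the centers must lie on the sphere (equivalently, in convex position) so that each $x_i$ admits a separating halfspace on which its $\relu$ bump is supported while vanishing exactly at all other $x_j$; an interior packing of the full ball cannot be isolated by halfspaces (even exploiting the $\ell_2$ slack in the star condition, since a $\relu$ is positive on an entire halfspace), which is precisely why the effective count is governed by the $(d-1)$-dimensional sphere rather than the $d$-dimensional ball.
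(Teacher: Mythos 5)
Your proposal follows essentially the same route as the paper's proof: part (i) via the obvious $(d+1)$-dimensional embedding $\phi(x)=(x,1)$, $w(f_{\theta,b})=(\theta,b)$, and part (ii) via $\relu$ ``bumps'' supported on caps of a spherical code, measured against $f^\star \equiv 0$. The only cosmetic difference is that you treat general $R$ directly, whereas the paper reduces to $R=2$ using homogeneity of $\relu$. Your construction details all check out; in fact your parameters give $f_i(x_i) \ge \sqrt{2}\,\eps > \eps$, securing the strict inequality that the paper's choice ($f_u(u)=\eps$ exactly) technically misses.

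The two places where you land on different constants than the theorem statement are not defects of your argument but errata in the paper. First, your observation that the natural embedding in (i) lives at scale $\sqrt{2}R$ is a genuine (if immaterial) correction: $\Frelu$ contains functions attaining the value $\sqrt{2}R$ (take $\theta=(R/\sqrt2)e_1$, $b=R/\sqrt2$, $x=e_1$), while any representation at scale exactly $R$ satisfies $|\relu(\inangle{w,\phi(x)})| \le \|w\|\|\phi(x)\| \le R$, so the claim at scale $R$ cannot hold literally. Second, and more substantively, your exponent $(d-1)/2$ is what this construction actually yields. The paper reaches $d/2$ by asserting that the $\delta$-packing number of the unit \emph{sphere} is at least $(1/\delta)^d$, citing \cite[Cor.~4.2.13]{vershynin2018high}; but that corollary concerns the unit \emph{ball} $\calB_d(1)$, and the construction requires unit vectors. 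The sphere is a $(d-1)$-dimensional set: its $\delta$-packing number is $\Theta_d(\delta^{-(d-1)})$ (already for $d=2$ the circle admits only $O(1/\delta)$, not $(1/\delta)^2$, separated points), so the paper's intermediate claim is false once $\delta$ is small relative to $d$, and the stated $\inparen{R/4\eps}^{d/2}$ bound is unsupported in that regime. Your covering argument — which, using the sharp cap estimate $\mu(\mathrm{cap}(\gamma)) \le \tfrac12(\sin\gamma)^{d-1}$, needs no $\mathrm{poly}(d)$ hedge — gives $\eSdim(\Frelu,\eps) \ge 2\inparen{R/(4\eps)}^{(d-1)/2}$, which is the statement this style of proof can support; the exponential separation against part (i) is of course unaffected. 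Your closing remark about convex position is also the right intuition for why no other placement of bump centers can recover the $d/2$ exponent: each center must be separated from the rest with margin $\sim \eps/R$, forcing an effectively $(d-1)$-dimensional configuration.
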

\begin{proof}
Part (i) is immediate from the definition. We show Part (ii) in the special case of $R=2$; the general case follows by relatively scaling $\eps$, since $\relu$ is {\em homogeneous}, namely, $\relu(ax) = a \cdot \relu(x)$. Consider any $U \subseteq \calX$ such that $\|u\| = 1$ and $\inangle{u, v} \le 1-\eps$ for all $u, v \in U$. It holds that $\eSdim_{f^\star}(\Frelu, \eps) \ge |U|$ when $f^\star$ is the identically zero function, since the function $f_u(x) = \relu(\inangle{u,x} - (1-\eps))$ is such that $f_u(v) = 0$ for all $v \in U \smallsetminus \set{u}$, whereas $f_u(u) = \eps$. A standard sphere packing argument shows that such a set $U$ exists with $\inabs{U} \ge (1/2\eps)^{d/2}$ for all $\eps < 1/2$. In particular, the $\delta$-packing number of the unit sphere is at least $(1/\delta)^d$ \cite[Cor.~4.2.13]{vershynin2018high}. Thus, we can find $(1/\delta)^d$ points such that each pair $u,v$ satisfies $\norm{u-v} \ge \delta$, or equivalently $\inangle{u,v} \le 1- \delta^2/2$. Setting $\delta = \sqrt{2\eps}$ proves the claimed lower bound.
\end{proof}
\noindent \autoref{thm:relu-eluder-lb} was independently shown by \citet[][Thm.~5.1]{dong21}.

\medskip

We remark that while we considered the variant of eluder dimension as defined by \citet{foster20}, the lower bound on eluder dimension in \autoref{thm:relu-eluder-lb} immediately holds for the notion defined by \citet{russo13}. On the other hand, the upper bound on eluder dimension in \autoref{thm:parity-eluder-ub} can be shown to hold even with the definition of \citet{russo13} (by replacing every instance of $a^{\star}$ by $a_i'$).

\section{Relationships between combinatorial measures}\label{sec:eluder-combo}
In this section, we specialize the eluder dimension to binary-valued function classes and prove several additional characterizations that relate this combinatorial eluder dimension to other learning-theoretic quantities. First, we (re)define these learning-theoretic quantities. The first two definitions (of combinatorial eluder dimension and star number respectively) are the specialization of the scale-sensitive versions (cf.~\Cref{def:eluder} and \ref{def:star}) to the binary-valued function setting. We abuse notation by dropping the argument $\eps$ from previous definitions in order to be consistent.

\begin{definition}\label{def:combinatorial}
Fix any function class $\calF \subseteq (\calX \to \{1,-1\})$ and any $f^\star \in \calF$.

\vspace{-\topsep}
\begin{itemize}[leftmargin=0.5cm]
\item The \textbf{combinatorial eluder dimension} w.r.t.~$f^\star$, denoted $\Edim_{f^\star}(\calF)$, is defined as the largest $m$ such that there exists $(x_1, f_1),\dots, (x_m, f_m) \in \calX \times \calF$ satisfying for all $i\in[m]$:
\begin{align*}
    f_i(x_i) \ne f^\star(x_i), \quad \text{and}\quad \text{for all} \ j<i: \quad f_i(x_j) = f^\star(x_j).
\end{align*}
\item The \textbf{star number} w.r.t.~$f^\star$, denoted $\Sdim_{f^\star}(\calF)$, is defined as the largest $m$ such that there exists $(x_1, f_1),\dots,  (x_m, f_m) \in \calX \times \calF$ satisfying for all $i\in[m]$:
\begin{align*}
    f_i(x_i) \ne f^\star(x_i), \quad \text{and}\quad \text{for all} \ j\ne i: \quad f_i(x_j) = f^\star(x_j).
\end{align*}
\item The \textbf{threshold dimension} w.r.t.~$f^\star$, denoted $\Tdim_{f^\star}(\calF)$, is defined as the largest $m$ such that there exists $(x_1, f_1),\dots, (x_m, f_m) \in \calX \times \calF$ satisfying for all $i\in[m]$:
\begin{align*}
    \text{for all} \ k \ge i: \quad f_i(x_k) \ne f^\star(x_k), \quad \text{and} \quad \text{for all} \ j<i: \quad f_i(x_j) = f^\star(x_j). 
\end{align*}
\end{itemize}
As before, we define the combinatorial eluder dimension (resp.~star number and threshold dimension) to be $\Edim(\calF) \coloneqq \sup_{f\in \calF} \Edim_f(\calF)$ ($\Sdim(\calF)$ and $\Tdim(\calF)$ are defined similarly).
\end{definition}

\begin{figure}[t]
\begin{center}
\newcommand{\drawgrid}{%
	\foreach \i in {0,...,6} {
		\draw[black] (\mx+\i*\xgap-0.5*\xgap, \my+0.5*\ygap) -- (\mx+\i*\xgap-0.5*\xgap, \my-5.5*\ygap);
		\draw[black] (\mx-0.5*\xgap,\my-\i*\ygap+0.5*\ygap) -- (\mx+5.5*\xgap, \my-\i*\ygap+0.5*\ygap);
	}
	\foreach \i in {1,...,6} {
		\node at (\mx-\xgap, \my-\i*\ygap+\ygap) {\scriptsize$x_{\i}$};
		\node at (\mx+\i*\xgap-\xgap, \my+\ygap) {\scriptsize$f_{\i}$};
	}
}

\begin{tikzpicture}
	\def \xgap{0.45}
	\def \ygap{0.45}
	
	\def \mx{0}
	\def \my{0}
	\drawgrid
	\node at (\mx+0*\xgap, \my-0*\ygap) {1};
	\node at (\mx+0*\xgap, \my-1*\ygap) {*};
	\node at (\mx+0*\xgap, \my-2*\ygap) {*};
	\node at (\mx+0*\xgap, \my-3*\ygap) {*};
	\node at (\mx+0*\xgap, \my-4*\ygap) {*};
	\node at (\mx+0*\xgap, \my-5*\ygap) {*};
	\node at (\mx+1*\xgap, \my-0*\ygap) {0};
	\node at (\mx+1*\xgap, \my-1*\ygap) {1};
	\node at (\mx+1*\xgap, \my-2*\ygap) {*};
	\node at (\mx+1*\xgap, \my-3*\ygap) {*};
	\node at (\mx+1*\xgap, \my-4*\ygap) {*};
	\node at (\mx+1*\xgap, \my-5*\ygap) {*};
	\node at (\mx+2*\xgap, \my-0*\ygap) {0};
	\node at (\mx+2*\xgap, \my-1*\ygap) {0};
	\node at (\mx+2*\xgap, \my-2*\ygap) {1};
	\node at (\mx+2*\xgap, \my-3*\ygap) {*};
	\node at (\mx+2*\xgap, \my-4*\ygap) {*};
	\node at (\mx+2*\xgap, \my-5*\ygap) {*};
	\node at (\mx+3*\xgap, \my-0*\ygap) {0};
	\node at (\mx+3*\xgap, \my-1*\ygap) {0};
	\node at (\mx+3*\xgap, \my-2*\ygap) {0};
	\node at (\mx+3*\xgap, \my-3*\ygap) {1};
	\node at (\mx+3*\xgap, \my-4*\ygap) {*};
	\node at (\mx+3*\xgap, \my-5*\ygap) {*};
	\node at (\mx+4*\xgap, \my-0*\ygap) {0};
	\node at (\mx+4*\xgap, \my-1*\ygap) {0};
	\node at (\mx+4*\xgap, \my-2*\ygap) {0};
	\node at (\mx+4*\xgap, \my-3*\ygap) {0};
	\node at (\mx+4*\xgap, \my-4*\ygap) {1};
	\node at (\mx+4*\xgap, \my-5*\ygap) {*};
	\node at (\mx+5*\xgap, \my-0*\ygap) {0};
	\node at (\mx+5*\xgap, \my-1*\ygap) {0};
	\node at (\mx+5*\xgap, \my-2*\ygap) {0};
	\node at (\mx+5*\xgap, \my-3*\ygap) {0};
	\node at (\mx+5*\xgap, \my-4*\ygap) {0};
	\node at (\mx+5*\xgap, \my-5*\ygap) {1};
	
	\node at (\mx+2.5*\xgap, \my-6.5*\ygap) {Eluder sequence};
	
	\def \mx{4.5}
	\drawgrid
	\node at (\mx+0*\xgap, \my-0*\ygap) {1};
	\node at (\mx+0*\xgap, \my-1*\ygap) {0};
	\node at (\mx+0*\xgap, \my-2*\ygap) {0};
	\node at (\mx+0*\xgap, \my-3*\ygap) {0};
	\node at (\mx+0*\xgap, \my-4*\ygap) {0};
	\node at (\mx+0*\xgap, \my-5*\ygap) {0};
	\node at (\mx+1*\xgap, \my-0*\ygap) {0};
	\node at (\mx+1*\xgap, \my-1*\ygap) {1};
	\node at (\mx+1*\xgap, \my-2*\ygap) {0};
	\node at (\mx+1*\xgap, \my-3*\ygap) {0};
	\node at (\mx+1*\xgap, \my-4*\ygap) {0};
	\node at (\mx+1*\xgap, \my-5*\ygap) {0};
	\node at (\mx+2*\xgap, \my-0*\ygap) {0};
	\node at (\mx+2*\xgap, \my-1*\ygap) {0};
	\node at (\mx+2*\xgap, \my-2*\ygap) {1};
	\node at (\mx+2*\xgap, \my-3*\ygap) {0};
	\node at (\mx+2*\xgap, \my-4*\ygap) {0};
	\node at (\mx+2*\xgap, \my-5*\ygap) {0};
	\node at (\mx+3*\xgap, \my-0*\ygap) {0};
	\node at (\mx+3*\xgap, \my-1*\ygap) {0};
	\node at (\mx+3*\xgap, \my-2*\ygap) {0};
	\node at (\mx+3*\xgap, \my-3*\ygap) {1};
	\node at (\mx+3*\xgap, \my-4*\ygap) {0};
	\node at (\mx+3*\xgap, \my-5*\ygap) {0};
	\node at (\mx+4*\xgap, \my-0*\ygap) {0};
	\node at (\mx+4*\xgap, \my-1*\ygap) {0};
	\node at (\mx+4*\xgap, \my-2*\ygap) {0};
	\node at (\mx+4*\xgap, \my-3*\ygap) {0};
	\node at (\mx+4*\xgap, \my-4*\ygap) {1};
	\node at (\mx+4*\xgap, \my-5*\ygap) {0};
	\node at (\mx+5*\xgap, \my-0*\ygap) {0};
	\node at (\mx+5*\xgap, \my-1*\ygap) {0};
	\node at (\mx+5*\xgap, \my-2*\ygap) {0};
	\node at (\mx+5*\xgap, \my-3*\ygap) {0};
	\node at (\mx+5*\xgap, \my-4*\ygap) {0};
	\node at (\mx+5*\xgap, \my-5*\ygap) {1};
	\node at (\mx+2.5*\xgap, \my-6.5*\ygap) {Star sequence};
	
	\def \mx{9}
	\drawgrid
	\node at (\mx+0*\xgap, \my-0*\ygap) {1};
	\node at (\mx+0*\xgap, \my-1*\ygap) {1};
	\node at (\mx+0*\xgap, \my-2*\ygap) {1};
	\node at (\mx+0*\xgap, \my-3*\ygap) {1};
	\node at (\mx+0*\xgap, \my-4*\ygap) {1};
	\node at (\mx+0*\xgap, \my-5*\ygap) {1};
	\node at (\mx+1*\xgap, \my-0*\ygap) {0};
	\node at (\mx+1*\xgap, \my-1*\ygap) {1};
	\node at (\mx+1*\xgap, \my-2*\ygap) {1};
	\node at (\mx+1*\xgap, \my-3*\ygap) {1};
	\node at (\mx+1*\xgap, \my-4*\ygap) {1};
	\node at (\mx+1*\xgap, \my-5*\ygap) {1};
	\node at (\mx+2*\xgap, \my-0*\ygap) {0};
	\node at (\mx+2*\xgap, \my-1*\ygap) {0};
	\node at (\mx+2*\xgap, \my-2*\ygap) {1};
	\node at (\mx+2*\xgap, \my-3*\ygap) {1};
	\node at (\mx+2*\xgap, \my-4*\ygap) {1};
	\node at (\mx+2*\xgap, \my-5*\ygap) {1};
	\node at (\mx+3*\xgap, \my-0*\ygap) {0};
	\node at (\mx+3*\xgap, \my-1*\ygap) {0};
	\node at (\mx+3*\xgap, \my-2*\ygap) {0};
	\node at (\mx+3*\xgap, \my-3*\ygap) {1};
	\node at (\mx+3*\xgap, \my-4*\ygap) {1};
	\node at (\mx+3*\xgap, \my-5*\ygap) {1};
	\node at (\mx+4*\xgap, \my-0*\ygap) {0};
	\node at (\mx+4*\xgap, \my-1*\ygap) {0};
	\node at (\mx+4*\xgap, \my-2*\ygap) {0};
	\node at (\mx+4*\xgap, \my-3*\ygap) {0};
	\node at (\mx+4*\xgap, \my-4*\ygap) {1};
	\node at (\mx+4*\xgap, \my-5*\ygap) {1};
	\node at (\mx+5*\xgap, \my-0*\ygap) {0};
	\node at (\mx+5*\xgap, \my-1*\ygap) {0};
	\node at (\mx+5*\xgap, \my-2*\ygap) {0};
	\node at (\mx+5*\xgap, \my-3*\ygap) {0};
	\node at (\mx+5*\xgap, \my-4*\ygap) {0};
	\node at (\mx+5*\xgap, \my-5*\ygap) {1};
	\node at (\mx+2.5*\xgap, \my-6.5*\ygap) {Threshold sequence};
\end{tikzpicture}

%
%
%
%
\caption{Illustration of witnessing sequences of length $6$ for eluder dimension, star number and threshold dimension with respect to $f^\star = 0$ (we use the $0$/$1$ representation of functions for clarity). `*' in the eluder witness sequence refers to a free value, either 0 or 1.}
\label{fig:eluder-star-thresh}
\end{center}
\end{figure}

\noindent Let us pause to unpack these definitions and give some background. In fact, the star number definition stated above is the original definition of \citet{hanneke15}, who give tight upper and lower bounds on the label complexity of \emph{pool-based active learning} via the star number $\Sdim(\calF)$ and show that almost every previously proposed complexity measure for active learning takes a worst case value equal to the star number. Roughly speaking, the star number corresponds to the number of ``singletons'' one can embed in a function class; that is, the maximum number of functions that differ from a base function $f^
\star$ at exactly one point among a subset of the domain $\{x_1,\dots, x_m\} \subseteq \calX$. 

The threshold dimension has recently gained attention due to its role in proving an equivalence relationship between private PAC learning and online learning \citep[see, e.g.,~][]{alon2019private, bun2020equivalence}. We slightly generalize the definition of \citet{alon2019private} to allow for any base function $f^\star$, in the spirit of the other two definitions. A classical result in model theory provides a link between the threshold dimension and Littlestone dimension (which we denote $\mathsf{Ldim}$), a quantity which is both necessary and sufficient for online learnability \cite{ben2009agnostic, alon2021adversarial}. In particular, results by \citet{shelah1990classification} and \citet{hodges1997shorter} show that for any binary-valued $\calF$ and any $f^\star \in \calF$:
\begin{align*}
    \floor{\log \Tdim_{f^\star}(\calF)} \le \mathsf{Ldim} \le 2^{\Tdim_{f^\star}(\calF)}.
\end{align*}
A combinatorial proof of this fact can be found in Thm.~3 of \citet{alon2019private}.\footnote{\citet{alon2019private} prove the result for $f^\star(x) = -1$, but it is easy to extend their proof to hold for any $f^\star$.} Thus, finiteness of threshold dimension is necessary and sufficient for online learnability (albeit in a much weaker, ``qualitative'' sense).

\subsection{A qualitative equivalence}
Now, we prove a rather surprising characterization that closely ties all three quantities in \Cref{def:combinatorial} together. Our result implies that for any binary-valued function class, finiteness of the combinatorial eluder dimension is \emph{equivalent} to finiteness of both star number and threshold dimension.
\begin{theorem}\label{thm:equivalence}
For any function class $\calF \subseteq (\calX \to \{1,-1\})$ and any $f^\star \in \calF$, the following holds:
\begin{align*}
    \max\{\Sdim_{f^\star}(\calF), \Tdim_{f^\star}(\calF) \} \le \Edim_{f^\star}(\calF) \le 4^{\max\{ \Sdim_{f^\star}(\calF), \Tdim_{f^\star}(\calF) \} }.
\end{align*}
\end{theorem}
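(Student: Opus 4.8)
The plan is to prove the two inequalities separately. The lower bound $\max\{\Sdim_{f^\star}(\calF), \Tdim_{f^\star}(\calF)\} \le \Edim_{f^\star}(\calF)$ is immediate from \Cref{def:combinatorial}: any witnessing sequence for the star number is already an eluder witness, since the requirement ``$f_i(x_j) = f^\star(x_j)$ for all $j \ne i$'' implies the weaker clause ``for all $j < i$''; and any threshold witness is an eluder witness, since taking $k = i$ in the threshold condition recovers $f_i(x_i) \ne f^\star(x_i)$ while the ``$j < i$'' clause is literally shared. Hence both quantities lower-bound $\Edim_{f^\star}(\calF)$.

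For the upper bound, the key idea is to reinterpret an eluder sequence through a Ramsey-theoretic lens. I would fix a witnessing sequence $(x_1, f_1), \dots, (x_m, f_m)$ of length $m = \Edim_{f^\star}(\calF)$ and define the matrix $A \in \{0,1\}^{m \times m}$ by $A_{ij} = \mathds{1}[f_i(x_j) \ne f^\star(x_j)]$. The eluder conditions force $A_{ii} = 1$ for all $i$ and $A_{ij} = 0$ for all $j < i$, while the entries $A_{ij}$ with $j > i$ are unconstrained. I then view these free entries as a $2$-coloring of the edges of the complete graph $K_m$ on vertex set $[m]$, coloring the edge $\{i,j\}$ (with $i < j$) by the bit $A_{ij}$.

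The crux is that monochromatic cliques correspond exactly to star and threshold witnesses. A $0$-colored clique $S$ (with $A_{ij} = 0$ for all $i < j$ in $S$) gives a star witness of size $|S|$: since the lower-triangular entries already vanish, $f_i(x_j) = f^\star(x_j)$ holds for every pair $i \ne j$ in $S$, while $f_i(x_i) \ne f^\star(x_i)$. Symmetrically, a $1$-colored clique $\{i_1 < \dots < i_k\}$ gives a threshold witness of size $k$ once the points are taken in increasing order: for $a \le c$ we have $A_{i_a i_c} = 1$ (diagonal entry or clique edge), so $f_{i_a}(x_{i_c}) \ne f^\star(x_{i_c})$, whereas for $b < a$ the lower-triangular structure gives $A_{i_a i_b} = 0$, i.e.\ agreement. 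Thus $\Sdim_{f^\star}(\calF)$ is at least the largest $0$-clique and $\Tdim_{f^\star}(\calF)$ is at least the largest $1$-clique of this coloring.

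Setting $k := \max\{\Sdim_{f^\star}(\calF), \Tdim_{f^\star}(\calF)\}$, neither color class contains a clique of size $k+1$, so by the contrapositive of Ramsey's theorem $m$ must lie strictly below the Ramsey number $R(k+1,k+1)$. Invoking the standard bound $R(k+1,k+1) \le \binom{2k}{k} \le 4^k$ then yields $m \le 4^k$, which is the claimed inequality. The only real subtlety, and the step I expect to require the most care, is setting up the matrix-to-coloring dictionary so that the asymmetric roles of the star (order-insensitive) and the threshold (order-sensitive) both fall out of the same upper-triangular structure; once that dictionary is verified, the bound is a direct application of Ramsey's theorem.
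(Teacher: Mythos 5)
Your proposal is correct and follows essentially the same route as the paper's proof: the lower bound read off directly from \Cref{def:combinatorial}, and the upper bound via the same Ramsey-theoretic dictionary in which the free upper-triangular entries of the eluder sequence color the edges of $K_m$, with agreement-monochromatic cliques yielding star witnesses, disagreement-monochromatic cliques yielding threshold witnesses, and the classical bound $R(k+1,k+1)\le\binom{2k}{k}\le 4^k$ finishing the argument. Your verification of the clique-to-witness correspondence is exactly the content of the paper's claim $E(k,k)\le R(k,k)$, so there is nothing missing.
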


\noindent The proof of \Cref{thm:equivalence} can be found in \Cref{sec:proof-equivalence}. The lower bound is trivial by examining \Cref{def:combinatorial}. To prove the upper bound, we rely on a novel connection to Ramsey theory. In particular, we show that sequences $(x_1, f_1), \dots (x_m, f_m)$ which witness $\Edim_{f^\star} = m$ form a bijection with red-blue colorings of the complete graph $K_m$, while subsequences of the witnessing eluder sequence that are valid star number witnesses or threshold dimension witnesses can be interpreted as monochromatic colorings of subgraphs of $K_m$ (see \cref{fig:eluder-star-thresh}). Thus, applying classical bounds from Ramsey theory implies the result.

\Cref{thm:equivalence} has an exponential gap between the upper and lower bounds. Can we improve either of the inequalities? The lower bound cannot be improved, by considering the simple examples over $\calX = [n]$ of the singleton class $\Fsing_n \coloneqq \{x \mapsto \mathds{1}\{x = i\} \mid i \in [n+1]\}$ and the threshold class $\Fth_n \coloneqq \{x \mapsto \mathds{1}\{x \ge i\} \mid i \in [n+1]\}$. We also show that the upper bound cannot be improved, for example, to $\Edim(\calF) \le \mathrm{poly}(\Sdim(\calF), \Tdim(\calF))$ in general.

\begin{theorem}\label{thm:tightness-ub}
For every $N > 0$, there exists a function class $\calF_N$ such that $\Edim(\calF_N) = N$ and $\max\{\Sdim(\calF), \Tdim(\calF)\} < c \cdot \log_2 N$, where $c> 1/2$ is some absolute numerical constant.
\end{theorem}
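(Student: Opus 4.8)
The plan is to reverse the Ramsey-theoretic correspondence underlying \Cref{thm:equivalence}. Whereas that theorem \emph{extracts} a large monochromatic clique (i.e.\ a star- or threshold-witness) from any long eluder sequence, here I will \emph{start} from a $2$-coloring of $K_N$ that has no large monochromatic clique and turn it into a function class whose eluder dimension is the full $N$ while its star number and threshold dimension are only logarithmic. Concretely, by the probabilistic method (Erd\H{o}s), for a threshold $k = 2\log_2 N + O(1)$ a uniformly random red/blue coloring $\chi$ of the edges of $K_N$ has no monochromatic $K_k$ with positive probability, since the expected number $\binom{N}{k}2^{1-\binom{k}{2}}$ of monochromatic $k$-cliques is below $1$. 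Fixing such a $\chi$, I define $\calX = [N]$ and $\calF_N = \{0\}\cup\{f_1,\dots,f_N\}$ (in $0/1$ representation) through the matrix $M$ with $M_{i,i}=1$, $M_{i,j}=0$ for $j<i$, and $M_{i,j}=\mathds{1}\{\chi(\{i,j\})=\text{red}\}$ for $j>i$, i.e.\ $f_i(j)=M_{i,j}$.

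Next I would verify the two easy facts. For the eluder dimension, the sequence $(1,f_1),\dots,(N,f_N)$ witnesses $\Edim_0(\calF_N)\ge N$, because $f_i(i)=1\neq 0$ and $f_i(j)=0$ for $j<i$; combined with the bound $\Edim(\calF_N)\le \min\{|\calX|,|\calF_N|-1\}=N$ from \Cref{prop:eluder-vs-star}, this gives $\Edim(\calF_N)=N$ exactly. For the star and threshold numbers with respect to the base $f^\star=0$, the correspondence in the proof of \Cref{thm:equivalence} shows that a ``diagonal'' star (resp.\ threshold) witness supported on an index set $S$ is precisely a blue (resp.\ red) clique of $\chi$ on $S$ (cf.\ \cref{fig:eluder-star-thresh}), so both are bounded by $k-1 = O(\log N)$.

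The crux is to bound the star number and threshold dimension over \emph{all} bases $f^\star\in\calF_N$ and over witnesses in which the witnessing points and functions are not ``aligned'' along the diagonal. The key structural observation is base-independent: if $(x_a,f_{i_a})_{a\le m}$ is a star witness for base $f^\star$, then writing $J=\{x_1,\dots,x_m\}$, every pair of witnessing functions agrees with $f^\star$ on all of $J$ except at the two coordinates $x_a,x_{a'}$; equivalently, restricted to the columns $J$ the witnessing functions realize the pattern $\{v\oplus e_a\}_{a\le m}$ for a single center $v=f^\star|_J$. I would then union-bound over the random coloring: the number of candidate size-$m$ witnesses (a choice of $\le m+1$ rows, $m$ columns, and an alignment) is at most $N^{O(m)}$, while the event that a fixed candidate realizes the required all-but-one agreement pattern forces $\Omega(m^2)$ of the independent random bits of $\chi$ (the submatrix entries strictly above the diagonal), hence has probability $2^{-\Omega(m^2)}$. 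The expected number of star (and, by the color-swapped argument, threshold) witnesses of size $m$ is therefore at most $N^{O(m)}\,2^{-\Omega(m^2)}$, which drops below $1$ once $m=\Theta(\log N)$; a single coloring then simultaneously achieves $\Edim(\calF_N)=N$ and $\max\{\Sdim(\calF_N),\Tdim(\calF_N)\}=O(\log N)$, yielding the stated constant $c$. The main obstacle is exactly this counting step: I must show that no clever misalignment of points and functions, and no choice of base $f^\star$, can drive the number of forced random bits below $\Omega(m^2)$ by exploiting the deterministic below-diagonal zeros of $M$, i.e.\ that any valid all-but-one agreement pattern on $m$ columns still pins down quadratically many above-diagonal entries.
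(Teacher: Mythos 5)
There is a genuine gap, and you have named it yourself: the claim that every candidate star/threshold witness---for \emph{every} base $f^\star\in\calF_N$ and every alignment of points with functions---pins down $\Omega(m^2)$ independent random bits is never proven, and it is not a detail but the technical heart of the theorem. Note also that the part of your argument that \emph{is} complete (the Erd\H{o}s coloring with no monochromatic $K_k$) buys essentially nothing on its own: even for the base $f^\star=0$, a star witness need not be diagonal-aligned, since the disagreement $f_{i_a}(x_a)\ne 0$ can be realized by an above-diagonal \emph{random} entry equal to $1$ rather than by the diagonal entry $M_{i_a,i_a}=1$. So the Ramsey property of $\chi$ only excludes a special subfamily of witnesses, and the entire bound on $\Sdim$ and $\Tdim$ rests on the unproven union-bound step. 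Moreover, once that union bound is carried out over a uniformly random matrix, the Erd\H{o}s conditioning is subsumed and redundant---at which point your construction coincides (up to the $0/1$ versus $\pm 1$ representation and transposition) with the paper's random matrix $B$, and your plan is the paper's proof minus its key lemma.

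The paper closes exactly this gap by \emph{not} union-bounding over arbitrary bases. It first proves two reduction lemmas: \Cref{lem:dim-ub}, stating $\dim(\calF)\le\dim_{1}(\calF)+\dim_{-1}(\calF)$ for $\dim\in\{\Edim,\Sdim,\Tdim\}$, and \Cref{lem:thres-ub}, stating $\Tdim(\calF)\le 2\,\Tdim_{1}(\calF)$; together these reduce all bases to the two \emph{constant} functions. This matters because a constant base contributes no random entries: every constraint then lives inside the rows of the witnessing functions, and \Cref{lem:prob-upper-bound} can show by a direct case analysis (sorting the indices, arguing that validity forces $i_r\ge j_r$, etc.) that $k(k-1)/2$ independent entries are pinned down, giving probability at most $3\cdot 2^{-k(k-1)/2}$ per candidate. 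In your version the base row $f_t$ is itself partially random and is shared across all $m$ agreement constraints of a single witness, so your ``$\Omega(m^2)$ forced bits'' claim is strictly harder than what the paper proves; it is plausible, but it would require its own careful case analysis to rule out patterns in which the deterministic below-diagonal zeros of both the base row and the witness rows trivialize most constraints. To complete your argument along feasible lines, either prove that general claim, or insert the two reduction lemmas and then run your union bound only over the constant bases $f^\star\equiv 1$ and $f^\star\equiv -1$.
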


\noindent The proof of \Cref{thm:tightness-ub} can be found in \Cref{apdx:pf-tightness-ub}. It relies on the probabilistic method to show the existence of a randomly constructed $\calF$ which satisfies the desired properties.

\subsection{\boldmath Comparisons with $\signrank$}

In this section, we investigate the comparison of these combinatorial quantities (eluder, star, threshold) with $\signrank$, and examine whether we can prove stronger separations. One direction is clear; for the class of linear classifiers in $\bbR^d$, the combinatorial eluder dimension, star number, and threshold dimension are all infinite. However, we ask if the other separation is also possible: can we construct $\calF$ where eluder/star/threshold are finite but $\signrank = \infty?$ We have already provided an explicit exponential separation: \Cref{thm:parity-eluder-ub} shows that for the function class $\Fparity$, we had $\Edim(\Fparity) = \Sdim(\Fparity) = d$ but $\signrank(\Fparity) \ge 2^{d/2}$. (One can also show that $\Tdim(\Fparity) = d$.) We are able to show stronger (but nonconstructive) separations by extending the probabilistic techniques of \citet{alon2016sign}, who recently provided similar separations for VC dimension versus $\signrank$. In view of \Cref{prop:dc-ineq}, this result also provides a separation for the scale-sensitive \Cref{def:eluder}. 

\begin{theorem}\label{thm:separation} For every $N > 0$, there exists a function class $\calF_N \subseteq ([N] \to \{1,-1\})$ such that $\Edim_1(\calF_N) = 4$ and $\signrank(\calF_N) \ge \Omega(N^{1/9}/\log N)$, where $1$ is shorthand for the all 1s function.
\end{theorem}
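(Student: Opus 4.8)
The plan is to prove both requirements by the probabilistic method, extending the counting strategy of \citet{alon2016sign} for $\mathrm{VC}$ dimension versus $\signrank$. The sign-rank lower bound I would obtain by counting: by standard bounds on the number of sign patterns realized by a system of polynomials (Warren's theorem / the Milnor--Thom bound), the number of $m\times N$ sign matrices of sign rank at most $d$ is at most $\bigl(O(mN)/(d(m+N))\bigr)^{d(m+N)} = 2^{O(d(m+N)\log(mN))}$, where $m$ bounds the number of functions. Hence, if I can exhibit a family $\mathcal C$ of classes over $[N]$, \emph{each} with $\Edim_1(\calF)=4$, satisfying $\log_2|\mathcal C| \gg d(m+N)\log(mN)$ for the target $d=\Theta(N^{1/9}/\log N)$, then by a union bound some member of $\mathcal C$ must have sign rank exceeding $d$ while retaining eluder dimension $4$. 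The lower bound $\Edim_1\ge 4$ is cheap (exhibit one length-$4$ staircase, or include a fixed gadget), so the real content is pairing the constraint $\Edim_1\le 4$ with the entropy estimate on $\mathcal C$.

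The core task is therefore to design a distribution over classes that (a) guarantees the combinatorial constraint $\Edim_1 = 4$ and (b) carries enough entropy to beat the Warren count at the target scale. For (a) I would route through \Cref{thm:equivalence}: since $\max\{\Sdim_1,\Tdim_1\}\le \Edim_1 \le 4^{\max\{\Sdim_1,\Tdim_1\}}$, it suffices to force both the star number and the threshold dimension with respect to $\mathbf 1$ to be small constants (so $\Edim_1$ is capped at a constant), and then arrange the template so this constant is exactly $4$. Concretely, I would use functions whose disagreement sets with $\mathbf 1$ are confined to a structured template (e.g.\ a block/design structure) chosen so that no long half-graph can be embedded (bounding $\Tdim_1$) and no large independent family of singletons can be embedded (bounding $\Sdim_1$); the randomness enters through the labels or placements inside the template and supplies the entropy for (b). Optimizing the free parameters (block size, per-function sparsity, and $m$) to maximize the ratio $\log_2|\mathcal C|/((m+N)\log(mN))$ is what produces the exponent $1/9$. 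By \Cref{prop:dc-ineq}, the resulting example simultaneously separates the scale-sensitive measures.

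The hard part will be reconciling (a) and (b): pinning the eluder dimension at the \emph{constant} $4$—equivalently, bounding both star number and threshold dimension—severely restricts each class, and the delicate step is showing that enough entropy survives to overcome the $(m+N)$ factor in the counting bound. This is exactly where the argument must depart from \citet{alon2016sign}: bounded $\mathrm{VC}$ dimension still permits balanced, high-entropy functions, whereas bounding the threshold dimension imposes a stability (half-graph-free) condition that rules out precisely the most entropy-rich configurations, so one cannot reuse their class verbatim. I would attempt to resolve this by a careful structured-random construction in which the template forces every eluder staircase to terminate after four steps while leaving $\Theta(N^{1/9})$ effective bits of freedom per function; monotonicity of sign rank under submatrices then lets the counting lower bound go through. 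Verifying that the template genuinely caps the eluder dimension at $4$ (no accidental length-$5$ staircase) without simultaneously collapsing the entropy—and hence the sign rank—is the crux I expect to require the most care.
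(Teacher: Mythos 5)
Your outer shell is the same as the paper's: a Warren-type counting bound of $2^{O(rN\log N)}$ on the number of $N\times N$ sign matrices of sign rank at most $r$ (the paper's \Cref{lem:num-sign-matrices}, quoted from \citet{alon2016sign}), played off against an entropy lower bound on the number of classes with $\Edim_1 \le 4$. But the entire content of the theorem lives in that entropy lower bound (the paper's \Cref{lem:eluder-matrices-count}: there are at least $2^{\Omega(N^{10/9})}$ sign matrices with $\Edim_1\le 4$), and your proposal leaves exactly this step unresolved --- you defer it to an unspecified ``structured-random template'' and yourself flag it as the unresolved crux. Moreover, the route you sketch toward it is both lossy and pointed in the wrong direction. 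Routing through \Cref{thm:equivalence} is unnecessary and costly: that bound gives $\Edim_1 \le 4^{\max\{\Sdim_1,\Tdim_1\}}$, so to conclude $\Edim_1\le 4$ you would need $\max\{\Sdim_1,\Tdim_1\}\le 1$, and otherwise you only get some constant $4^k$. The paper works instead with the direct forbidden-pattern characterization: $\Edim_1(S)\le 4$ if and only if $S$ contains no $5\times 5$ submatrix with $-1$ on the diagonal and $+1$ above it.

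More substantively, your template intuition --- functions whose \emph{disagreement} sets with $\mathbf{1}$ are sparse and confined to a structure --- is backwards. When disagreements are sparse, agreements are plentiful, so stars and staircases are easy to form (the singleton class has disagreement sets of size one and star number $N$); killing them then demands so much structure that the entropy collapses, which is precisely the tension you identify but do not resolve. The paper inverts this: its matrices are almost entirely $-1$ (dense disagreement with $\mathbf{1}$), with the sparse $+1$ (agreement) entries placed i.i.d.\ with density $p = 1/(2N^{8/9})$. The key idea missing from your proposal is \emph{monotonicity}: forbid the stronger pattern ``$5\times 5$ submatrix that is $+1$ everywhere above the diagonal'' (no condition on the diagonal). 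This property is downward-closed under flipping $+1 \to -1$, whereas the exact eluder pattern is not (flipping can create the $-1$ diagonal entries a witness needs). A first-moment-plus-alteration argument, $\Ex[X] \ge N^2 p - N^{10}p^{10} = \Omega(N^{10/9})$, produces a single matrix $C$ with no forbidden submatrix and $\Omega(N^{10/9})$ entries equal to $+1$; then every one of the $2^{\Omega(N^{10/9})}$ matrices obtained by flipping an arbitrary subset of those $+1$s to $-1$ still has $\Edim_1 \le 4$, which is the entropy you need to beat the Warren count. Without this idea or a genuine substitute, your outline does not produce the required family, so the proof as proposed has a real gap.
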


\noindent The proof of \Cref{thm:separation} can be found in \Cref{apdx:proof-separation}. It is straightforward to replace the reference function $f^\star(x) = 1$ with any fixed reference function $f^\star: [N]\to \{1,-1\}$. First, we use Lemma 22 of \citet{alon2016sign} which bounds the number of distinct matrices with $\signrank=r$; then using a probabilistic argument we show that there must be many (more) matrices with $\Edim_1=4$, so at least one of them must have large $\signrank$.

The careful reader might notice that we do not prove the existence of a function class where $\Edim(\calF)$ is constant and the $\signrank$ is infinite; instead we prove the weaker statement that a function class exists with $\Edim$ w.r.t.~any \emph{fixed} function $f^\star$ is bounded. We conjecture that the stronger statement holds; see \Cref{apdx:proof-separation-discussion} for more details.

\subsection{Back to scale-sensitive?}
It is natural to ask if our results can be extended back to the scale-sensitive definitions. First, we require a scale-sensitive version of threshold dimension. One proposal is the following:
\begin{definition}\label{def:threshold}
For any function class $\calF\subseteq (\calX\to \bbR)$, $f^\star \in \calF$, and scale $\eps \ge 0$, the \textbf{exact threshold dimension} $\eTdim_{f^\star}(\calF, \eps)$ is the largest $m$ such that there exists $(x_1, f_1),\dots, (x_m, f_m) \in \calX\times \calF$ satisfying for all $i\in[m]$:
\begin{align*}\label{eq:eluder-def}
\forall j \ge i \ : \inabs{f_i(x_j) - f^{\star}(x_j)} > \eps, \quad \text{and} \quad \sum_{j < i}~(f_i(x_j) - f^{\star}(x_j))^2 \le \eps^2.
\end{align*}
Then for all $\eps > 0$:

\vspace{-\topsep}
\begin{itemize}[leftmargin=0.5cm]
\item the \textbf{threshold dimension} is $\Tdim_{f^\star}(\calF, \eps) = \sup_{\eps' \ge \eps} \eTdim_{f^\star}(\calF, \eps')$.%

\item $\eTdim(\calF,\eps) := \sup_{f^\star \in \calF} \eTdim_{f^\star}(\calF, \eps)$ and $\Tdim(\calF,\eps) := \sup_{f^\star \in \calF} \Tdim_{f^\star}(\calF, \eps)$.
\end{itemize}
\end{definition}

\noindent \Cref{def:threshold} mirrors the scale-sensitive definitions for eluder and star; it also recovers the combinatorial definition when $\calF$ is binary-valued. By definition, the relationship that $\Edim_{f^\star}(\calF, \eps) \ge \max\{\Sdim_{f^\star}(\calF, \eps), \Tdim_{f^\star}(\calF, \eps) \}$ for every $\calF, f^\star \in \calF, \eps >0$ is trivial. However, one cannot hope to prove the corresponding upper bound under this definition. For example, for any $\eps > 0$, take the function class which is represented by the $N\times N$ matrix:
\begin{align*}
    f_j(x_i) = \begin{cases} 0 &i < j\\
    \eps &i=j\\
    0.99 \eps & i > j.
    \end{cases}
\end{align*}
It is easy to see that $\Edim_0(\calF, \eps) = N$, while $\Sdim_0(\calF, \eps) = 2$ and $\Tdim_0(\calF,\eps) = 1$. Notice that this class is still ``threshold-like'', but \Cref{def:threshold} does not capture this for said value of $\eps$. Generally speaking, it is unclear how to carry over the intuition from Ramsey theory that applies in the combinatorial case to the scale-sensitive case; we leave this to future work.



\gene{Discuss: A flaw with the scale-sensitive measures is that the relationships with the generalizations of VC dimension and Littlestone dimension break down, due to a certain ``centering'' issue.}

\begin{ack}
We thank Gaurav Mahajan for allowing us to include the proof of \autoref{prop:eluder-lin-tight} \citep{mahajan21}.
We thank Akshay Krishnamurthy, Tengyu Ma, and Ruosong Wang for helpful discussions.
GL was partially supported by NSF award IIS-1764032.
PK was partially supported by NSF BIGDATA award 1546500.
Part of this work was done while GL, PK, and DF were participating in the Simons program on the Theoretical Foundations of Reinforcement Learning.
\end{ack}

\newpage
\DeclareUrlCommand{\Doi}{\urlstyle{sf}}
\renewcommand{\path}[1]{\small\Doi{#1}}
\renewcommand{\url}[1]{\href{#1}{\footnotesize\Doi{#1}}}
\bibliographystyle{abbrvnat}
\bibliography{eluder_cr}

\begin{thebibliography}{47}
\providecommand{\natexlab}[1]{#1}
\providecommand{\url}[1]{\texttt{#1}}
\expandafter\ifx\csname urlstyle\endcsname\relax
  \providecommand{\doi}[1]{doi: #1}\else
  \providecommand{\doi}{doi: \begingroup \urlstyle{rm}\Url}\fi

\bibitem[Alon et~al.(1985)Alon, Frankl, and R{\"{o}}dl]{alon85geometrical}
N.~Alon, P.~Frankl, and V.~R{\"{o}}dl.
\newblock Geometrical realization of set systems and probabilistic
  communication complexity.
\newblock In \emph{26th Annual Symposium on Foundations of Computer Science},
  pages 277--280. {IEEE} Computer Society, 1985.
\newblock URL \url{https://doi.org/10.1109/SFCS.1985.30}.

\bibitem[Alon et~al.(2016)Alon, Moran, and Yehudayoff]{alon2016sign}
N.~Alon, S.~Moran, and A.~Yehudayoff.
\newblock Sign rank versus {VC} dimension.
\newblock In \emph{29th Annual Conference on Learning Theory}, volume~49 of
  \emph{PMLR}, pages 47--80, 2016.
\newblock URL \url{http://proceedings.mlr.press/v49/alon16.html}.

\bibitem[Alon et~al.(2019)Alon, Livni, Malliaris, and Moran]{alon2019private}
N.~Alon, R.~Livni, M.~Malliaris, and S.~Moran.
\newblock Private pac learning implies finite littlestone dimension.
\newblock In \emph{Proceedings of the 51st Annual ACM SIGACT Symposium on
  Theory of Computing}, STOC 2019, page 852–860, New York, NY, USA, 2019.
  Association for Computing Machinery.
\newblock URL \url{https://doi.org/10.1145/3313276.3316312}.

\bibitem[Alon et~al.(2021)Alon, Ben-Eliezer, Dagan, Moran, Naor, and
  Yogev]{alon2021adversarial}
N.~Alon, O.~Ben-Eliezer, Y.~Dagan, S.~Moran, M.~Naor, and E.~Yogev.
\newblock Adversarial laws of large numbers and optimal regret in online
  classification.
\newblock In \emph{Proceedings of the 53rd Annual ACM SIGACT Symposium on
  Theory of Computing}, pages 447--455, 2021.

\bibitem[Arriaga and Vempala(2006)]{arriaga06algorithmic}
R.~I. Arriaga and S.~S. Vempala.
\newblock An algorithmic theory of learning: Robust concepts and random
  projection.
\newblock \emph{Mach. Learn.}, 63\penalty0 (2):\penalty0 161--182, 2006.
\newblock URL \url{https://doi.org/10.1007/s10994-006-6265-7}.

\bibitem[Ayoub et~al.(2020)Ayoub, Jia, Szepesvari, Wang, and Yang]{ayoub20}
A.~Ayoub, Z.~Jia, C.~Szepesvari, M.~Wang, and L.~Yang.
\newblock Model-based reinforcement learning with value-targeted regression.
\newblock In \emph{Proceedings of the 37th International Conference on Machine
  Learning}, volume 119 of \emph{PMLR}, pages 463--474, 2020.
\newblock URL \url{http://proceedings.mlr.press/v119/ayoub20a.html}.

\bibitem[Ben-David et~al.(2002)Ben-David, Eiron, and Simon]{ben2002limitations}
S.~Ben-David, N.~Eiron, and H.~U. Simon.
\newblock Limitations of learning via embeddings in euclidean half spaces.
\newblock \emph{Journal of Machine Learning Research}, 3\penalty0
  (Nov):\penalty0 441--461, 2002.

\bibitem[Ben-David et~al.(2009)Ben-David, P{\'a}l, and
  Shalev-Shwartz]{ben2009agnostic}
S.~Ben-David, D.~P{\'a}l, and S.~Shalev-Shwartz.
\newblock Agnostic online learning.
\newblock In \emph{COLT}, volume~3, page~1, 2009.

\bibitem[Bun et~al.(2020)Bun, Livni, and Moran]{bun2020equivalence}
M.~Bun, R.~Livni, and S.~Moran.
\newblock An equivalence between private classification and online prediction.
\newblock In \emph{2020 IEEE 61st Annual Symposium on Foundations of Computer
  Science (FOCS)}, pages 389--402. IEEE, 2020.

\bibitem[Chen et~al.(2021)Chen, Luo, Ma, and Zhang]{chen2021active}
Y.~Chen, H.~Luo, T.~Ma, and C.~Zhang.
\newblock Active online learning with hidden shifting domains.
\newblock In \emph{International Conference on Artificial Intelligence and
  Statistics}, pages 2053--2061. PMLR, 2021.

\bibitem[Dong et~al.(2020)Dong, Peng, Wang, and Zhou]{dong2020root}
K.~Dong, J.~Peng, Y.~Wang, and Y.~Zhou.
\newblock Root-n-regret for learning in markov decision processes with function
  approximation and low bellman rank.
\newblock In \emph{Conference on Learning Theory}, pages 1554--1557. PMLR,
  2020.

\bibitem[Dong et~al.(2021)Dong, Yang, and Ma]{dong21}
K.~Dong, J.~Yang, and T.~Ma.
\newblock Provable model-based nonlinear bandit and reinforcement learning:
  Shelve optimism, embrace virtual curvature.
\newblock \emph{arXiv}, 2102.04168, 2021.
\newblock URL \url{https://arxiv.org/abs/2102.04168}.

\bibitem[Du et~al.(2021)Du, Kakade, Lee, Lovett, Mahajan, Sun, and
  Wang]{du2021bilinear}
S.~Du, S.~Kakade, J.~Lee, S.~Lovett, G.~Mahajan, W.~Sun, and R.~Wang.
\newblock Bilinear classes: A structural framework for provable generalization
  in rl.
\newblock In \emph{International Conference on Machine Learning}, pages
  2826--2836. PMLR, 2021.

\bibitem[Du et~al.(2020)Du, Lee, Mahajan, and Wang]{du20}
S.~S. Du, J.~D. Lee, G.~Mahajan, and R.~Wang.
\newblock Agnostic {Q}-learning with function approximation in deterministic
  systems: Near-optimal bounds on approximation error and sample complexity.
\newblock In \emph{Advances in Neural Information Processing Systems},
  volume~33, pages 22327--22337. Curran Associates, Inc., 2020.
\newblock URL
  \url{https://proceedings.neurips.cc/paper/2020/file/fd5c905bcd8c3348ad1b35d7231ee2b1-Paper.pdf}.

\bibitem[Fei et~al.(2021)Fei, Yang, and Wang]{fei2021risk}
Y.~Fei, Z.~Yang, and Z.~Wang.
\newblock Risk-sensitive reinforcement learning with function approximation: A
  debiasing approach.
\newblock In \emph{International Conference on Machine Learning}, pages
  3198--3207. PMLR, 2021.

\bibitem[Feng et~al.(2021)Feng, Yin, Agarwal, and Yang]{feng2021provably}
F.~Feng, W.~Yin, A.~Agarwal, and L.~Yang.
\newblock Provably correct optimization and exploration with non-linear
  policies.
\newblock In \emph{International Conference on Machine Learning}, pages
  3263--3273. PMLR, 2021.

\bibitem[Filippi et~al.(2010)Filippi, Cappe, Garivier, and
  Szepesv\'{a}ri]{filippi10}
S.~Filippi, O.~Cappe, A.~Garivier, and C.~Szepesv\'{a}ri.
\newblock Parametric bandits: The generalized linear case.
\newblock In \emph{Advances in Neural Information Processing Systems},
  volume~23. Curran Associates, Inc., 2010.
\newblock URL
  \url{https://proceedings.neurips.cc/paper/2010/file/c2626d850c80ea07e7511bbae4c76f4b-Paper.pdf}.

\bibitem[Forster(2002)]{forster02upp}
J.~Forster.
\newblock A linear lower bound on the unbounded error probabilistic
  communication complexity.
\newblock \emph{J. Comput. Syst. Sci.}, 65\penalty0 (4):\penalty0 612--625,
  2002.
\newblock URL \url{https://doi.org/10.1016/S0022-0000(02)00019-3}.

\bibitem[Foster et~al.(2020)Foster, Rakhlin, Simchi-Levi, and Xu]{foster20}
D.~J. Foster, A.~Rakhlin, D.~Simchi-Levi, and Y.~Xu.
\newblock Instance-dependent complexity of contextual bandits and reinforcement
  learning: A disagreement-based perspective.
\newblock \emph{arXiv}, 2010.03104, 2020.
\newblock URL \url{https://arxiv.org/abs/2010.03104}.

\bibitem[Foster et~al.(2021)Foster, Kakade, Qian, and
  Rakhlin]{foster2021statistical}
D.~J. Foster, S.~M. Kakade, J.~Qian, and A.~Rakhlin.
\newblock The statistical complexity of interactive decision making.
\newblock \emph{arXiv preprint arXiv:2112.13487}, 2021.

\bibitem[Hanneke and Yang(2015)]{hanneke15}
S.~Hanneke and L.~Yang.
\newblock Minimax analysis of active learning.
\newblock \emph{Journal of Machine Learning Research}, 16\penalty0
  (109):\penalty0 3487--3602, 2015.
\newblock URL \url{http://jmlr.org/papers/v16/hanneke15a.html}.

\bibitem[Hodges et~al.(1997)]{hodges1997shorter}
W.~Hodges et~al.
\newblock \emph{A shorter model theory}.
\newblock Cambridge university press, 1997.

\bibitem[Huang et~al.(2021{\natexlab{a}})Huang, Huang, Kakade, Lee, Lei, Wang,
  and Yang]{huang2021going}
B.~Huang, K.~Huang, S.~Kakade, J.~D. Lee, Q.~Lei, R.~Wang, and J.~Yang.
\newblock Going beyond linear rl: Sample efficient neural function
  approximation.
\newblock \emph{Advances in Neural Information Processing Systems}, 34,
  2021{\natexlab{a}}.

\bibitem[Huang et~al.(2021{\natexlab{b}})Huang, Lee, Wang, and
  Yang]{huang2021towards}
B.~Huang, J.~D. Lee, Z.~Wang, and Z.~Yang.
\newblock Towards general function approximation in zero-sum markov games.
\newblock \emph{arXiv preprint arXiv:2107.14702}, 2021{\natexlab{b}}.

\bibitem[Huang et~al.(2021{\natexlab{c}})Huang, Kakade, Lee, and
  Lei]{huang2021short}
K.~Huang, S.~M. Kakade, J.~D. Lee, and Q.~Lei.
\newblock A short note on the relationship of information gain and eluder
  dimension.
\newblock \emph{arXiv preprint arXiv:2107.02377}, 2021{\natexlab{c}}.

\bibitem[Ishfaq et~al.(2021)Ishfaq, Cui, Nguyen, Ayoub, Yang, Wang, Precup, and
  Yang]{ishfaq2021randomized}
H.~Ishfaq, Q.~Cui, V.~Nguyen, A.~Ayoub, Z.~Yang, Z.~Wang, D.~Precup, and
  L.~Yang.
\newblock Randomized exploration in reinforcement learning with general value
  function approximation.
\newblock In \emph{International Conference on Machine Learning}, pages
  4607--4616. PMLR, 2021.

\bibitem[Jiang et~al.(2017)Jiang, Krishnamurthy, Agarwal, Langford, and
  Schapire]{jiang17}
N.~Jiang, A.~Krishnamurthy, A.~Agarwal, J.~Langford, and R.~E. Schapire.
\newblock Contextual decision processes with low {B}ellman rank are
  {PAC}-learnable.
\newblock In \emph{Proceedings of the 34th International Conference on Machine
  Learning}, volume~70 of \emph{Proceedings of Machine Learning Research},
  pages 1704--1713, International Convention Centre, Sydney, Australia, 06--11
  Aug 2017. PMLR.
\newblock URL \url{http://proceedings.mlr.press/v70/jiang17c.html}.

\bibitem[Jin et~al.(2021{\natexlab{a}})Jin, Liu, and Miryoosefi]{jin21}
C.~Jin, Q.~Liu, and S.~Miryoosefi.
\newblock Bellman eluder dimension: New rich classes of {RL} problems, and
  sample-efficient algorithms.
\newblock \emph{arXiv}, 2102.00815, 2021{\natexlab{a}}.
\newblock URL \url{https://arxiv.org/abs/2102.00815}.

\bibitem[Jin et~al.(2021{\natexlab{b}})Jin, Liu, and Yu]{jin2021power}
C.~Jin, Q.~Liu, and T.~Yu.
\newblock The power of exploiter: Provable multi-agent rl in large state
  spaces.
\newblock \emph{arXiv preprint arXiv:2106.03352}, 2021{\natexlab{b}}.

\bibitem[Kamath et~al.(2020)Kamath, Montasser, and
  Srebro]{kamath2020approximate}
P.~Kamath, O.~Montasser, and N.~Srebro.
\newblock Approximate is good enough: Probabilistic variants of dimensional and
  margin complexity.
\newblock In \emph{Conference on Learning Theory}, pages 2236--2262. PMLR,
  2020.

\bibitem[Kveton et~al.(2020)Kveton, Zaheer, Szepesvari, Li, Ghavamzadeh, and
  Boutilier]{kveton2020randomized}
B.~Kveton, M.~Zaheer, C.~Szepesvari, L.~Li, M.~Ghavamzadeh, and C.~Boutilier.
\newblock Randomized exploration in generalized linear bandits.
\newblock In \emph{International Conference on Artificial Intelligence and
  Statistics}, pages 2066--2076. PMLR, 2020.

\bibitem[Li et~al.(2017)Li, Lu, and Zhou]{li17}
L.~Li, Y.~Lu, and D.~Zhou.
\newblock Provably optimal algorithms for generalized linear contextual
  bandits.
\newblock In \emph{Proceedings of the 34th International Conference on Machine
  Learning}, volume~70 of \emph{Proceedings of Machine Learning Research},
  pages 2071--2080. PMLR, 06--11 Aug 2017.
\newblock URL \url{http://proceedings.mlr.press/v70/li17c.html}.

\bibitem[Mahajan and Lovett(2021)]{mahajan21}
G.~Mahajan and S.~Lovett.
\newblock Personal communication.
\newblock 2021.

\bibitem[Mou et~al.(2020)Mou, Wen, and Chen]{mou2020sample}
W.~Mou, Z.~Wen, and X.~Chen.
\newblock On the sample complexity of reinforcement learning with policy space
  generalization.
\newblock \emph{arXiv preprint arXiv:2008.07353}, 2020.

\bibitem[Mubayi and Suk(2017)]{mubayi2017survey}
D.~Mubayi and A.~Suk.
\newblock A survey of quantitative bounds for hypergraph ramsey problems. arxiv
  e-prints (july 2017).
\newblock \emph{arXiv preprint math.CO/1707.04229}, 2017.

\bibitem[Osband and Van~Roy(2014)]{osband14}
I.~Osband and B.~Van~Roy.
\newblock Model-based reinforcement learning and the eluder dimension.
\newblock In \emph{Advances in Neural Information Processing Systems},
  volume~27. Curran Associates, Inc., 2014.
\newblock URL
  \url{https://proceedings.neurips.cc/paper/2014/file/1141938ba2c2b13f5505d7c424ebae5f-Paper.pdf}.

\bibitem[Russo and Van~Roy(2013)]{russo13}
D.~Russo and B.~Van~Roy.
\newblock Eluder dimension and the sample complexity of optimistic exploration.
\newblock In \emph{Advances in Neural Information Processing Systems},
  volume~26. Curran Associates, Inc., 2013.
\newblock URL
  \url{https://proceedings.neurips.cc/paper/2013/file/41bfd20a38bb1b0bec75acf0845530a7-Paper.pdf}.

\bibitem[Russo and Van~Roy(2016)]{russo2016information}
D.~Russo and B.~Van~Roy.
\newblock An information-theoretic analysis of thompson sampling.
\newblock \emph{The Journal of Machine Learning Research}, 17\penalty0
  (1):\penalty0 2442--2471, 2016.

\bibitem[Shalev-Shwartz and Ben-David(2014)]{shalev2014understanding}
S.~Shalev-Shwartz and S.~Ben-David.
\newblock \emph{Understanding machine learning: From theory to algorithms}.
\newblock Cambridge university press, 2014.
\newblock URL
  \url{https://www.cs.huji.ac.il/~shais/UnderstandingMachineLearning/}.

\bibitem[Shelah(1990)]{shelah1990classification}
S.~Shelah.
\newblock \emph{Classification theory: and the number of non-isomorphic
  models}.
\newblock Elsevier, 1990.

\bibitem[Srinivas et~al.(2009)Srinivas, Krause, Kakade, and
  Seeger]{srinivas2009gaussian}
N.~Srinivas, A.~Krause, S.~M. Kakade, and M.~Seeger.
\newblock Gaussian process optimization in the bandit setting: No regret and
  experimental design.
\newblock \emph{arXiv preprint arXiv:0912.3995}, 2009.

\bibitem[Sun et~al.(2019)Sun, Jiang, Krishnamurthy, Agarwal, and
  Langford]{sun2019model}
W.~Sun, N.~Jiang, A.~Krishnamurthy, A.~Agarwal, and J.~Langford.
\newblock Model-based rl in contextual decision processes: Pac bounds and
  exponential improvements over model-free approaches.
\newblock In \emph{Conference on learning theory}, pages 2898--2933. PMLR,
  2019.

\bibitem[Vershynin(2018)]{vershynin2018high}
R.~Vershynin.
\newblock \emph{High-dimensional probability: An introduction with applications
  in data science}, volume~47.
\newblock Cambridge university press, 2018.
\newblock URL
  \url{https://www.math.uci.edu/~rvershyn/papers/HDP-book/HDP-book.html}.

\bibitem[Wang et~al.(2020)Wang, Salakhutdinov, and Yang]{wang20}
R.~Wang, R.~R. Salakhutdinov, and L.~Yang.
\newblock Reinforcement learning with general value function approximation:
  Provably efficient approach via bounded eluder dimension.
\newblock In \emph{Advances in Neural Information Processing Systems},
  volume~33, pages 6123--6135. Curran Associates, Inc., 2020.
\newblock URL
  \url{https://proceedings.neurips.cc/paper/2020/file/440924c5948e05070663f88e69e8242b-Paper.pdf}.

\bibitem[Wang et~al.(2019)Wang, Wang, Du, and Krishnamurthy]{wang2019optimism}
Y.~Wang, R.~Wang, S.~S. Du, and A.~Krishnamurthy.
\newblock Optimism in reinforcement learning with generalized linear function
  approximation.
\newblock \emph{arXiv preprint arXiv:1912.04136}, 2019.

\bibitem[Wen and Van~Roy(2013)]{wen13}
Z.~Wen and B.~Van~Roy.
\newblock Efficient exploration and value function generalization in
  deterministic systems.
\newblock In \emph{Advances in Neural Information Processing Systems},
  volume~26. Curran Associates, Inc., 2013.
\newblock URL
  \url{https://proceedings.neurips.cc/paper/2013/file/bad5f33780c42f2588878a9d07405083-Paper.pdf}.

\bibitem[Xu and Tewari(2021)]{xu2021representation}
Z.~Xu and A.~Tewari.
\newblock Representation learning beyond linear prediction functions.
\newblock \emph{Advances in Neural Information Processing Systems}, 34, 2021.

\end{thebibliography}

\newpage

\newpage
\appendix

\section{Proof of \Cref{prop:dc-ineq}}\label{apdx:proof-dc-ineq}

The proof uses the following result, which is straightforward by definition.

\begin{proposition}\label{prop:dc-prop} $\Sigma$-rank satisfies the following for all $\calF\subseteq (\calX \to \bbR)$.
\vspace{-\topsep}
\begin{enumerate}[leftmargin=0.75cm, label={(\roman*)}]
    \item For all $R < R'$ : $\Sigma\ddc(\calF,R) \ge \Sigma\ddc(\calF,R')$.
    \item For all $\Sigma_1 \subseteq \Sigma_2$ and $R > 0$: $\Sigma_1\ddc(\calF, R) \ge \Sigma_2\ddc(\calF, R)$.
\end{enumerate}
\end{proposition}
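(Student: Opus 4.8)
The plan is to derive both inequalities directly from \Cref{def:sigma-dc} by elementary monotonicity, in each case reducing the statement about $\Sigma\ddc$ to the analogous statement for a single fixed activation $\sigma$ and then taking a minimum over $\sigma \in \Sigma$.

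First, for part (i) I would fix an arbitrary $\sigma \in \Sigma$ and show that $\sigma\ddc(\calF, R) \ge \sigma\ddc(\calF, R')$ whenever $R < R'$. Writing $d = \sigma\ddc(\calF, R)$ with witnessing maps $\phi : \calX \to \calB_d(1)$ and $w : \calF \to \calB_d(R)$ as in \eqref{eqn:sigma-rank}, the key observation is that $R < R'$ gives the ball containment $\calB_d(R) \subseteq \calB_d(R')$, so the very same pair $(\phi, w)$ is also a valid scale-$R'$ representation; hence $\sigma\ddc(\calF, R') \le d = \sigma\ddc(\calF, R)$. To pass from this pointwise statement to $\Sigma\ddc$, I would then invoke the elementary fact that if $a_\sigma \ge b_\sigma$ for every $\sigma \in \Sigma$ then $\min_\sigma a_\sigma \ge \min_\sigma b_\sigma$ (since $\min_{\sigma'} b_{\sigma'}$ is a common lower bound for all the $a_\sigma$). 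This yields $\Sigma\ddc(\calF, R) \ge \Sigma\ddc(\calF, R')$.

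For part (ii), the argument is even shorter and requires no representation-theoretic input at all: since $\Sigma_1 \subseteq \Sigma_2$, the minimum $\min_{\sigma \in \Sigma_2} \sigma\ddc(\calF, R)$ defining $\Sigma_2\ddc(\calF, R)$ ranges over a superset of the activations used for $\Sigma_1\ddc(\calF, R)$, and a minimum over a larger index set can only be smaller. This immediately gives $\Sigma_2\ddc(\calF, R) \le \Sigma_1\ddc(\calF, R)$, which is the claim.

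I do not expect any genuine obstacle, as the proposition is essentially bookkeeping with the definition. The only step that warrants a moment of care is the interchange between the pointwise inequality over activations and the minimum over $\Sigma$ in part (i); this is handled by the elementary min-monotonicity fact noted above. Everything else follows purely from the nesting of Euclidean balls and from $\Sigma\ddc$ being defined as a minimum over $\Sigma$.
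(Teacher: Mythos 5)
Your proof is correct and matches the paper's intent exactly: the paper states \Cref{prop:dc-prop} as ``straightforward by definition'' without writing out a proof, and your argument (ball containment $\calB_d(R) \subseteq \calB_d(R')$ for part~(i), minimum over a larger index set for part~(ii)) is precisely the routine definitional bookkeeping being alluded to. The only negligible edge case left implicit is that the inequality in part~(i) holds trivially when $\sigma\ddc(\calF,R) = \infty$.
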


\paragraph{Proof of \Cref{prop:dc-ineq}.}
The first two inequalities follow from immediately from \autoref{prop:dc-prop}. For the last inequality, let $\phi : \calX \to \bbR^d$ and $w : \calF \to \bbR^d$ be the mappings that witness $\sigma\ddc(\calF,R) = d$ for some $\sigma \in \calM$. Thus, we have that for all $(x,f) \in \calX \times \calF$, it holds that, $f(x) = \sigma(\inangle{w(f), \phi(x)})$. Let $t \in \bbR$ be any value such that $\sigma(t) = 0$. From monotonicity of $\sigma$ and the fact that $\calF$ is $\sbit$-valued, we have that for all $(x,f) \in \calX \times \calF$, $f(x) = \sign(\inangle{w(f), \phi(x)} - t)$.
Thus, $\sign\ddc(\calF) \le \sigma\ddc(\calF, R) + 1$. We next move to the examples witnessing the separations.
\vspace{-\topsep}
\begin{itemize}[leftmargin=0.5cm]
\item {\boldmath $\dc \gg \SigmaSM_{\mu}\ddc$}: Consider $\calX = [0,1]$ and $\Fexp := \set{f_\theta : x\mapsto \sigma(\theta \cdot x) \mid |\theta| \le 1}$, where $\sigma(\cdot)$ is defined piecewise as follows: $\sigma(z) = e^z$ for $z \in [0,1]$; outside of $[0,1]$, we extend the function linearly with slope $1$ when $z< 0$ and slope $e$ when $z> 1$. Since $\sigma \in \calM_1^e$, we have $\calM_1^e\ddc(\Fexp,1) = 1$ and hence $\calM_{1/e}\ddc(\calF,e) = 1$ (from \autoref{eq:scaling}).

Consider the points $\{x_j := j/d \mid j \in \set{1,\ldots,d}\}$ and the functions $\{f_{\theta_i} := f_{i/d} \mid j\in \set{1,\ldots,d}\}$. The matrix $A$ given by $A_{ij} := f_{\theta_i}(x_j) = (e^{j/d^2})^{i}$ is a Vandermonde matrix, and hence $\mathrm{rank}(A) = d$.
Since $d$ can be chosen to be arbitrarily large, it follows that $\dc(\Fexp, R) = \infty$ for all $R > 0$.

\item {\boldmath $\SigmaSM_{\mu}\ddc \gg \calM_0\ddc$}: Consider $\calX = [0,1]$ and $\Frelu := \set{f_{a,b} : x \mapsto \relu(a x + b) \mid a^2 + b^2 \le 1}$, the class of ReLUs with biases in $1$ dimension. Clearly, $\calM_0\ddc(\Frelu, \sqrt{2}) = 2$.

Suppose for contradiction that for some $\sigma \in \calM_{\mu}$, it holds that $\sigma\ddc(\Frelu,R) = d$ with mappings $\phi: \calX \to \calB_d(1)$ and $w : \Frelu \to \calB_d(R)$ for some $R > 0$. Consider $n = d+2$ points $0 < x_1 < x_2 < \cdots < x_n < 1$. For each $i$, $f_i(x) := \relu(x-x_{i-1}) \in \Frelu$ (let $x_0 := 0$) satisfies $f_i(x_j) = \sigma(\inangle{w(f_i), \phi(x_j)}) = 0$ for all $j < i$, and $f_i(x_j) = \sigma(\inangle{w(f_i), \phi(x_j)}) > 0$ for all $j \ge i$. That is, $\inangle{w(f_i), \phi(x_j)} = \sigma^{-1}(0)$ for all $j < i$ (since $\sigma:\bbR\to\bbR$ is strictly monotone, $\sigma^{-1}(0)$ is uniquely defined) and $\inangle{w(f_i), \phi(x_j)} > \sigma^{-1}(0)$ for all $j \ge i$.
Consider the matrix $A \in \bbR^{n \times n}$ given by $A_{ij} := \inangle{w(f_i), \phi(x_j)}$. By definition, $\mathrm{rank}(A) \le d$. On the other hand, we have that $A - \sigma^{-1}(0) \cdot J$ is an upper-triangular matrix  with non-zero diagonals, where $J \in \bbR^{n \times n}$ is the all-$1$s matrix. It follows that $\mathrm{rank}(A) \ge n-1$ and hence $n \le d+1$. This is a contradiction and hence $\calM_\mu(\Frelu, R) = \infty$ for all $\mu, R > 0$. 

\item {\boldmath $\calM_0\ddc \gg \mathrm{sign}\ddc$}: Consider $\calX = [0,1]$ and $\Fth := \{f_t : x \mapsto \sign(x-t) \mid t \in [0,1]\}$, the class of Thresholds. Clearly, $\signrank(\Fth) = 2$.

We briefly sketch the argument showing $\calM_0\ddc(\calF,R) = \infty$ for any $R > 0$. Suppose for some $\sigma \in \calM_0$ it holds that $\sigma\ddc(\Fth,R) = d$. Then it is possible to realize $\Fth$ as halfspaces {\em with margin}, since $\sigma$ is $1$-Lipschitz. This implies that $\Fth$ is online learnable with a finite mistake bound (via the Perceptron algorithm). This is a contradiction since $\Fth$ is not online learnable with a finite mistake bound \citep[see \eg][Lemma~21.6 and Ex.~21.4]{shalev2014understanding}.\qedhere
\end{itemize}

\section{Proof of \texorpdfstring{\autoref{prop:eluder-dc-sm}}{Proposition~\ref{prop:eluder-dc-sm}}}\label{apx:eluder-proofs}

\autoref{prop:eluder-dc-sm} follows from \autoref{clm:eluder-lin-ub} and \autoref{clm:eluder-genlin-ub} below, where for clarity, we keep track of the norms of $\phi$ and $w$ separately. Our improvement comes about from the following lemma, which is inspired by a similar step in \cite{russo13}.

\begin{lemma}[Inspired by \cite{russo13}]\label{lem:ineq}
	For all $k\ge 1$ and $\alpha, \beta > 0$, if $(1+\alpha)^k \le 1+\beta k$, then $k \le \frac{e}{e-1} \cdot \frac{1+\alpha}{\alpha} \cdot \ln \inparen{\frac{2\beta(1+\alpha)}{\alpha}}$.
\end{lemma}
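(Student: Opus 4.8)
The plan is to take logarithms, convert the hypothesis into a self-referential inequality in $k$, and resolve that inequality with a tangent-line (Young-type) trick whose constant is tuned to produce the factor $e/(e-1)$.

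First I would set $a := \ln(1+\alpha)$ and take logarithms of $(1+\alpha)^k \le 1+\beta k$ to obtain $ak \le \ln(1+\beta k)$. Before bounding $k$, it is worth recording a feasibility observation: by Bernoulli's inequality $(1+\alpha)^k \ge 1 + \alpha k$ for $k \ge 1$, so the hypothesis forces $1 + \alpha k \le 1 + \beta k$, i.e. $\beta \ge \alpha$. This fact is used twice — it guarantees that the right-hand side of the claimed bound is positive (so the statement is not vacuous), and it licenses passing between $1/a$ and $(1+\alpha)/\alpha$ at the very end.

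The heart of the argument is controlling $\ln(1+\beta k)$. In the main case $\beta k \ge 1$ I would use $1+\beta k \le 2\beta k$ to get $ak \le \ln(2\beta) + \ln k$. The main obstacle is precisely that this inequality is transcendental: $k$ appears both linearly and inside a logarithm, so there is no closed-form inversion. I resolve it with the elementary bound $\ln k \le \epsilon k - 1 - \ln\epsilon$ (which is just $\ln y \le y-1$ applied to $y = \epsilon k$), and crucially take $\epsilon = a/e$; with this choice the additive constant collapses, since $-1 - \ln(a/e) = -\ln a$, yielding $\ln k \le \frac{a}{e}k - \ln a$. Substituting gives $ak - \frac{a}{e}k \le \ln(2\beta) - \ln a = \ln(2\beta/a)$, and dividing by $a(1-1/e) = a\frac{e-1}{e}$ produces exactly $k \le \frac{e}{e-1}\cdot\frac{1}{a}\ln(2\beta/a)$. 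The degenerate case $\beta k < 1$ is immediate: then $(1+\alpha)^k < 2$ forces $k < (\ln 2)/a$, which already lies below the target bound.

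Finally I would convert to the stated form using the standard inequality $\ln(1+\alpha) \ge \frac{\alpha}{1+\alpha}$, hence $1/a \le (1+\alpha)/\alpha$. Because $\beta \ge \alpha \ge \ln(1+\alpha) = a$, the argument $2\beta/a$ exceeds $1$, so $\ln(2\beta/a) > 0$; consequently, replacing $1/a$ by the larger $(1+\alpha)/\alpha$ both in the prefactor and inside the logarithm only increases the right-hand side, giving $k \le \frac{e}{e-1}\cdot\frac{1+\alpha}{\alpha}\ln\!\big(\tfrac{2\beta(1+\alpha)}{\alpha}\big)$ as claimed. I expect the only genuine subtlety to be the tuned choice $\epsilon = a/e$, which is exactly what makes the leading constant collapse to $e/(e-1)$ rather than some messier expression.
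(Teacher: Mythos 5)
Your proof is correct and takes essentially the same route as the paper's: the same case split on $\beta k \ge 1$ versus $\beta k < 1$, the same relaxation $1+\beta k \le 2\beta k$ followed by taking logarithms, and the same tuned tangent-line bound on the logarithm (the paper applies $\ln x \le x/e$ to $x = \frac{k\alpha}{1+\alpha}$, which is exactly your $\epsilon = a/e$ trick with $\ln(1+\alpha)$ replaced by $\frac{\alpha}{1+\alpha}$ at the outset rather than at the end). The only difference is presentational: you defer the relaxation $\ln(1+\alpha) \ge \frac{\alpha}{1+\alpha}$ to the final step, which yields a marginally sharper intermediate bound in terms of $1/\ln(1+\alpha)$ but is otherwise the identical argument.
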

\begin{proof}
We consider two cases. In both cases, we use that $\ln(1 + \alpha) \ge \frac{\alpha}{1+\alpha}$ holds for all $\alpha > 0$. Also note that $(1+\alpha)^k > 1+k\alpha$ and hence $\beta > \alpha$.\\

\noindent {\em Case 1:} If $\beta k < 1$, we have $(1+\alpha)^k \le 2$ and hence $k \le \frac{\ln 2}{\ln(1+\alpha)} \le \ln 2 \cdot \frac{1+\alpha}{\alpha}$. Since $\beta > \alpha$, we have that $\ln 2 \le \frac{e}{e-1} \cdot \ln \inparen{\frac{2\beta(1+\alpha)}{\alpha}}$, thereby completing the proof for this case.\\

\noindent {\em Case 2:} If $\beta k \ge 1$, we have $(1+\alpha)^k \le 2 \beta k$. Taking logarithms, we have $k \ln (1+\alpha) ~\le~ \ln k + \ln 2\beta$. Hence, we have $\frac{k\alpha}{1+\alpha} ~\le~ \ln \inparen{\frac{k\alpha}{1+\alpha}} + \ln \inparen{\frac{2 \beta (1+\alpha)}{\alpha}}$. Using $\ln x \le \frac{x}{e}$ for all $x \ge 0$, we get for $x = \frac{k \alpha}{1+\alpha}$ that $k \cdot \frac{\alpha}{1+\alpha} \cdot \inparen{1 - \frac{1}{e}} ~\le~ \ln\inparen{\frac{2\beta(1+\alpha)}{\alpha}}$, thereby completing the proof.
\end{proof}

\begin{claim}\label{clm:eluder-lin-ub}
Suppose $\dc(\calF,R_\phi R_w) = d$ is witnessed by mappings $\phi:\calX\to \calB_d(R_\phi)$ and $w: \calF\to \calB_d(R_w)$. Then $\eEdim(\calF, \eps) ~\le~ \frac{3e}{e-1} \cdot d \cdot \log \inparen{\frac{24 R_\phi^2 R_w^2}{\eps^2}}$ for all $\eps < R_\phi R_w$.
\end{claim}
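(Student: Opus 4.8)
The plan is to reduce the statement to a standard elliptical-potential (log-determinant) argument and then invoke \Cref{lem:ineq} to turn the resulting geometric growth of the potential into a bound of the form $m \lesssim d\log(R_\phi^2 R_w^2/\eps^2)$ on the length $m$ of any eluder sequence.

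First I would pass to linear algebra. Since $\dc(\calF, R_\phi R_w)=d$ is witnessed by $\phi, w$, every $f\in\calF$ satisfies $f(x)=\inangle{w(f),\phi(x)}$. Given a sequence $(x_1,f_1),\dots,(x_m,f_m)$ witnessing $\eEdim_{f^\star}(\calF,\eps)=m$, set $\theta_i:=w(f_i)-w(f^\star)$ and $z_j:=\phi(x_j)$, so that $\norm{\theta_i}\le 2R_w$, $\norm{z_j}\le R_\phi$, and $f_i(x_j)-f^\star(x_j)=\inangle{\theta_i,z_j}$. The two defining conditions of \Cref{def:eluder} then read $\inangle{\theta_i,z_i}^2>\eps^2$ and $\sum_{j<i}\inangle{\theta_i,z_j}^2\le\eps^2$.

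Next I would introduce the regularized Gram matrices $V_i:=\lambda I+\sum_{j<i}z_jz_j^\top$ for a parameter $\lambda>0$ to be chosen, so that $V_{i+1}=V_i+z_iz_i^\top$. By Cauchy--Schwarz, $\eps^2<\inangle{\theta_i,z_i}^2\le (\theta_i^\top V_i\theta_i)(z_i^\top V_i^{-1}z_i)$, and since $\theta_i^\top V_i\theta_i=\lambda\norm{\theta_i}^2+\sum_{j<i}\inangle{\theta_i,z_j}^2\le 4\lambda R_w^2+\eps^2$, this yields the per-step lower bound $z_i^\top V_i^{-1}z_i>\alpha:=\eps^2/(4\lambda R_w^2+\eps^2)$. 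The matrix determinant lemma gives $\det V_{i+1}=\det V_i\cdot(1+z_i^\top V_i^{-1}z_i)$, so telescoping yields $\det V_{m+1}\ge\lambda^d(1+\alpha)^m$; on the other hand AM--GM on the eigenvalues gives $\det V_{m+1}\le(\tr V_{m+1}/d)^d\le(\lambda+mR_\phi^2/d)^d$. Combining these two bounds and taking $d$-th roots produces $(1+\alpha)^{m/d}\le 1+\tfrac{m}{d}\cdot\tfrac{R_\phi^2}{\lambda}$.

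This is precisely the hypothesis of \Cref{lem:ineq} with $k=m/d$ and $\beta=R_\phi^2/\lambda$ (assuming $m\ge d$; the case $m<d$ is immediate since the target bound already exceeds $d$ once $\eps<R_\phi R_w$). The final step is to choose $\lambda=\eps^2/(4R_w^2)$, which makes $\alpha=\tfrac12$, hence $\tfrac{1+\alpha}{\alpha}=3$ and $\beta=4R_\phi^2R_w^2/\eps^2$, so that $\tfrac{2\beta(1+\alpha)}{\alpha}=24R_\phi^2R_w^2/\eps^2$; \Cref{lem:ineq} then gives $m/d\le\tfrac{3e}{e-1}\ln(24R_\phi^2R_w^2/\eps^2)$, which is the claim. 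I expect the main subtlety to be the bookkeeping of constants, namely verifying that the choice of $\lambda$ yields exactly the prefactor $\tfrac{3e}{e-1}$ and the argument $24R_\phi^2R_w^2/\eps^2$ of the logarithm; the structural steps (Cauchy--Schwarz, determinant telescoping, AM--GM on the trace, and the reduction to \Cref{lem:ineq} via the $d$-th root) are routine.
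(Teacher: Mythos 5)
Your proof is correct and follows essentially the same route as the paper's: the same regularized Gram matrices $V_i = \lambda I + \sum_{j<i}\phi_j\phi_j^\top$ with $\lambda = \eps^2/(4R_w^2)$, the same determinant potential argument (matrix determinant lemma for the lower bound, AM--GM on the trace for the upper bound), and the same invocation of \Cref{lem:ineq} with $k = m/d$, $\alpha = \tfrac12$, $\beta = 4R_\phi^2R_w^2/\eps^2$. The only cosmetic difference is that you obtain the per-step bound $\norm{\phi_i}_{V_i^{-1}}^2 > \tfrac12$ by applying Cauchy--Schwarz directly to the witness $\theta_i$, whereas the paper phrases it as a maximization over the ellipsoid $\{w : \norm{w}_{V_i}^2 \le \eps^2 + 4\lambda R_w^2\}$ resolved by convex duality; these are the same inequality, and your explicit handling of the $m < d$ case (needed since \Cref{lem:ineq} assumes $k \ge 1$) is if anything slightly more careful than the paper.
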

\begin{proof}
Suppose $\eEdim_{f^\star}(\calF,\eps) = m$ witnessed by the sequence $(x_1, f_1), \dots, (x_m, f_m) \in \calX\times \calF$, for some $f^\star \in \calF$ and $\eps > 0$. Denote $w_i := w(f_i) -w(f^\star)$, and $\phi_i := \phi (x_i)$. It follows that $w_i \in \calB_d(2R_w)$ and $\phi_i \in \calB_d(R_\phi)$. From \autoref{eq:eluder-def}, we have that for all $i\in [m]$:
\begin{equation}\label{eq:cvx_edim}
	\max_{w\in \bbR^d} \inbrace{\inabs{\inangle{w, \phi_i}}: \sum_{j< i} \inangle{w, \phi_i}^2 \le \eps^2, \|w\|_2 \le 2R_w} ~>~ \eps
\end{equation}

\noindent Let $V_i := \lambda I + \sum_{j<i} \phi_j \phi_j^\top$. The above equation implies that for all $i\in[m]$:
\[ \max_{w\in \bbR^d} \inbrace{\inabs{\inangle{w, \phi_i}}: \norm{w}_{V_i}^2 \le \eps^2 + \lambda\cdot 4 R_w^2} ~>~ \eps,\]
which via convex duality and setting $\lambda := \eps^2/(4R_w^2)$, further implies that for all $i\in[m]$:
\[\norm{\phi_i}_{V_i^{-1}}^2 > \frac{\eps^2}{\eps^2 + \lambda \cdot 4 R_w^2} = \frac{1}{2}, \]

\noindent We will use a potential argument to track the quantity $\det(V_i)$. First, we have the upper bound:
\[\det(V_{m+1}) \le \inparen{\frac{\tr(V_{m+1})}{d}}^d \le \inparen{\frac{\lambda d+ mR_\phi^2}{d}}^d = \lambda^d \inparen{1+ \frac{mR_\phi^2}{\lambda d}}^d = \lambda^d \inparen{1+ \frac{m}{d}\cdot \frac{4R_w^2 R_\phi^2}{\eps^2}}^d,\]
using, AM-GM inequality, the linearity of trace, and the definition of $\lambda$.
\noindent We also have the lower bound, using the Matrix Determinant Lemma:
\[\det(V_{m+1}) = \det(V_{m} + \phi_m \phi_m^\top) = \det(V_m)\cdot (1+ \norm{\phi_{m}}_{V_m^{-1}}^2) \ge \lambda^d \inparen{3/2}^{m},\]
where the last step follows by induction. Combining the upper and lower bounds, we get that $(3/2)^{m/d} \le 1+\frac{m}{d}\cdot  \frac{4R_w^2R_\phi^2}{\eps^2}$. The claim follows from an application of \autoref{lem:ineq} with $k= \frac{m}{d}$, $\alpha = \frac12$ and $\beta = \frac{4R_\phi^2 R_w^2}{\eps^2}$.
\end{proof}

\begin{claim}\label{clm:eluder-genlin-ub}
For $\sigma \in \calM_{\mu}^L$, suppose $\sigma\ddc(\calF,R_\phi R_w) = d$ is witnessed by mappings $\phi:\calX\to \calB_d(R_\phi)$ and $w: \calF\to \calB_d(R_w)$. Then $\eEdim(\calF, \eps) ~\le~ \frac{3e}{e-1} \cdot d \cdot \frac{L^2}{\mu^2} \cdot \log \inparen{\frac{24 R_\phi^2 R_w^2 L^2}{\eps^2}}$ for all $\eps < R_\phi R_w L$.
\end{claim}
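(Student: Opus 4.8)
The plan is to reduce this generalized-linear instance to the purely linear estimate of \autoref{clm:eluder-lin-ub}, paying for the reduction with factors of $L$ and $\mu$. Suppose $\eEdim_{f^\star}(\calF,\eps) = m$ is witnessed by $(x_1,f_1),\dots,(x_m,f_m)$, and write $w_i := w(f_i) - w(f^\star)$ and $\phi_j := \phi(x_j)$, so that $\phi_j \in \calB_d(R_\phi)$ and $w_i \in \calB_d(2R_w)$. Since every $\sigma \in \calM_\mu^L$ satisfies $\mu\inabs{a-b} \le \inabs{\sigma(a)-\sigma(b)} \le L\inabs{a-b}$ for all $a,b$, applying this with $a = \inangle{w(f_i),\phi_j}$ and $b = \inangle{w(f^\star),\phi_j}$ converts the two conditions of \Cref{def:eluder} into the linear conditions
\begin{equation*}
\inabs{\inangle{w_i,\phi_i}} > \eps/L, \qquad \sum_{j<i}\inangle{w_i,\phi_j}^2 \le \eps^2/\mu^2 .
\end{equation*}
This is exactly a linear eluder configuration, except that the witnessing threshold ($\eps/L$) and the summed-error budget ($\eps/\mu$) now live on different scales, reflecting the best and worst slopes of $\sigma$.

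Next I would run the identical potential argument. Set $V_i := \lambda I + \sum_{j<i}\phi_j\phi_j^\top$. The constraints $\sum_{j<i}\inangle{w_i,\phi_j}^2 \le \eps^2/\mu^2$ and $\norm{w_i}_2 \le 2R_w$ give $\norm{w_i}_{V_i}^2 \le \eps^2/\mu^2 + 4\lambda R_w^2$, so by convex duality the existence of $w_i$ forces $\sqrt{\eps^2/\mu^2 + 4\lambda R_w^2}\,\norm{\phi_i}_{V_i^{-1}} > \eps/L$. Choosing $\lambda := \eps^2/(4\mu^2 R_w^2)$ makes the prefactor equal $\sqrt{2}\,\eps/\mu$, and hence
\begin{equation*}
\norm{\phi_i}_{V_i^{-1}}^2 > \frac{\mu^2}{2L^2} =: \alpha \qquad \text{for all } i \in [m].
\end{equation*}
The replacement of the constant $1/2$ from the linear case by $\alpha = \mu^2/(2L^2)$ is precisely where the eventual $L^2/\mu^2$ blow-up originates.

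Finally I would sandwich $\det(V_{m+1})$ as before: AM-GM and linearity of trace give the upper bound $\det(V_{m+1}) \le \lambda^d\inparen{1 + \tfrac{m}{d}\cdot\tfrac{4\mu^2 R_\phi^2 R_w^2}{\eps^2}}^d$ (using the chosen $\lambda$), while the Matrix Determinant Lemma together with $\norm{\phi_i}_{V_i^{-1}}^2 > \alpha$ gives $\det(V_{m+1}) > \lambda^d(1+\alpha)^m$. Comparing and taking $d$-th roots yields $(1+\alpha)^{m/d} \le 1 + \tfrac{m}{d}\beta$ with $\beta := 4\mu^2 R_\phi^2 R_w^2/\eps^2$, so \autoref{lem:ineq} with $k = m/d$ bounds $m \le \tfrac{e}{e-1}\,d\,\tfrac{1+\alpha}{\alpha}\,\ln\!\inparen{\tfrac{2\beta(1+\alpha)}{\alpha}}$. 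The remaining step is the bookkeeping that recovers the stated constants: using $\mu \le L$ one has $\tfrac{1+\alpha}{\alpha} = 1 + \tfrac{2L^2}{\mu^2} \le \tfrac{3L^2}{\mu^2}$ and $\tfrac{2\beta(1+\alpha)}{\alpha} = \tfrac{8R_\phi^2 R_w^2}{\eps^2}(\mu^2 + 2L^2) \le \tfrac{24 R_\phi^2 R_w^2 L^2}{\eps^2}$, which together give the claimed bound $m \le \tfrac{3e}{e-1}\,d\,\tfrac{L^2}{\mu^2}\,\log\inparen{\tfrac{24 R_\phi^2 R_w^2 L^2}{\eps^2}}$.

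The main obstacle is not any single hard inequality but the careful propagation of the two asymmetric scales $\eps/L$ and $\eps/\mu$ through the duality and potential steps, and then the final simplification that collapses $\tfrac{1+\alpha}{\alpha}$ and the logarithm's argument into clean multiples of $L^2/\mu^2$ via $\mu \le L$. One should also note the edge case $m < d$, where \autoref{lem:ineq} (which requires $k \ge 1$) does not apply directly; there the hypothesis $\eps < R_\phi R_w L$ ensures the logarithm exceeds $1$, so the right-hand side already exceeds $d > m$ and the bound holds trivially.
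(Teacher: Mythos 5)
Your proposal is correct and follows essentially the same route as the paper's proof: the same bi-Lipschitz reduction of the eluder conditions to the linear scales $\eps/L$ and $\eps/\mu$, the same regularized potential $V_i$ with $\lambda = \eps^2/(4\mu^2 R_w^2)$, the same determinant sandwich yielding $\norm{\phi_i}_{V_i^{-1}}^2 > \mu^2/(2L^2)$, and the same invocation of \autoref{lem:ineq} with $\alpha = \mu^2/(2L^2)$ and $\beta = 4\mu^2 R_\phi^2 R_w^2/\eps^2$. Your explicit final bookkeeping (using $\mu \le L$) and your handling of the edge case $m < d$, where the hypothesis $\eps < R_\phi R_w L$ makes the bound trivial, are details the paper leaves implicit but are consistent with its argument.
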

\begin{proof}
Suppose $\eEdim_{f^\star}(\calF,\eps) = m$ witnessed by the sequence $(x_1, f_1), \dots, (x_m, f_m) \in \calX\times \calF$, for some $f^\star \in \calF$ and $\eps > 0$.
Denote $w_i := w(f_i) -w(f^\star)$, and $\phi_i := \phi (x_i)$. It follows that $w_i \in \calB_d(2R_w)$ and $\phi_i \in \calB_d(R_\phi)$. Since $\sigma \in \calM_{\mu}^L$, we have for any $w_1, w_2, x\in \bbR^d$:
\[\mu \inabs{\inangle{w_1 - w_2, x}}
~\le~ \inabs{\sigma(\inangle{w_1, x}) - \sigma(\inangle{w_2,x})}
~\le~ L\inabs{\inangle{w_1 - w_2, x}}.\]
Therefore, Eq. (\ref{eq:cvx_edim}) can be replaced with:
\begin{equation}
	\max_{w\in \bbR^d} \inbrace{\inabs{\inangle{w, \phi_i}}: \sum_{j< i} \inangle{w, \phi_i}^2 \le \eps^2/\mu^2, w\in \calB_d(2R_w)} > \eps/L.
\end{equation}
Following the same steps with $\lambda := \eps^2/(4R_w^2\mu^2)$ and $V_i$ defined as before, we can further show that:
\[ \max_{w\in \bbR^d} \inbrace{\inabs{\inangle{w, \phi_i}}: \norm{w}_{V_i}^2 \le \eps^2/\mu^2 + \lambda\cdot 4 R_w^2} > \eps/L,\]
which implies the bound:
\[\norm{\phi_i}_{V_i^{-1}}^2 > \frac{\eps^2/L^2}{\eps^2/\mu^2 + \lambda \cdot 4 R_w^2} = \frac{1}{2}\cdot \frac{\mu^2}{L^2}.\]
Using similar upper and lower bounds on $\det(V_m)$ gives us that $(1+\mu^2/(2L^2))^{m/d} \le 1+\frac{m}{d}\cdot  \frac{4R_w^2R_\phi^2\mu^2}{\eps^2}$; the proof again concludes with an application of \autoref{lem:ineq} with $k= \frac{m}{d}$, $\alpha = \frac{\mu^2}{2L^2}$ and $\beta = \frac{4R_\phi^2 R_w^2\mu^2}{\eps^2}$.
\end{proof}

\paragraph{Discussion of prior work.} We clarify the differences between \autoref{prop:eluder-dc-sm} and the corresponding propositions in \citet{russo13, osband14}.%

Proposition 6 in \cite{russo13} considers the setting exactly as in \autoref{clm:eluder-lin-ub}. In our notation, the stated bound has the form $\dc(\calF,R_\phi R_w) \cdot O(\log \inparen{3 + \frac{12R_w^2}{\eps^2}})$; the factor of $R_\phi$ is missing inside the log term. As explained in \autoref{def:sigma-dc} (\autoref{ftnote:scale-interchange}), only the product $R_\phi R_w$ is relevant as the scale of $\phi$ and $w$ is interchangeable.

Proposition 7 in \cite{russo13} considers the setting exactly as in \autoref{clm:eluder-genlin-ub}. In our notation, the stated bound there has the form $\calM_{\mu}^L\ddc(\calF,R_\phi R_w) \cdot O(\frac{L^2}{\mu^2} \cdot \log \inparen{\frac{3L^2}{\mu^2} + \frac{L^2}{\mu^2} \cdot \frac{12 R_w^2 L^2}{\eps^2}})$; again the factor of $R_\phi$ is missing. Also, the factor of $L$ in $R_w^2 L^2/\eps^2$ is improvable to $\mu$. 

Proposition 4 in \cite{osband14} considers the setting analogous to \autoref{clm:eluder-genlin-ub}, but for vector-valued function classes, that is, $\calF \subseteq (\calX \to \bbR^k)$. In the special case of $k=1$, their bound in our notation, has the form $\calM_{\mu}^L\ddc(\calF,R_\phi R_w) \cdot O(\frac{L^2}{\mu^2} \cdot \log \inparen{\frac{L^2}{\mu^2} + \frac{L^2}{\mu^2} \cdot \frac{R_\phi^2 R_w^2}{\eps^2}})$. The term $R_\phi R_w$ should be $R_\phi R_w \mu$. As shown in \autoref{eq:scaling}, it is possible to make $R_\phi R_w$ arbitrarily small while keeping $L/\mu$ fixed. 

Lastly, we note that  \autoref{lem:ineq} is slightly different than the corresponding inequality used in \citet{russo13,osband14} (which has $(1+\beta)$ in place of $2\beta$). This allows us to remove the additive terms of $3$ and $3\frac{L^2}{\mu^2}$ inside the log factor for the $\dc$ case and the $\calM_\mu^L\ddc$ case respectively. In the $\calM_\mu^L\ddc$ case, this gives a nontrivial improvement in some regime of parameters; namely the bounds of \cite{russo13,osband14} would only give us a term of $\log\inparen{\frac{L}{\mu} + \frac{R_\phi R_w L}{\eps}}$, which can be loose when $\mu$ is very small.


\section{Proof of \Cref{prop:eluder-lin-tight} }\label{apdx:proof-linear-lb}

We exhibit a sequence $(x_1, \theta_1), \dots (x_m, \theta_m)$ that witnesses the claimed lower bound on eluder dimension with $\theta^\star = 0$. It suffices to consider the case of $R=1$, as this is just a matter of scaling relative to $\eps$.
First, consider the case of $d=1$. For any $\alpha \in (\eps, \sqrt{2}\eps)$ and $k := \floor{ \log_2(1/\alpha)}$, let $x_i := 1/2^{(k-i)}$ and $\theta_i = \alpha \cdot 2^{k-i}$ for $i \in \set{0, \ldots, k}$.
For each $i$, it holds that $\theta_i x_i = \alpha > \eps$ and $\sum_{j<i} (\theta_i x_j)^2 \le \alpha^2/2 < \eps^2$. Since $|x_i| \le 1$ and $|\theta_i| \le 1$ we get $\eEdim(\calF, \eps) \ge k+1 \ge \Omega(\log(1/\eps))$. 

For $d > 1$, consider $d$ copies of the above $1$ dimensional sequence repeated in each dimension. Namely, consider the sequence $(x_{ij}, \theta_{ij})_{i \in [d], j \in [k]}$ with $x_{ij} := {\bm e}_i / 2^{k-j}$ and $\theta_{ij} := \alpha2^{k-j} \cdot {\bm e}_i$ (where ${\bm e}_i$ is the $i$-th standard basis vector). Since $x_{ij}, \theta_{ij} \in \calB_d(1)$, we have $\eEdim(\calF,\eps) \ge d(k+1) = \Omega\inparen{d \log (1/\eps)}$.

\section{Proof of \Cref{thm:equivalence}}\label{sec:proof-equivalence}

\paragraph{Proof of lower bound.} The lower bound is straightforward from the definition, since any sequence of $(x_1,f_1), \dots, (x_m, f_m)$ that witness $\Sdim_{f^\star}(\calF) = m$ or $\Tdim_{f^\star}(\calF) = m$ must also be valid ``eluder sequences''; so the eluder dimension can only be larger.

\paragraph{Proof of upper bound.}
Fix $\calF$ and $f^\star \in \calF$. Let $(x_1, f_1), \dots (x_m, f_m) \in \calX \times \calF$ be the sequence which witnesses $\Edim_{f^\star}(\calF) = m$. To prove the bound, we will show that there exists a subset of size at least $k \ge \log_4 m$ which witnesses $\Sdim_{f^\star} = k$ or $\Tdim_{f^\star} = k$.

This follows from a connection to Ramsey theory. A visualization of the proof is depicted in \Cref{fig:proof-illustration}. Recall that diagonal Ramsey number $R(k,k)$ is defined as the smallest $m$ such that every red-blue labeling of the edges of the graph $K_m$ contains a monochromatic subgraph $K_k$. 

In addition, define $E(k,k)$ as the smallest $m$ such that any eluder sequence $(x_1, f_1), \dots, (x_m, f_m)$ contains a subsequence $(x_{i_1}, f_{i_1}), \dots (x_{i_k}, f_{i_k})$ which witnesses $\Sdim_{f^\star}(\calF) \ge k$ or $\Tdim_{f^\star}(\calF) \ge k$. We claim that $E(k,k) \le R(k,k)$.

To see this, note that there exists a bijection between colorings of $K_m$ and eluder sequences. Every eluder sequence $(x_1, f_1), \dots (x_m, f_m)$ can be used to construct a red-blue coloring of $K_m$ as follows. For every edge $e_{ij}$ for $i > j \in [m]$, we color it red if $f_j(x_i) = f^\star(x_i)$ and blue otherwise. Observe that if there exists a subsequence $(x_{i_1}, f_{i_1}), \dots (x_{i_k}, f_{i_k})$ which witnesses $\Sdim_{f^\star} \ge k$, then the subgraph comprised of the vertices $i_1, \dots, i_k$ in the coloring of $K_m$ must be monochromatic red. Likewise, if a subsequence witnesses $\Tdim_{f^\star} \ge k$, then the subgraph must be monochromatic blue. Thus, if $m$ is such that if every coloring of $K_m$ induces a monochromatic coloring $K_k$, then for any eluder sequence, we can always find a subsequence that witnesses $\Sdim_{f^\star} \ge k$ or $\Tdim_{f^\star} \ge k$. This shows that $E(k,k) \le R(k,k)$.

The proof concludes by applying the classical bound $R(k,k) \le 4^{k}$ \cite[see, e.g.,~][]{mubayi2017survey}.

\begin{figure}[t]
\centering
\subfigure[Eluder matrix]{
\includegraphics[scale=0.45, trim={0cm 27cm 55cm 0cm},clip]{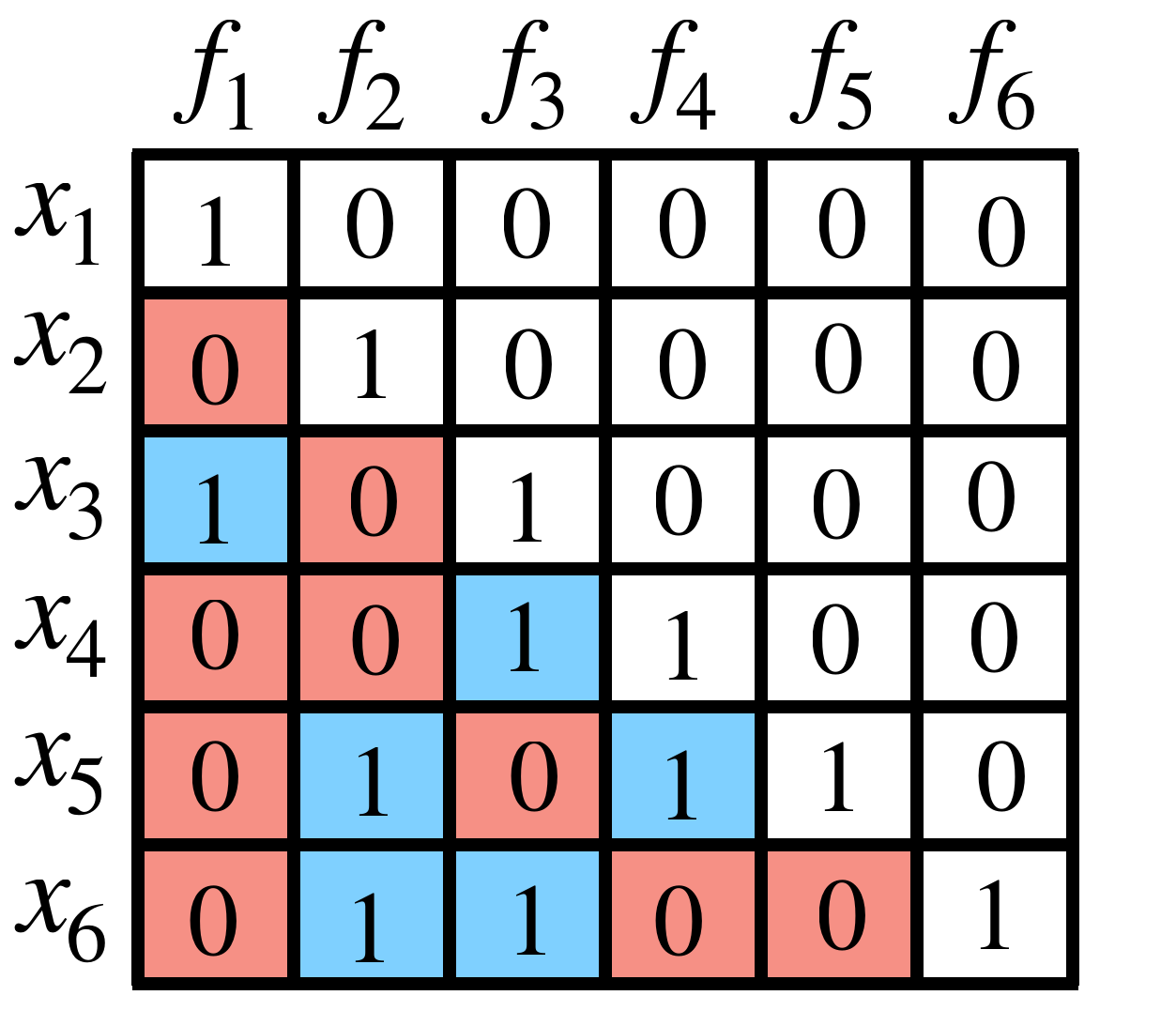}
}
\hspace{1em}
\subfigure[Ramsey graph]{
\includegraphics[scale=0.45, trim={0cm 28cm 55cm 0cm},clip]{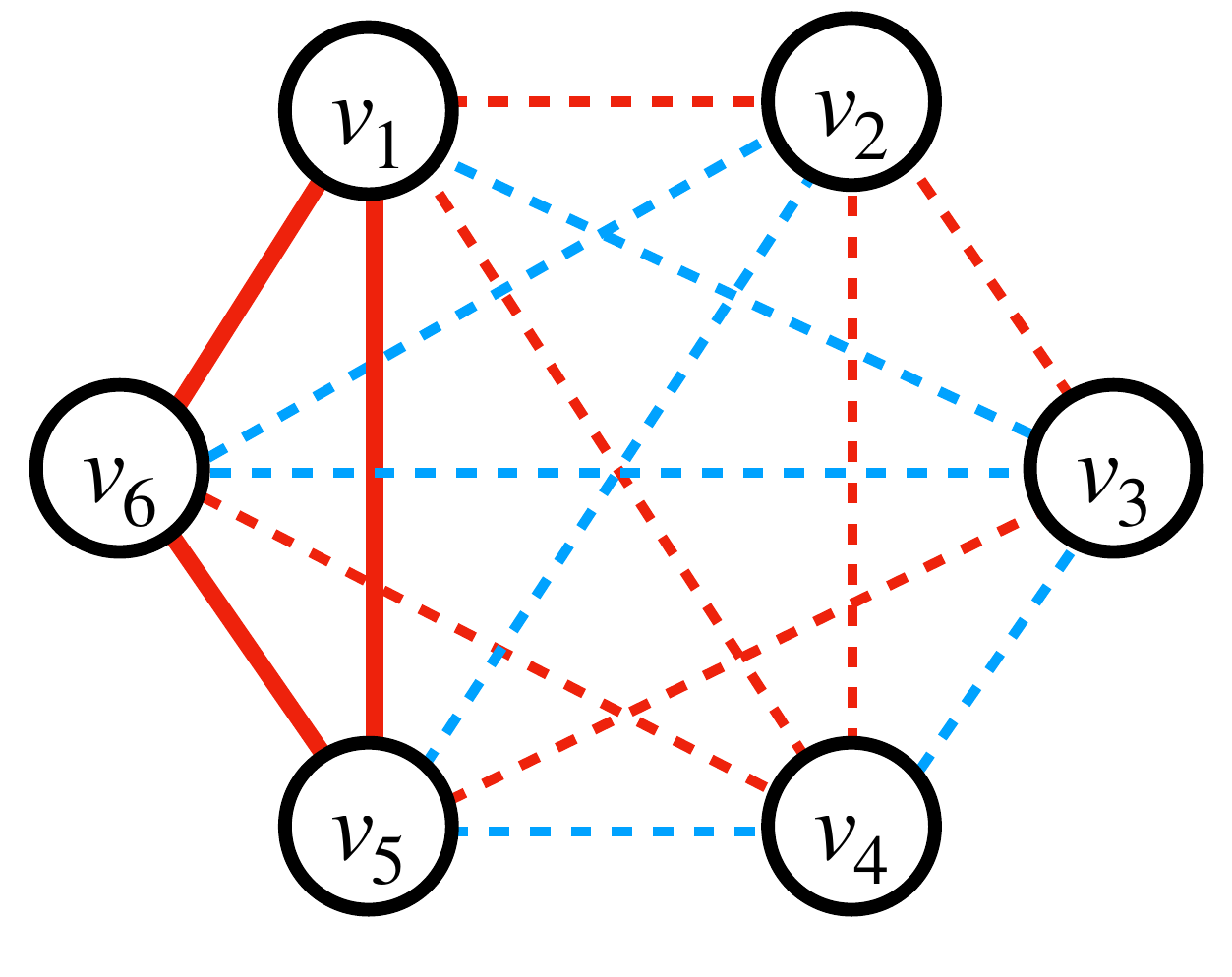}
}
\caption{An example illustrating the connection between the upper bound and Ramsey numbers. Left: a sequence $\{(x_1, f_1), \dots (x_6, f_6)\}$ witnessing $\Edim_0(\calF) = 6$, represented in matrix form. (We switch to 0/1-valued function classes for clarity.) Right: In the corresponding graph $K_6$, we color the graph edges $e_{ij}$ to be \textcolor{red}{red} if $f_j(x_i) = 0$ and \textcolor{blue}{blue} if $f_j(x_i) = 1$. Since $R(3,3) = 6$, we are guaranteed a subgraph $K_3$ which is monochromatic; in this example, the subgraph is given by the vertices $\{v_1, v_5, v_6\}$. Red subgraphs indicate sequences which witness $\Sdim_0(\calF)$; blue subgraphs witness $\Tdim_0(\calF)$. In this case, $\{(x_1, f_1), (x_5, f_5), (x_6, f_6)\}$ witnesses $\Sdim_0(\calF) \ge 3$.}
\label{fig:proof-illustration}
\end{figure}

\section{Proof of \Cref{thm:tightness-ub}}\label{apdx:pf-tightness-ub}
We will construct $\calF_N \subseteq ([N]\to \{1,-1\})$ randomly, such that $\inabs{\calF_N} = N+1$ and $\Edim_{1}(\calF_N) = N$. Note that it is equivalent to define an $N\times (N+1)$ sign matrix $B$, representing the values of $\calX \times \calF$ with entry $B_{ij} = f_j(x_i)$. Let $B$ be randomly drawn according to the following distribution:
\begin{align*}
    B_{ij} \sim \begin{cases}
    1 & i < j,\\
    -1 & i = j,\\
    \mathrm{Rad}(1/2) & i > j.
    \end{cases}
\end{align*}
By construction, with probability 1, $(x_1, f_1) \dots (x_N, f_N)$ is a valid sequence witnessing $\Edim(\calF_N) = \Edim_1(\calF_N) = N$.

We now have to argue that there exists some matrix $C \in \{1,-1\}^{N\times (N+1)}$ such that the equivalent function class $\calF_N$ has small threshold dimension and star number. We use the following two lemmas to simplify the requirement that $\calF_N$ have small threshold dimension and star number with respect to all base functions $f^\star\in \calF$ to just considering the base functions $f^\star(x) = -1$ and $f^\star(x) = 1$.

\begin{lemma}\label{lem:dim-ub}
For any $\calF$ and $\dim\in \{\Edim, \Tdim, \Sdim\}$, we have  $\dim(\calF) \le \dim_{1}(\calF) + \dim_{-1}(\calF)$. 
\end{lemma}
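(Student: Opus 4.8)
The plan is to fix an arbitrary reference function $f^\star \in \calF$ together with a sequence $(x_1, f_1), \dots, (x_m, f_m)$ witnessing $\dim_{f^\star}(\calF) = m$, and to split it into two subsequences according to the sign of $f^\star$ on the queried points. Concretely, define $I_+ := \{i \in [m] : f^\star(x_i) = 1\}$ and $I_- := \{i \in [m] : f^\star(x_i) = -1\}$, so that $[m] = I_+ \sqcup I_-$ and hence $m = |I_+| + |I_-|$. I would then show that the subsequence indexed by $I_+$ (keeping its induced order) witnesses $\dim_1(\calF) \ge |I_+|$ and, symmetrically, that the subsequence indexed by $I_-$ witnesses $\dim_{-1}(\calF) \ge |I_-|$, where $1$ and $-1$ denote the constant functions. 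Here I use that \Cref{def:combinatorial} makes sense for any reference $f^\star : \calX \to \{1,-1\}$, not only those lying in $\calF$, since the defining constraints only restrict the witnessing functions $f_i \in \calF$. Summing the two bounds gives $m \le \dim_1(\calF) + \dim_{-1}(\calF)$, and taking the supremum over $f^\star \in \calF$ yields the claim.

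The single observation that makes the split work uniformly across $\dim \in \{\Edim, \Sdim, \Tdim\}$ is the following: for an index $i \in I_+$ and any other index $\ell$ lying in the same part $I_+$, we have $f^\star(x_\ell) = 1$, so the constraint ``$f_i(x_\ell) = f^\star(x_\ell)$'' is equivalent to ``$f_i(x_\ell) = 1$'' and the constraint ``$f_i(x_\ell) \ne f^\star(x_\ell)$'' is equivalent to ``$f_i(x_\ell) \ne 1$''. In each of the three definitions the defining conditions are precisely of these two forms, with $\ell$ ranging over $j < i$, over $j \ne i$, or over $k \ge i$; hence every condition required by the constant-$1$ witness on the restricted sequence is inherited directly from the corresponding condition of the original $f^\star$-witness. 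The constant-$(-1)$ case for $I_-$ is identical with the roles of $1$ and $-1$ swapped.

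I would carry this out case by case, though each case is short. For the star number, every $f_i$ with $i \in I_+$ satisfies $f_i(x_i) \ne 1$ (since $f_i(x_i) \ne f^\star(x_i) = 1$) and $f_i(x_j) = 1$ for each other $j \in I_+$ (since $f_i(x_j) = f^\star(x_j) = 1$), which is exactly a star witness for the constant $1$; the eluder and threshold cases follow in the same way once one notes that restricting to $I_+$ only shrinks the quantifier ranges ``$j < i$'' and ``$k \ge i$'' to their intersections with $I_+$, so the surviving constraints are a subset of the original ones and remain in force.

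I do not expect a serious obstacle here: the only point demanding care is that the restriction to a subsequence must preserve the linear order used in the eluder and threshold definitions, which it does because we retain the induced ordering on $I_+$ (resp.\ $I_-$). It is also worth flagging \emph{why} the bound is additive rather than a single $\dim_c$ term: the original $f^\star$-constraints couple together queries on which $f^\star = 1$ and queries on which $f^\star = -1$, and it is only after separating by sign that they decouple into two independent constant-reference witnesses, forcing the sum of the two constant-reference dimensions on the right-hand side.
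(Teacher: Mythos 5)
Your proposal is correct and takes essentially the same approach as the paper: both arguments split the witnessing sequence for $\dim_{f^\star}(\calF)=m$ into $I_+=\{i: f^\star(x_i)=1\}$ and $I_-=\{i: f^\star(x_i)=-1\}$, observe that each induced subsequence (in its inherited order) is a valid witness for the constant-$1$ (resp.\ constant-$(-1)$) reference function, and conclude $m \le \dim_1(\calF)+\dim_{-1}(\calF)$ before taking the supremum over $f^\star$. The only difference is one of presentation: the paper writes out the star-number case and declares the eluder and threshold cases analogous, while you verify all three uniformly via the observation that every defining constraint localizes to "$=1$" or "$\ne 1$" within a single part.
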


\begin{lemma}\label{lem:thres-ub}
For any $\calF$, $\Tdim(\calF) \le 2 \Tdim_{1}(\calF)$.
\end{lemma}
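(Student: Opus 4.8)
The plan is to show that from any threshold sequence with respect to a base function $f^\star \in \calF$ of length $m$, one can extract a threshold sequence of length at least $m/2$ with respect to the all-ones base function $1$; taking the supremum over $f^\star \in \calF$ then yields $\Tdim(\calF) \le 2\Tdim_1(\calF)$. First I would fix $f^\star \in \calF$ with $\Tdim_{f^\star}(\calF) = m$ and a witnessing sequence $(x_1, f_1), \dots, (x_m, f_m)$, and record the elementary fact that any index-subsequence of a threshold sequence (keeping the same base) is again a threshold sequence: this is immediate from \Cref{def:combinatorial}, since the defining conditions only reference index pairs with $j < i$ or $k \ge i$, and these relations are preserved under passing to an order-preserving subsequence. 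I would then partition $[m]$ according to the value of $f^\star$ on the witnessing points, $P = \{i : f^\star(x_i) = 1\}$ and $Q = \{i : f^\star(x_i) = -1\}$, so that $\max\{|P|, |Q|\} \ge m/2$ by pigeonhole.

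The easy case is $|P| \ge m/2$: restricting the sequence to the indices in $P$ gives a threshold sequence with base $f^\star$, but on these points $f^\star$ is identically $1$, so the defining conditions coincide verbatim with those for base the all-ones function. Hence this subsequence directly witnesses $\Tdim_1(\calF) \ge |P| \ge m/2$.

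The crux is the case $|Q| \ge m/2$, where the restricted subsequence is a threshold sequence with base $-1$ (since $f^\star \equiv -1$ on these points), and I must convert it to base $+1$ without losing length. Writing the restricted sequence as $(u_1, h_1), \dots, (u_t, h_t)$ with $t = |Q|$, it satisfies $h_i(u_j) = -1$ for $j < i$ and $h_i(u_j) = +1$ for $j \ge i$. I would reverse the order and shift the function indices by one, setting $y_b := u_{t+1-b}$ for $b \in [t]$, $g_a := h_{t+2-a}$ for $a \in \{2, \dots, t\}$, and crucially $g_1 := f^\star$. A direct check shows $g_a(y_b) = +1$ exactly when $b < a$: for $a \ge 2$ this reduces to the equivalence $(t+1-b) \ge (t+2-a) \iff b < a$, and for $a = 1$ it holds because $f^\star(u_j) = -1$ on all of $Q$. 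Since every $g_a$ lies in $\calF$ (using $f^\star \in \calF$), the sequence $(y_1, g_1), \dots, (y_t, g_t)$ is a valid threshold sequence with base $1$, witnessing $\Tdim_1(\calF) \ge t = |Q| \ge m/2$.

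Combining the two cases gives $\Tdim_1(\calF) \ge m/2$ for every $f^\star \in \calF$, and taking the supremum proves the bound with no additive slack. The main subtlety, and the reason the statement is specific to threshold dimension with $f^\star \in \calF$, is the reversal-and-shift step: a plain reversal would leave the problematic diagonal entries intact, and it is precisely the ability to reuse $f^\star \in \calF$ as the all-$(-1)$ ``anchor'' function $g_1$ that lets the conversion preserve length and deliver the clean factor of $2$.
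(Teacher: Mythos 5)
Your proof is correct and takes essentially the same approach as the paper's: split the witnessing sequence by the value of $f^\star$, restrict to the $+1$-part directly in the easy case, and in the $-1$-case reverse the subsequence and shift the function indices by one, inserting $f^\star$ itself as the anchor. Your reversal-and-shift construction $(y_1,g_1),\dots,(y_t,g_t)$ is in fact exactly the paper's witnessing sequence $(x_k, f^\star), (x_{k-1}, f_k), \dots, (x_1, f_2)$.
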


We set up some additional notation. Denote ${\bf I} = (i_1, i_2, \dots, i_k)$ and ${\bf J} = (j_1, j_2, \dots, j_k)$ to be $k$-length sequences of distinct elements from $[N]$. For any two sequences ${\bf I}$, ${\bf J}$, we use $(x_{\bf I}, f_{\bf J}) \coloneqq ((x_{i_1}, f_{j_1}), \dots (x_{i_k}, f_{j_k}))$. We define valid star sequences to be any $(x_{\bf I}, f_{\bf J})$ which witness $\Sdim_1(\calF_N) = k$ or $\Sdim_{-1}(\calF_N) = k$, and we define valid threshold sequences to be any $(x_{\bf I}, f_{\bf J})$ which witnesses $\Tdim_1(\calF_N) = k$.

Define the random variable $X_k$ to be the number of valid star or threshold sequences, i.e.,~$X_k \coloneqq \inabs{ \inbrace{ ({\bf I}, {\bf J}) \colon (x_{\bf I}, f_{\bf J}) \text{ is valid star or threshold sequence} } }.$ By linearity of expectation, we have
\begin{align*}
    \Ex[X_k] \le N^{k}(N+1)^{k} \cdot \max_{ {\bf I}, {\bf J} \subset [N] } p_{\bf{I}, \bf{J}}, \quad \text{where } p_{\bf{I}, \bf{J}} \coloneqq \mathbb{P}\left[ (x_{\bf I}, f_{\bf J}) \text{ is a valid star or threshold sequence} \right].
\end{align*}
Now we apply the following lemma to upper bound the expectation.

\begin{lemma}\label{lem:prob-upper-bound}
For all $\bf{I}$, $\bf{J}$, we have $ p_{\bf{I}, \bf{J}} \le 3 \cdot 2^{-k(k-1)/2}$.
\end{lemma}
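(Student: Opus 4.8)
The plan is to split the event defining $p_{{\bf I},{\bf J}}$ by a union bound over the three ways the sequence can be valid: it may witness $\Sdim_1(\calF_N) = k$, $\Sdim_{-1}(\calF_N) = k$, or $\Tdim_1(\calF_N) = k$. For each of these I would translate the witnessing conditions of \Cref{def:combinatorial} into the requirement that the $k \times k$ submatrix $M$ defined by $M_{c,a} := B_{i_c, j_a} = f_{j_a}(x_{i_c})$ equals one fixed sign pattern: the $\Sdim_1$ event forces $M$ to be $-1$ on the diagonal and $+1$ off of it, the $\Sdim_{-1}$ event forces the complementary pattern, and the $\Tdim_1$ event forces $M_{c,a} = -1$ for $c \ge a$ and $M_{c,a} = +1$ for $c < a$. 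It therefore suffices to bound each of the three probabilities by $2^{-k(k-1)/2}$.

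The second step is to evaluate one such probability exactly. By the definition of $B$, the entry $B_{i_c, j_a}$ is deterministic ($+1$ when $i_c < j_a$ and $-1$ when $i_c = j_a$) outside the random region and an independent $\mathrm{Rad}(1/2)$ variable when $i_c > j_a$. Consequently the probability that $M$ realizes a prescribed pattern is $2^{-R}$ when every deterministic entry already agrees with the pattern, and $0$ otherwise, where $R := |\{(c,a) : i_c > j_a\}|$ counts the random entries of the submatrix. Crucially $R$ is a function of the positions ${\bf I},{\bf J}$ alone, so the whole argument reduces to a deterministic claim: for each of the three patterns, \emph{consistency with the deterministic entries forces} $R \ge k(k-1)/2$.

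Proving this claim is the main obstacle, and it is where the three cases diverge. For the $\Tdim_1$ pattern it is cleanest: consistency forces $i_c \ge j_a$ for every $c \ge a$, so each of the $\binom{k+1}{2}$ positions with $c \ge a$ is either random ($i_c > j_a$) or an equality ($i_c = j_a$); since the $i$'s are distinct and the $j$'s are distinct, $\{(c,a) : i_c = j_a\}$ is a partial matching of size at most $k$, leaving at least $\binom{k+1}{2} - k = k(k-1)/2$ random positions. For the $\Sdim_1$ pattern, consistency instead forces the diagonal inequalities $i_a \ge j_a$; then for each symmetric off-diagonal pair $(c,a),(a,c)$, if neither were random we would have $i_c \le j_a$ and $i_a \le j_c$, which combined with $j_a \le i_a$ and $j_c \le i_c$ gives $i_c \le j_a \le i_a \le j_c \le i_c$ and hence $i_c = i_a$, contradicting distinctness; thus at least one of each of the $\binom{k}{2}$ symmetric pairs is random. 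For the $\Sdim_{-1}$ pattern, consistency forces $i_c \ge j_a$ for all $c \ne a$, so the $k(k-1)$ off-diagonal positions are random except for at most $k$ matching equalities, giving $R \ge k(k-1) - k \ge k(k-1)/2$ once $k \ge 3$ (the cases $k \le 2$ being immediate to check directly).

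Combining the three bounds through the union bound then yields $p_{{\bf I},{\bf J}} \le 3 \cdot 2^{-k(k-1)/2}$, as claimed. I expect no difficulty in the probabilistic part — each pinned-down random entry simply contributes an independent factor of $1/2$ — so the only real work is the counting in the consistency-to-$R$ step, whose threshold case is driven by the triangular structure of the pattern and whose star cases are driven by the diagonal comparisons together with the partial-matching bound on the number of equalities.
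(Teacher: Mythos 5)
Your proposal is correct, and its skeleton matches the paper's exactly: a union bound over the same three patterns (star w.r.t.\ $+1$, star w.r.t.\ $-1$, threshold w.r.t.\ $+1$), followed by the reduction that the probability of realizing a fully prescribed pattern is $2^{-R}$ where $R$ counts the random entries of the (distinct-celled, hence independent) submatrix, so that it suffices to show consistency with the deterministic part of $B$ forces $R \ge k(k-1)/2$. Where you genuinely differ is in how that count is established. The paper's device is sorting: once diagonal consistency forces $i_r \ge j_r$ (or $i_r > j_r$ for $r \le k-1$ in the star-$(-1)$ case), it relabels so that $i_{r_1} > i_{r_2} > \cdots > i_{r_k}$, whence every position $(i_{r_p}, j_{r_q})$ with $p < q$ satisfies $i_{r_p} > i_{r_q} \ge j_{r_q}$ and is therefore random; this single observation delivers the $\binom{k}{2}$ pinned coins simultaneously for the star-$(+1)$ and threshold-$(+1)$ patterns. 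You instead count positions statically, with no relabeling: lower triangle minus a partial matching for the threshold pattern, the symmetric-pair chain contradiction $i_c \le j_a \le i_a \le j_c \le i_c$ for star-$(+1)$, and off-diagonals minus a matching (plus a direct check at $k \le 2$) for star-$(-1)$. Both routes are sound. The paper's sorting trick covers two of the three patterns in one stroke, while your version pays with three separate case analyses and the small-$k$ caveat; in exchange, your accounting is more explicit about which entries are forced --- in fact the paper's star-$(-1)$ count as literally written (``all of which must take value $-1$'') only exhibits $\binom{k-1}{2}$ off-diagonal coins and implicitly needs the $k-1$ pinned diagonal entries to reach $\binom{k}{2}$, a rough edge your matching argument avoids.
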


\noindent We apply \Cref{lem:prob-upper-bound} to the previous display. When $k = \Omega(\log_2 N)$, we have $\Ex[X_k] < 1$. By the probabilistic method, there must exist an $N\times (N+1)$ valued matrix $C$ such that the corresponding $\calF_N \subseteq ([N]\to \{1, -1\}$ has $\Sdim_1(\calF_N) < O(\log_2 N)$, $\Sdim_{-1}(\calF_N) < O(\log_2 N)$, and $\Tdim_1(\calF_N) < O(\log_2 N)$, but $\Edim_{1}(\calF_N) = N$. By \Cref{lem:dim-ub} and \ref{lem:thres-ub}, this concludes the proof.\qed

\paragraph{Proof of \Cref{lem:dim-ub}.}
We prove the result for star number; the result for eluder dimension and threshold dimension can be shown with a similar argument. Fix any $f^\star \in \calF$, and let $(x_1, f_1), \dots (x_m, f_m)$ denote the sequence which witnesses $\Sdim_{f^\star}(\calF) = m$. Let $I_{+} \subseteq [m]$ denote the indices $i$ for which $f^\star(x_i) = 1$ and $I_{-} \subseteq [m]$ denote the indices $i$ for which $f^\star(x_i) = -1$. By definition of star number, $f_j(x_j) = -1$ for every $j\in I_{+}$ and $f_k(x_j) = f^\star(x_j) = 1$ for every $j\ne k \in I_{+}$. Thus we know that $\{(x_k, f_k): k\in I_{+}\}$ is a valid sequence which witnesses $\Sdim_{1}(\calF) \ge \inabs{I_{+}}$. Similarly $\{(x_k, f_k): k\in I_{-}\}$ is a valid sequence which witnesses $\Sdim_{-1}(\calF) \ge \inabs{I_{-}}$. Thus we have shown that $\Sdim_{f^\star}(\calF) = m \le \Sdim_{1}(\calF) + \Sdim_{-1}(\calF)$. Taking the supremum on the LHS yields the claim.\qed

\paragraph{Proof of \Cref{lem:thres-ub}.} Fix any $f^\star \in \calF$, and let $(x_1, f_1), \dots (x_m, f_m)$ denote the sequence which witnesses $\Tdim_{f^\star}(\calF) = m$. Again let $I_{+} \subseteq [m]$ denote the indices $i$ for which $f^\star(x_i) = 1$ and $I_{-} \subseteq [m]$ denote the indices $i$ for which $f^\star(x_i) = -1$. Either $\inabs{I_{+}} \ge m/2$ or $\inabs{I_{-}} \ge m/2$. We break into cases.

\textit{Case 1.} If $\inabs{I_{+}} \ge m/2$, then taking the subsequence indexed by $I_{+}$ already shows that $\Tdim_1(\calF) \ge m/2 = \Tdim_{f^\star}(\calF)/2$, and we are done.

\textit{Case 2.} If $\inabs{I_{-}} \ge m/2$, then let us consider the subsequence indexed by $I_{-}$. We reindex it to call it $(x_1, f_1), \dots (x_k, f_k)$, where $k = \inabs{I_{-}}$. Observe that the sequence
\begin{align*}
(x_k, f^\star), (x_{k-1}, f_k), (x_{k-2}, f_{k-1}), \dots, (x_1, f_2)
\end{align*}
witnesses $\Tdim_1(\calF) \ge k \ge m/2 = \Tdim_{f^\star}(\calF)/2$.

Thus in both cases we have shown that $\Tdim_1(\calF) \ge \Tdim_{f^\star}(\calF)/2$; taking the supremum yields the claim.\qed

\paragraph{Proof of \Cref{lem:prob-upper-bound}.}
Fix any $\bf{I}$, $\bf{J}$ to be $k$-length subsequences of $[N]$. In order for $(x_{\bf I}, f_{\bf J})$ to be a valid star sequence w.r.t.~$f^\star(x) = 1$, the following properties of the matrix $B$ must hold:
\vspace{-\topsep}
\begin{enumerate}[leftmargin=0.5cm]
    \item For every $r \in [k]$, $B_{i_r, j_r} = -1$.
    \item For every $r, s \in [k]$ such that $r\ne s$, $B_{i_r, j_s} = 1$.
\end{enumerate}

\noindent In order for $(x_{\bf I}, f_{\bf J})$ to be a valid star sequence w.r.t.~$f^\star(x) = -1$, we just flip the values in the above two properties.

Likewise, in order for $(x_{\bf I}, f_{\bf J})$ to be a valid threshold sequence w.r.t.~$f^\star(x)=1$, the following properties of the matrix $B$ must hold:
\vspace{-\topsep}
\begin{enumerate}[leftmargin=0.5cm]
    \item For every $r \in [k]$, $B_{i_r, j_r} = -1$.
    \item For every $r, s \in [k]$ such that $r < s$, $B_{i_r, j_s} = 1$.
    \item For every $r, s \in [k]$ such that $r \ge s$, $B_{i_r, j_s} = -1$.
\end{enumerate}

First, we will prove that the probability that $(x_{\bf I}, f_{\bf J})$ is a valid star sequence w.r.t.~$f^\star(x)=1$, as well as the probability that $(x_{\bf I}, f_{\bf J})$ is a valid threshold sequence w.r.t.~$f^\star(x)=1$ are both $2^{-k(k-1)/2}$. For any $r \in [k]$, if $i_r <  j_r$, then by construction of $B$ we know that $f_{j_r}(x_{i_r}) = 1$, so $(x_{\bf I}, f_{\bf J})$ cannot be a valid star sequence or threshold sequence w.r.t~$f^\star(x)=1$. Henceforth, assume $i_r \ge j_r$ for all $r\in[k]$. Now define indices $r_1, r_2, \dots, r_k$ as the permutation of $[k]$ such that $i_{r_1}> i_{r_2}> \dots > i_{r_k}$. For any $p < q$, we have $i_{r_p} > i_{r_q} \ge j_{r_q}$. Thus for every $p < q$, the corresponding entry $(i_{r_p}, j_{r_q})$ is sampled from $\mathrm{Rad}(1/2)$. In order for $(x_{\bf I}, f_{\bf J})$ to be a valid star sequence, all of these must take the value of 1; likewise in order for  $(x_{\bf I}, f_{\bf J})$ to be a valid threshold sequence, all of these must take the value of $-1$. Since there are $k(k-1)/2$ of these, we have the desired result.

Now we bound the probability that $(x_{\bf I}, f_{\bf J})$ is a valid star sequence w.r.t.~$f^\star(x)=-1$. Note that because the definition of star number is permutation-invariant, we can assume that $i_1 > i_2 > \dots > i_k$ without loss of generality. Consider the pair $(i_1, j_1)$. We require $B_{i_1, j_1} = 1$, so either $i_1 > j_1$ or $i_1 < j_1$. Since we require $B_{i_2, j_1} = -1$, we cannot have $i_1 < j_1$, so we must have $i_1 > j_1$. Using a similar argument, we must have $i_r > j_r$ for all $r\in[k-1]$. Thus, there must be at least $k(k-1)/2$ random entries in the submatrix given by $({\bf I}, {\bf J})$, all of which must take value $-1$ in order for $(x_{\bf I}, f_{\bf J})$ is a valid star sequence w.r.t.~$f^\star(x)=-1$.

By union bound we get $p_{\bf{I}, \bf{J}} \le 3 \cdot 2^{-k(k-1)/2}$, thus proving the result.\qed

\section{Proof of \Cref{thm:separation}}\label{apdx:proof-separation}

For simplicity, let us consider only function classes of size $N$. It is equivalent to reason about $N\times N$ sign matrices which define the values that $\calX \times \calF$ take. We slightly abuse notation to define the $\signrank$ of an $N\times N$ matrix $S$ to be
\begin{align*}
    \signrank(S) \coloneqq \inbrace{\mathrm{rank}(M) : M\in \bbR^{N\times N}, \ \sign(M_{ij}) = S_{ij} \text{ for all } i,j\in[N]}.
\end{align*}
We also define $\Edim_1(S)$ similarly: $\Edim_1(S)$ is the maximum $k$ such that we can find two $\bf{I}$, $\bf{J}$ which are $k$-length subsequences of $[N]$ such that the matrix $S$ restricted to $\bf{I}$, $\bf{J}$ has $-1$ on the diagonal and $+1$ above the diagonal.

The proof relies on the following key lemma, which bounds the number of matrices with $\signrank$ at most $r$.

\begin{lemma}[e.g., Lemma 22 of \cite{alon2016sign}]\label{lem:num-sign-matrices}
Let $r \le N/2$. The number of $N\times N$ sign matrices with sign rank at most $r$ does not exceed $2^{O(rN\log N)}$.
\end{lemma}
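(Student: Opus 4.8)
The plan is to reduce the count to a bound on the number of sign patterns realized by a fixed family of low-degree polynomials, and then invoke the Warren / Milnor--Thom bound. Recall that, by definition, an $N \times N$ sign matrix $S$ satisfies $\signrank(S) \le r$ exactly when there is a real matrix $M$ of rank at most $r$ with $\sign(M_{ij}) = S_{ij}$ for all $i,j$. Writing such an $M$ in factored form $M = U V^\top$ with $U, V \in \bbR^{N \times r}$, and letting $u_i, v_j \in \bbR^r$ denote the $i$-th row of $U$ and the $j$-th row of $V$, we have $M_{ij} = \inangle{u_i, v_j}$. Thus $S$ is realizable with $\signrank(S) \le r$ if and only if there exist $U, V$ such that $S_{ij} = \sign \inangle{u_i, v_j}$ for every $(i,j)$; since $S$ has $\sbit$-valued entries, every valid realization automatically has all inner products nonzero.

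First I would set up the polynomial system. Regard the $2Nr$ entries of $(U,V)$ as real variables, and for each $(i,j) \in [N]^2$ define the polynomial $P_{ij}(U,V) := \inangle{u_i, v_j}$, which is bilinear, hence of degree $2$, in these $\ell := 2Nr$ variables. By the previous paragraph, every sign matrix $S$ with $\signrank(S) \le r$ arises as the sign vector $\inparen{\sign P_{ij}(U,V)}_{(i,j)} \in \sbit^{N^2}$ for some $(U,V)$ at which no $P_{ij}$ vanishes. Consequently, the number of such $S$ is at most the number of distinct full sign vectors in $\sbit^{N^2}$ attained by the family $\{P_{ij}\}$ as $(U,V)$ ranges over $\bbR^\ell$.

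Next I would apply Warren's theorem on sign patterns of real polynomials: if $p_1, \dots, p_m$ are polynomials in $\ell$ variables of degree at most $d$, with $m \ge \ell$, then the number of sign vectors in $\sbit^m$ they realize (equivalently, the number of full-dimensional cells of the arrangement of their zero sets) is at most $\inparen{4 e d m / \ell}^\ell$. Here $m = N^2$, $d = 2$, and $\ell = 2Nr$; the hypothesis $r \le N/2$ guarantees $m = N^2 \ge 2Nr = \ell$, so the bound applies. Substituting gives at most
\[
\inparen{\frac{4 e \cdot 2 \cdot N^2}{2Nr}}^{2Nr} ~=~ \inparen{\frac{4 e N}{r}}^{2Nr},
\]
and taking base-$2$ logarithms yields $2Nr \log_2\inparen{4eN/r} = O(Nr \log N)$, i.e., a count of $2^{O(Nr \log N)}$, as claimed.

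The main technical point --- and the only place the hypothesis $r \le N/2$ enters --- is matching the regime of Warren's theorem, namely ensuring that the number of polynomials $N^2$ dominates the number of variables $2Nr$. I would also emphasize that it is legitimate to restrict attention to full sign vectors in $\sbit^{N^2}$ (rather than allowing zero entries): because $S$ is a genuine $\pm 1$ sign matrix, any valid realization has all $P_{ij} \neq 0$, so it suffices to count the full-dimensional cells of the arrangement, which is precisely what Warren's bound controls. The remaining steps are routine substitutions.
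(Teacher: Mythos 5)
Your proof is correct, and it is essentially the same argument as the cited source: the paper proves this lemma only by reference to Lemma 22 of \citet{alon2016sign}, whose proof is exactly your reduction --- factor a rank-$r$ realization as $M = UV^\top$, view the $N^2$ entries as degree-$2$ polynomials in $2Nr$ variables, and apply Warren's sign-pattern bound, with $r \le N/2$ ensuring the regime $m \ge \ell$. Your handling of the side points (that valid realizations have no zero entries, so counting full sign vectors suffices) is also sound.
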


\noindent In order to prove the result, we use a probabilistic argument to show that there must exist many distinct $N\times N$ matrices with $\Edim_1 = 4$.

\begin{lemma}\label{lem:eluder-matrices-count}
The number of $N\times N$ sign matrices with $\Edim_1 \le 4$ is at least $2^{\Omega(N^{10/9})}$. 
\end{lemma}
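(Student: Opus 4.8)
The plan is to reduce the bound $\Edim_1(S)\le 4$ to a clean forbidden-subgraph condition and then to count the matrices satisfying it via a single extremal graph. Recall that, with respect to the all-ones reference function, a witnessing eluder sequence of length $k$ is an ordered choice of distinct rows $i_1,\dots,i_k$ and distinct columns $j_1,\dots,j_k$ with $S_{i_r,j_r}=-1$ on the diagonal and $S_{i_a,j_b}=+1$ whenever $a<b$. Associate to $S$ its \emph{positive graph} $G^{+}(S)$: the bipartite graph on rows $\cup$ columns whose edges are the pairs $(i,j)$ with $S_{ij}=+1$. The first and only substantive step is the observation that a length-$4$ pattern already forces a $4$-cycle in $G^{+}(S)$. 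Indeed, from distinct rows $i_1,i_2,i_3,i_4$ and distinct columns $j_1,\dots,j_4$ the four positions $(i_1,j_3),(i_1,j_4),(i_2,j_3),(i_2,j_4)$ all satisfy (row-index)$<$(column-index), hence carry $+1$; since $i_1\ne i_2$ and $j_3\ne j_4$, these four edges form a copy of $C_4$ in $G^{+}(S)$. Consequently, if $G^{+}(S)$ is $C_4$-free then $\Edim_1(S)\le 3\le 4$.

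Given this reduction I would not argue probabilistically at all, but instead fix one extremal $C_4$-free host graph and harvest its subgraphs. Let $G^{\star}$ be a standard $C_4$-free bipartite graph on $N+N$ vertices with $m=\Theta(N^{3/2})$ edges (for instance the point--line incidence graph of a projective plane, which is $C_4$-free because two points determine a unique line, padded to size $N$). Every spanning subgraph of $G^{\star}$, i.e.\ every $E\subseteq E(G^{\star})$, is again $C_4$-free, so the sign matrix $S^{E}$ with $S^{E}_{ij}=+1$ iff $(i,j)\in E$ (and $-1$ otherwise) satisfies $\Edim_1(S^{E})\le 4$ by the first step. Distinct edge-subsets differ on some edge of $G^{\star}$ and therefore yield distinct matrices, so the number of $N\times N$ sign matrices with $\Edim_1\le 4$ is at least $2^{|E(G^{\star})|}=2^{\Omega(N^{3/2})}$. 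Since $N^{3/2}\ge N^{10/9}$ for all large $N$, this proves the stated bound $2^{\Omega(N^{10/9})}$ (in fact with a better exponent, which would correspondingly strengthen the $\signrank$ lower bound in \Cref{thm:separation}).

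The conceptual content lives entirely in the structural first step; once $C_4$-freeness is identified as sufficient, the count is immediate from a known extremal construction together with the trivial fact that subgraphs of $C_4$-free graphs are $C_4$-free. I expect the main pitfall to be the tempting direct route of drawing $S$ from an i.i.d.\ sparse ensemble: at any $-1$-density for which the ensemble carries $\omega(N\log N)$ bits of entropy, the expected number of length-$5$ patterns is already $\gg 1$, so a typical random matrix has large $\Edim_1$ and a naive first-moment or deletion argument does not obviously deliver a \emph{count} of good matrices. Hard-coding the pattern-freeness into a fixed host graph and counting its subgraphs sidesteps this obstruction cleanly. Finally, if one insists on $\Edim_1=4$ exactly (rather than merely $\le 4$) to match the phrasing of \Cref{thm:separation}, I would reserve a disjoint $4\times 4$ block carrying a fixed length-$4$ eluder gadget, which affects the count only by a constant factor in the exponent.
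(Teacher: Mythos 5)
Your proof is correct, but it takes a genuinely different route from the paper's. Both arguments share the same final counting mechanism---exhibit one matrix with many $+1$ entries such that flipping any subset of them to $-1$ preserves $\Edim_1 \le 4$, which yields $2^{\#\{+1\text{'s}\}}$ good matrices---but they obtain that dense, downward-closed witness differently. The paper forbids the pattern it calls $E_5$-light (a $5\times 5$ submatrix that is $+1$ everywhere above the diagonal, exactly what a length-$5$ eluder sequence requires) and produces a dense matrix avoiding it by the probabilistic deletion method: a random matrix with $+1$-density $p = 1/(2N^{8/9})$ has, in expectation, $N^2p - N^{10}p^{10} = \Omega(N^{10/9})$ more $+1$'s than $E_5$-light submatrices, and deleting one $+1$ per bad submatrix leaves $\Omega(N^{10/9})$ of them. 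You instead observe that already a length-$4$ eluder sequence forces a $2\times 2$ all-$+1$ submatrix (rows $i_1,i_2$ against columns $j_3,j_4$), i.e.\ a $C_4$ in the bipartite positive graph, so $C_4$-freeness suffices; you then take an explicit extremal $C_4$-free graph (projective-plane incidences) with $\Theta(N^{3/2})$ edges. Your count $2^{\Omega(N^{3/2})}$ is strictly stronger than the paper's $2^{\Omega(N^{10/9})}$, and plugged into \Cref{lem:num-sign-matrices} it would improve \Cref{thm:separation} to $\signrank(\calF_N) \ge \Omega(N^{1/2}/\log N)$; what the paper's probabilistic approach buys instead is flexibility, since it applies to forbidden patterns (such as the ordered $E_5$-light pattern, or the patterns in the open problems of \Cref{apdx:proof-separation-discussion}) for which no explicit extremal construction is known. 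Two small caveats. First, your ``pitfall'' remark dismisses essentially the route the paper takes: a first-moment argument \emph{does} deliver a count, precisely via the deletion step plus downward closure; it is just quantitatively lossier than your extremal construction. Second, your closing claim that a disjoint $4\times 4$ gadget restores $\Edim_1 = 4$ exactly (to match the phrasing of \Cref{thm:separation}) is true but not automatic: one must rule out eluder sequences that mix gadget rows/columns with the $C_4$-free block, and the length-$5$ mixed sequences that survive the block structure are exactly those containing a $2\times 2$ all-$+1$ submatrix inside the $C_4$-free part, so the verification again leans on $C_4$-freeness. Neither caveat affects the lemma itself, which only requires $\Edim_1 \le 4$.
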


\noindent The above lemmas imply that there must exist at least one sign matrix with $\Edim_1(S) \le 4$ and $\signrank(S) \ge \Omega(N^{1/9}/\log N)$. This proves \Cref{thm:separation}, assuming \Cref{lem:eluder-matrices-count} which we now prove.

\paragraph{Proof of \Cref{lem:eluder-matrices-count}.}
Define the set of $E_5$-light matrices as the set of $5\times 5$ sign matrices which are always $+1$ above the diagonal. We claim there exists an $N\times N$ sign matrix $C$ which (1) contains no $E_5$-light matrices and (2) has at least $\Omega(N^{10/9})$ entries that are $+1$. Such a matrix $C$ has $\Edim_1(C) \le 4$; moreover changing any $+1$ to $-1$ in $C$ will not increase $\Edim_1$.

Let $B$ be a random $N\times N$ sign matrix with each entry $+1$ with probability $p \coloneqq 1/(2N^{8/9})$. Define the random variable
\begin{align*}
    X \coloneqq \text{(\# $+1$'s in $C$)} - \text{(\# $E_5$-light matrices in $C$)}.
\end{align*}
Then $\Ex[X] \ge N^2 p - N^{10} p^{10} \ge \Omega(N^{10/9})$. Take some matrix with value of $X$ at least the expectation and change a $+1$ to a $-1$ in every $E_5$-light matrix to get $C$. Since this does not affect the value of $X$, we know that the resulting matrix $B$ has $\Omega(N^{10/9})$ entries that are $+1$.

Since changing any $+1$ to $-1$ in $C$ will not increase $\Edim_1$, we see that there are at least $2^{\Omega(N^{10/9})}$ distinct sign matrices with $\Edim \le 4$.\qed

\subsection{A stronger separation?}\label{apdx:proof-separation-discussion}
Our result does not fully show the separation between eluder dimension and $\signrank$. In the random construction used in \Cref{lem:eluder-matrices-count}, it could be the case that the matrix $C$ we pick satisfies $\Edim_1(C) \le 4$, but there could be some other $f^\star$ (column of $C$) such that $\Edim_{f^\star}(C) = \omega(1)$. 

We conjecture that the stronger separation result also holds:

\begin{conjecture}
There exists absolute constants $k \in \bbN$ and $c > 0$ such that the following hold. For every $N > 0$, there exists a function class $\calF_N \subseteq ([N] \to \{1,-1\})$ such that $\Edim(\calF_N) \le k$ and $\signrank(\calF_N) \ge \Omega(N^{c}/\log N)$.
\end{conjecture}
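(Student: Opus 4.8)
The plan is to reduce the statement to a pattern-avoidance counting problem and then attack that problem by a random construction with alteration, in the spirit of \Cref{lem:eluder-matrices-count}. First I would invoke \Cref{thm:equivalence}: since $\Edim_{f^\star}(\calF)\le 4^{\max\{\Sdim_{f^\star}(\calF),\Tdim_{f^\star}(\calF)\}}$ for every $f^\star$, taking the supremum over references gives $\Edim(\calF)\le 4^{\max\{\Sdim(\calF),\Tdim(\calF)\}}$. Hence it suffices to construct, for each $N$, a family $\calF_N\subseteq([N]\to\{1,-1\})$ for which $\Sdim(\calF_N)\le k_0$ and $\Tdim(\calF_N)\le k_0$ for an absolute constant $k_0$; this yields $\Edim(\calF_N)\le 4^{k_0}=:k$. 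Combined with \Cref{lem:num-sign-matrices}, the whole statement then follows if I can exhibit $2^{\Omega(N^{1+\delta})}$ distinct $N\times N$ sign matrices all of whose reference-relative disagreement matrices avoid both the size-$(k_0{+}1)$ star (identity) pattern and the size-$(k_0{+}1)$ threshold (triangular) pattern, for some constant $\delta>0$: a count of $2^{\Omega(N^{1+\delta})}$ forces one of them to have $\signrank\ge\Omega(N^{\delta}/\log N)$.

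The construction I would try mirrors \Cref{lem:eluder-matrices-count}: start from a random sign matrix $B$ with each entry equal to $+1$ with a small probability $p$; designate a constant-size \emph{bad configuration} to be any choice of $k_0+1$ rows, $k_0+1$ ``function'' columns, and one ``reference'' column whose induced submatrix realizes a star- or threshold-witness of length $k_0+1$ relative to that reference; and tune $p$ and $k_0$ so that the expectation of (number of flippable good entries) minus (number of bad configurations) is $\Omega(N^{1+\delta})$. Destroying each bad configuration by a single entry flip and then counting the flip-robust sub-family would, if everything went through as in the single-reference case, produce the required doubly-exponential count.

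The hard part is that the alteration step is no longer monotone once the reference is allowed to vary. Flipping a $+1$ to $-1$ at a coordinate lowers a disagreement for every reference that agrees there but \emph{raises} one for every reference that disagrees there, so destroying a bad configuration for one reference can create one for another; the clean ``flip any subset of the surviving $+1$'s'' argument of \Cref{lem:eluder-matrices-count}, which relied on flips only ever destroying the all-ones patterns, breaks down. Worse, there is a structural tension peculiar to the star number: $\Sdim(\calF_N)\le k_0$ over all references forces the disagreement (equivalently, the $+1$) pattern, relative to the all-$(-1)$ reference and analogously to every other, to contain no large induced identity submatrix — i.e. the $+1$ bipartite graph must have bounded induced matching number — which pushes $B$ toward low-complexity, low-rank structure and directly opposes a large $\signrank$ lower bound. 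I expect this tension to be the true crux and the reason the statement is only conjectured.

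To get past the first difficulty I would replace the monotone sub-family count by a direct \emph{survival} count: rather than exhibiting one good matrix and flipping its $+1$'s freely, lower-bound the number of matrices that simultaneously avoid \emph{all} bad configurations, e.g. through a Lovász Local Lemma or container/entropy argument that never needs a monotone destruction map. To get past the second, structural difficulty I would use a two-scale random model in which the $+1$'s are laid down according to a random bipartite pattern of bounded induced matching number — so that $\Sdim\le k_0$ holds by design for \emph{every} reference — while still carrying $2^{\Omega(N^{1+\delta})}$ bits of entropy, and I would control $\Tdim$ (equivalently, a constant Littlestone dimension) by the same alteration that destroys the triangular patterns. The remaining, genuinely open step is to prove that such a bounded-induced-matching family can retain enough entropy to beat the sign-rank count of \Cref{lem:num-sign-matrices}; this is exactly a strengthening of the pattern-avoidance counting bound of \citet{alon2016sign}, and is where I would expect the real work — and the real risk — to lie.
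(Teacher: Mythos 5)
The statement you are trying to prove is, in the paper, explicitly left as a \emph{conjecture}: the paper has no proof of it, and its discussion in \Cref{apdx:proof-separation-discussion} matches your analysis almost point for point. Your reduction is sound and is the same one the paper sketches: by \Cref{thm:equivalence} (taking the supremum over $f^\star$) it suffices to build $\calF_N$ with $\max\{\Sdim(\calF_N),\Tdim(\calF_N)\}\le k_0$ and large $\signrank$, and by \Cref{lem:num-sign-matrices} this would follow from counting $2^{\Omega(N^{1+\delta})}$ sign matrices avoiding all the relevant constant-size witness patterns. You also correctly identify the two obstructions the paper itself names: the alteration step of \Cref{lem:eluder-matrices-count} is monotone only because, for the fixed reference $f^\star=1$, flipping $+1$ to $-1$ can only destroy witnesses, and this breaks once the reference varies; and there is no star-number analogue of \Cref{lem:thres-ub}. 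But your write-up is a research plan, not a proof: the decisive step --- showing that a family avoiding all star and threshold witnesses simultaneously can carry $2^{\Omega(N^{1+\delta})}$ bits of entropy --- is exactly what you (and the paper) leave open, and your proposed repairs are speculative. In particular, the Lov\'asz Local Lemma gives existence rather than the count $2^{\omega(rN\log N)}$ with $r=N^{c}$ that the sign-rank argument needs, and your two-scale model faces the tension you yourself flag: forcing the $+1$ pattern to have bounded induced matching number relative to every reference pushes the matrix toward low-complexity structure, which works \emph{against} a sign-rank lower bound.

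One concrete simplification you miss, which sharpens where the real difficulty sits: you do not need $\Sdim_{f^\star}$ and $\Tdim_{f^\star}$ bounded ``for every reference by design.'' By \Cref{lem:dim-ub}, bounding $\Sdim$ (resp.\ $\Edim$, $\Tdim$) with respect to just the two \emph{constant} references suffices up to a factor of $2$, and by \Cref{lem:thres-ub} the threshold dimension reduces further to the single all-ones reference --- which is why the paper can already get the threshold-only separation as a corollary of \Cref{thm:separation}. So the conjecture reduces to a clean finite pattern-avoidance counting question: do there exist $2^{\Omega(N^{1+\eps})}$ sign matrices containing no $(k_0{+}1)\times(k_0{+}1)$ submatrix equal to the identity pattern, its negation, or the triangular threshold pattern? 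Your plan, stated in terms of arbitrary references, obscures this reduction and thereby overcomplicates the (already open) combinatorial core. In short: your diagnosis of why the statement is hard agrees with the paper's, but no proof is given on either side, and the gap is precisely the counting step.
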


In light of \Cref{thm:equivalence}, it suffices to show that there exists some function class $\calF_N$ such that $\max\{\Sdim(\calF_N), \Tdim(\calF_N) \} \le k$ and $\signrank(\calF_N) \ge \Omega(N^{c}/\log N)$.

Consider the easier problem of showing the separation for threshold dimension. Here there is no difficulty. The key step is to apply \Cref{lem:thres-ub} to reduce the problem to showing the result with respect to a \emph{single} $f^\star$. It follows as a corollary of \Cref{thm:separation} that there exists a function class $\calF_N$ such that $\Tdim(\calF_N) \le 8$ and $\signrank(\calF_N) \ge \Omega(N^{1/9}/\log N)$. (Using a more direct analysis, it is possible to improve the constants 8 and $1/9$.)

Showing the separation for star number (and eluder dimension) is a different story. We do not have a direct analogue of \Cref{lem:thres-ub} for star number and eluder dimension. The weaker \Cref{lem:dim-ub} allows us to reduce two showing the separation for two functions; but it is unclear how to leverage this reduction to extend the construction in the proof of  \Cref{lem:eluder-matrices-count}.

\gene{Open problem: (For star number separation) Do there exist at least $2^{\Omega(N^{1+\eps})}$ sign matrices that do not contain $I_k$ or $-I_k$ as submatrices? (For eluder separation) Do there exist at least $2^{\Omega(N^{1+\eps})}$ sign matrices that do not contain $I_k$ or $-I_k$ or $T_k$ as submatrices?}

\end{document}